\documentclass[10pt]{article}
\usepackage[accepted]{tmlr}

\usepackage{amsmath,amsfonts,bm}









\def\eqref#1{equation~\ref{#1}}









\def\1{\bm{1}}










\DeclareMathAlphabet{\mathsfit}{\encodingdefault}{\sfdefault}{m}{sl}
\SetMathAlphabet{\mathsfit}{bold}{\encodingdefault}{\sfdefault}{bx}{n}













\usepackage{natbib}
\usepackage{booktabs}       
\usepackage{amsmath}
\usepackage{amssymb}
\usepackage{mathtools}
\usepackage{amsthm}
\usepackage{bm}
\usepackage{nicefrac}
\mathtoolsset{showonlyrefs}
\usepackage{physics2}
\usepackage{fixdif, derivative}
\usephysicsmodule{ab}

\theoremstyle{plain}
\newtheorem{theorem}{Theorem}[section]
\newtheorem{proposition}[theorem]{Proposition}
\newtheorem{lemma}[theorem]{Lemma}

\theoremstyle{definition}

\theoremstyle{remark}

\newcommand{\B}[1]{{\bm #1}}
\newcommand{\mac}[1]{{\mathcal #1}}
\newcommand{\mab}[1]{{\mathbb #1}}
\usepackage{stackengine}
\def\delequal{\mathrel{\ensurestackMath{\stackon[1pt]{=}{\scriptstyle\Delta}}}}
\newcommand{\accentcolor}[1]{\textcolor{blue}{#1}}
\usepackage{hyperref}
\usepackage{url}

\title{Continuous Parallel Relaxation for Finding Diverse Solutions in Combinatorial Optimization Problems}

\author{\name Yuma Ichikawa 
        \email ichikawa.yuma@fujitsu.com \\
        \addr Fujitsu Limited, Kanagawa, Japan
        \AND
        \name Hiroaki Iwashita 
        \email iwashita.hiroak@fujitsu.com\\
        \addr Fujitsu Limited, Kanagawa, Japan}

\begin{document}

\maketitle

\begin{abstract}
    Finding the optimal solution is often the primary goal in combinatorial optimization (CO). 
    However, real-world applications frequently require diverse solutions rather than a single optimum, particularly in two key scenarios.
    The first scenario occurs in real-world applications where strictly enforcing every constraint is neither necessary nor desirable.
    Allowing minor constraint violations can often lead to more cost-effective solutions.
    This is typically achieved by incorporating the constraints as penalty terms in the objective function, which requires careful tuning of penalty parameters.
    The second scenario involves cases where CO formulations tend to oversimplify complex real-world factors, such as domain knowledge, implicit trade-offs, or ethical considerations.
    To address these challenges, generating (i) \textit{penalty-diversified} solutions by varying penalty intensities and (ii) \textit{variation-diversified solutions} with distinct structural characteristics provides valuable insights, enabling practitioners to post-select the most suitable solution for their specific needs.
    However, efficiently discovering these diverse solutions is more challenging than finding a single optimal one.
    This study introduces \underline{\textbf{C}}ontinual \underline{\textbf{P}}arallel \underline{\textbf{R}}elaxation \underline{\textbf{A}}nnealing\footnote{The code is available at \accentcolor{\url{https://github.com/Yuma-Ichikawa/CPRA4CO}}.} (\textbf{CPRA}), a computationally efficient framework for unsupervised-learning (UL)-based CO solvers that generates diverse solutions within a single training run.
    CPRA leverages representation learning and parallelization to automatically discover shared representations, substantially accelerating the search for these diverse solutions.
    Numerical experiments demonstrate that CPRA outperforms existing UL-based solvers in generating these diverse solutions while significantly reducing computational costs.
\end{abstract}

\section{Introduction}\label{sec:introduction}
Constrained combinatorial optimization (CO) problems aim to find an optimal solution within a feasible space, a fundamental problem in various scientific and engineering applications \citep{papadimitriou1998combinatorial, korte2011combinatorial}.
However, real-world applications often require diverse solutions rather than a single optimal solution, particularly in two key scenarios.

The first scenario arises in real-world applications where strictly enforcing all constraints may neither be necessary nor desirable. 
In such cases, solutions that slightly violate certain constraints may be preferred if they result in significantly better cost performance.
For example, soft deadlines in scheduling or minor rule relaxations in logistics may lead to more cost-effective and practically feasible solutions.
A common approach to handling such flexibility is incorporating the constraints into the cost function as penalty terms, thereby transforming the original constrained optimization problem into an unconstrained one.
While this formulation broadens the feasible solution space, it introduces the additional challenge of tuning penalty strengths to balance constraint satisfaction with cost minimization.
To address this, a promising strategy is to explore a set of solutions generated under varying penalty configurations, referred to as (i) \textit{penalty-diversified solutions}.
Such a solution set enables downstream decision-makers to select options based on contextual priorities, thereby enhancing both the robustness and practical utility of the solver's output, as illustrated in Fig.~\ref{fig:multi-shot-image}.
\begin{figure}[tb]
    \centering
    \includegraphics[width=0.95\linewidth]{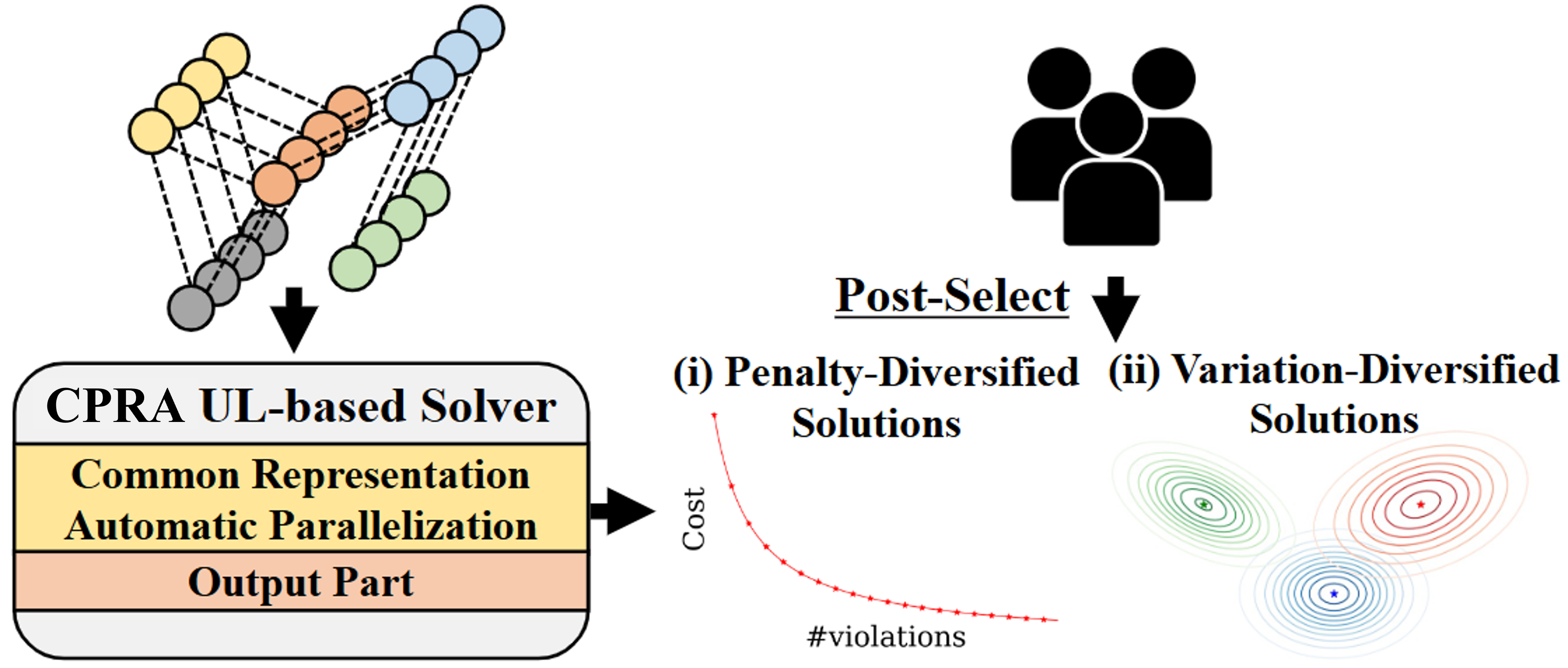}
    \caption{Overview of CPRA UL-based solver and post-processing for diverse solutions.}
    \label{fig:multi-shot-image}
\end{figure}

A second common scenario occurs in real-world settings where the mathematical formulation of a CO problem provides only a coarse approximation of the true decision-making environment. 
In practice, critical factors such as domain expertise, implicit trade-offs, and soft ethical or operational constraints are often too complex, dynamic, or subjective to be explicitly modeled.
As a result, the optimal solution to a simplified formulation may not be practical or even desirable in real-world deployment.
To bridge this gap, generating a diverse set of high-quality solutions that differ in structure or semantics is valuable, referred to as (ii) \textit{variation-diversified solutions}.
These solutions allow users to post-select the most suitable option, tailored to real-world complexities and adaptable to qualitative, contextual, or evolving criteria not captured by the original CO formulation, as illustrated in Fig.~\ref{fig:multi-shot-image}.
Moreover, efficiently generating such diverse candidates is particularly important in high-stakes or large-scale domains, such as healthcare logistics, intelligent tutoring systems, and automated content generation, e.g., game level design \citep{zhang2020video}, where distinct feasible solutions may reflect differing operational priorities or ethical considerations.
However, discovering such diverse solutions efficiently is considerably more challenging than identifying a single optimal solution.

To address these challenges, we propose the \underline{\textbf{C}}ontinuous \underline{\textbf{P}}arallel \underline{\textbf{R}}elaxation \underline{\textbf{A}}nnealing (CPRA) method, designed for unsupervised learning (UL)-based CO solvers \citep{schuetz2022combinatorial, karalias2020erdos}. 
CPRA enables the efficient discovery of both (i) penalty-diversified and (ii) variation-diversified solutions in a single training run, requiring only a minimal architectural change: extending the output layer from generating a single solution to producing multiple diverse solutions, while keeping the intermediate network structure unchanged.
This modification offers substantial practical benefits by leveraging shared representations across problem instances.
The resulting bottleneck structure promotes generalization and scalability, eliminating the need to train $S$ separate models iteratively.
Furthermore, CPRA incorporates a controllable diversity mechanism for (ii) generating variation-diversified solutions, formulated as a continuous relaxation of the max-sum Hamming distance.
Unlike the max-sum Hamming distance defined in binary space, which incurs a quadratic computational cost concerning the number of solutions, our approach reduces this to linear complexity, significantly improving the efficiency of both forward and backward passes.

Numerical experiments demonstrate the effectiveness of our approach on several benchmark CO problems. CPRA enables the generation of diverse solutions while maintaining comparable model size and runtime to UL-based solvers that produce only a single solution.
Additionally, CPRA improves search performance by leveraging shared representations across problem instances, resulting in higher-quality solutions than those produced by existing UL-based solvers and greedy algorithms.

\section{Related work}\label{sec:related-work}
Although diverse solutions can be generated through various paradigms beyond UL-based CO solvers, existing methods often differ significantly in their learning frameworks, algorithmic foundations, and architectural complexity \citep{grinsztajn2023winner, choo2022simulation, li2023learning, kwon2020pomo, kim2021learning, xin2021multi, bunel2018leveraging, hottung2021learning, chalumeau2023combinatorial, hottung2024polynet}. 
In many of these approaches, diversity is introduced as an auxiliary mechanism to support the discovery of a single optimal solution, rather than being pursued as a goal in its own right.
Furthermore, these methods typically do not aim to generate (i) \textit{penalty-diversified solutions}.
In contrast, our work explicitly targets the efficient discovery of both (i) \textit{penalty-diversified} and (ii) \textit{variation-diversified} solutions, achieved through minimal modifications to existing UL-based combinatorial optimization solvers such as PI-GNN \citep{schuetz2022combinatorial} and CRA-PI-GNN \citep{ichikawa2023controlling}.
Our primary goal is to demonstrate how a simple architectural extension can unlock broader solution diversity while maintaining practical runtime efficiency. Therefore, an exhaustive empirical comparison with all alternative paradigms falls outside the scope of this work.
Penalty-diversified solutions can be understood as the result of treating cost and constraint violation as two distinct objectives and combining them through a weighted sum. 
This perspective naturally connects our framework with Pareto front learning methods in multi-objective optimization, notably Pareto hyper networks \citep{navon2021learning}, preference-conditioned Pareto set learners \citep{lin2022pareto}, and the survey by \citet{chen2025gradient}. Although these studies provide techniques to approximate the entire set of non-dominated trade-offs, a comprehensive empirical comparison with our approach remains an important avenue for future work.

\paragraph{Diverse Solution Acquisition Without Neural CO Solvers.}
For penalty-diversified solutions, a straightforward baseline is to solve the same problem multiple times with different penalty coefficients, either in parallel, given sufficient CPU cores, or sequentially. 
However, both approaches are computationally intensive and scale poorly to large instance batches or real-time deployment scenarios. 
Our method overcomes this limitation by leveraging GPU parallelism to simultaneously generate solutions across multiple penalty regimes, achieving diversity with a runtime comparable to that of solving a single instance.
Obtaining variation-diversified solutions is even more challenging. A common approach is to post-select solutions that maximize some diversity metric, such as the Hamming distance \citep{fernau2019algorithmic}. 
This line of research spans graph algorithms \citep{baste2019fpt, baste2022diversity, hanaka2021finding}, constraint programming \citep{hebrard2005finding, petit2015finding}, and mathematical programming \citep{danna2007generating, danna2009select, petit2019enriching}, and is often categorized into two types: (1) offline approaches that generate and filter a large pool of solutions, and (2) online approaches that incrementally construct diverse solutions.
However, both methods face significant drawbacks. Offline approaches often produce redundant solutions and incur high memory and runtime costs. 
Online approaches, while more targeted, are inherently sequential, difficult to parallelize, and susceptible to local optima. 
Neither approach naturally scales to GPU-based parallel environments.
In contrast, our method introduces a GPU-accelerated strategy for producing diverse high-quality solutions in a single forward pass. This capability is especially valuable in real-world, large-scale, or high-stakes settings such as scheduling, planning, or recommendation, where presenting multiple viable options is more useful than committing to a single solution and where computational efficiency is critical.

\section{Notation.}
We use the shorthand expression $[N]=\{1, 2, \ldots, N\}$, $N \in \mab{N}$. 
$I_{N} \in \mab{R}^{N \times N}$ represents an identity matrix of size $N \times N$. 
Here, $\B{1}_{N}$ and $\B{0}_{N}$ represent the all-ones vector and all-zeros vector in $\mab{R}^{N}$, respectively. 
$G(V, E)$ represents an undirected graph, where $V$ is the set of nodes and $E \subseteq V \times V$ is the set of edges. 
For a graph $G(V, E)$, $A$ denote the adjacency matrix with $A_{ij}=0$ if an edge $(i, j)$ does not exist and $A_{ij} > 0$ if an edge connects $i$ and $j$.
For a sequence $\{a_{k} \mid a_{k} \in \mab{R}\}_{k=1}^{K}$, the empirical variance is defined as $\mab{V}\mab{A}\mab{R}[\{a_{k}\}_{k=1}^{K}] = \sum_{k=1}^{K}(a_{k}-\sum_{k^{\prime}=1}^{K} \nicefrac{a_{k^{\prime}}}{K})^{2}/K$, and the empirical standard deviation is given by $\mab{S}\mab{T}\mab{D}[\{a_{k}\}_{k=1}^{K}] = (\mab{V}\mab{A}\mab{R}[\{a_{k}\}_{k=1}^{K}])^{\nicefrac{1}{2}}$. 
For binary vectors $\B{a}, \B{b} \in \{0, 1\}^{N}$, we define the Hamming distance as $d_{H}(\B{a}, \B{b})=\sum_{i=1}^{N} \B{1}[a_{i} \neq b_{i}]$ where $\B{1}[\cdot]$ denotes the indicator function.

\section{Background}\label{sec:background}
\subsection{Combinatorial Optimization (CO)} 
Constrained CO problems are defined as follows:
\begin{equation}
    \min_{\B{x} \in \mac{X}(C)} f(\B{x}; C),~~
    \mac{X}(C) = 
    \left\{\B{x} \in \{0, 1\}^{N}
    ~\left|~~
    \begin{matrix}
        g_{i}(\B{x};C) \le 0, &
        \forall i \in [I], \\ 
        h_{j}(\B{x};C) = 0 & \forall j \in [J] 
    \end{matrix} 
    \right.
    \right\},~~I, J \in \mab{N},  
\end{equation}
where $C \in \mac{C}$ denotes instance parameters, such as a graph $G=(V, E)$, where $\mac{C}$ denotes the set of all possible instances. 
The binary vector $\B{x} = (x_{i})_{1\le i \le N} \in \{0, 1\}^{N}$ is the decision variable to be optimized, and $\mac{X}(C)$ denotes the feasible solution space. $f: \mac{X} \times \mac{C} \to \mab{R}$ denotes the cost function and, for all $i \in [I]$ and $j \in [J]$, $g_{i}: \mac{X} \times \mac{C} \to \mab{R}$ and $h_{j}: \mac{X} \times \mac{C} \to \mab{R}$ denote constraints. 
In practical scenarios, constrained CO problems are often converted into unconstrained CO problems using the penalty method:
\begin{equation}
    \min_{\B{x} \in \{0, 1\}^{N}} l(\B{x}; C, \B{\lambda}),~~l(\B{x}; C, \B{\lambda}) \delequal f(\B{x}; C) + \sum_{i=1}^{I+J} \lambda_{i} v_{i}(\B{x}; C).
\end{equation}
where, for all $i \in [I+J]$, $v_{i}: \{0, 1\}^{N} \times \mac{C} \to \mab{R}$ is the penalty term that increases when constraints are violated.
For example, the penalty term is defined as follows:
\begin{align}
       &\forall i \in [I],~v_{i}(\B{x};C) = \max \left(0, g_{i}(\B{x};C)\right),~~\forall j \in [J],~v_{j}(\B{x};C) = \left(h_{j}(\B{x};C) \right)^{2}
\end{align}
and $\B{\lambda} = (\lambda_{i})_{1\le i \le I+J} \in \mab{R}_{+}^{I+J}$ represents the penalty parameters that balance satisfying the constraints and optimizing the cost function. 
Tuning these penalty parameters 
$\B{\lambda}$ to obtain the desired solutions is a challenging and time-consuming task. 
This process often requires solving the problem multiple times while iteratively adjusting the penalty parameters $\B{\lambda}$ until an acceptable solution is obtained.

\subsection{Continuous Relaxation and UL-based Solvers} 
The continuous relaxation strategy reformulate a CO problem 
by converting discrete variables into continuous ones as follows:
\begin{equation*}
    \min_{\B{p} \in [0,1]^{N}} \hat{l}(\B{p}; C, \B{\lambda}),~~\hat{l}(\B{p}; C, \B{\lambda}) \delequal \hat{f}(\B{p}; C) + \sum_{i=1}^{I+J} \lambda_{i} \hat{v}_{i}(\B{p}; C), 
\end{equation*}
where $\B{p} = (p_{i})_{1\le i \le N} \in [0, 1]^{N}$ denotes relaxed continuous variables, i.e., each binary variable $x_{i} \in \{0, 1\}$ is relaxed to a continuous one $p_{i} \in [0, 1]$, and $\hat{f}: [0, 1]^{N} \times \mac{C} \to \mab{R}$ is the relaxation of $f$, satisfying $\hat{f}(\B{x}; C) = f(\B{x}; C)$ for any $\B{x} \in \{0, 1\}^{N}$. 
The relation between the constraint $v_{i}$ and its relaxation $\hat{v}_{i}$ is similar for $i \in [I+J]$, i.e., $\forall i \in [I+J],~\hat{v}_{i}(\B{x}; C) = v_{i}(\B{x}; C)$ for any $\B{x} \in \{0, 1\}^{N}$.

UL-based solvers employ this continuous relaxation strategy for training neural networks (NNs)
\citep{wang2022unsupervised, schuetz2022combinatorial, karalias2020erdos, ichikawa2023controlling}.
The relaxed continuous variables are parameterized by $\B{\theta}$ as $\B{p}_{\theta} \in [0, 1]^{N}$ and optimized by directly minimizing the following loss function:
\begin{equation}
    \label{eq:gnn-loss}
    \hat{l}(\B{\theta}; C, \B{\lambda}) \delequal \hat{f}(\B{p}_{\theta}(C); C) + \sum_{i=1}^{I+J} \lambda_{i} \hat{v}_{i}(\B{p}_{\theta}(C); C).
\end{equation}
After training, the relaxed solution $\B{p}_{\B{\theta}}$ is converted into discrete variables by rounding $\B{p}_{\B{\theta}}$ using a threshold \citep{schuetz2022combinatorial} or by applying a greedy method \citep{wang2022unsupervised}.
Two types of schemes have been developed based on this framework.

\paragraph{(I) Learning Generalized Heuristics from History/Data.}
One approach, proposed by \citet{karalias2020erdos}, seeks to automatically learn commonly effective heuristics from historical dataset instances $\mac{D}=\{C_{\mu}\}_{\mu=1}^{P}$ and then apply these learned heuristics to a new instance $C^{\ast}$ via inference.
Specifically, given a set of training instances, independently and identically distributed from a distribution $P(C)$, the objective is to minimize the average loss function $\min_{\B{\theta}} \sum_{\mu=1}^{P} l(\B{\theta}; C_{\mu}, \B{\lambda})$.
However, this method does not guarantee high-quality performance for a test instance $C^{\ast}$.
Even if the training instances $\mac{D}$ are abundant and the test instance $C$ is drawn from the same distribution $P(C)$, achieving a low average performance $\mab{E}_{C \sim P(C)}[\hat{l}(\B{\theta}; C)]$ does not necessarily guarantee a low $l(\B{\theta}; C)$ for a specific $C$. To address this issue, \citet{wang2023unsupervised} introduced a meta-learning approach where NNs aim to provide good initialization for new instances.

\paragraph{(II) Learning Effective Heuristics on Specific Single Instance.}
Another approach, referred to as the physics-inspired graph neural networks (PI-GNN) solver \citep{schuetz2022combinatorial, schuetz2022graph}, automatically learns instance-specific heuristics for a given single instance using the instance parameter $C$ by directly employing Eq. \eqref{eq:gnn-loss}. This approach has been applied to CO problems on graphs, i.e., $C=G(V, E)$, using graph neural networks (GNN) to model the relaxed variables $p_{\theta}(G)$.
An $L$-layered GNN is trained to directly minimize $\hat{l}(\B{\theta}; C, \B{\lambda})$ in Eq.~\eqref{eq:gnn-loss}, taking as input a graph $G$ along with node embedding vectors and producing the relaxed solution $\B{p}_{\B{\theta}}(G) \in [0, 1]^{N}$. 
A detailed description of GNNs can be found in Appendix \ref{subsec:graph-neural-network}.
Note that this setting is applicable even when the training dataset $\mac{D}$ is difficult to obtain.
However, learning to minimize Eq. \eqref{eq:gnn-loss} for a single instance can be time-consuming than the inference process. Nonetheless, for large-scale problems, it has demonstrated superiority over other solvers in terms of both time and solution performance \citep{schuetz2022combinatorial, schuetz2022graph, ichikawa2023controlling}.

UL-based solvers face two practical issues: (I) ``optimization issues'', where they tend to get stuck in local optima, and (II) ``rounding issues'', which arise when an artificial post-learning rounding process is needed to map solutions from the continuous space back to the original discrete space, undermining the robustness of the results. 
To address the first issue, \citet{ichikawa2023controlling, ichikawa2025optimization} proposed annealing schemes to escape local optima by introducing the following entropy term $s(\B{\theta}; C)$:
\begin{align}
    \label{eq:penalty-cost-gnn}
    \hat{r}(\B{\theta}; C, \B{\lambda}, \gamma) = \hat{l}(p_{\B{\theta}}(C); C, \B{\lambda}) +  \gamma s(p_{\B{\theta}}(C)),~s(p_{\B{\theta}}(C)) = \sum_{i=1}^{N} \left\{(2 p_{\B{\theta}, i}(C)-1)^{\alpha}-1\right\},~\alpha \in \{2n \mid n \in \mab{N}\}, 
\end{align}
where $\gamma \in \mab{R}$ denotes a penalty parameter. 
They anneal the penalty parameter from positive $\gamma > 0$ to $\gamma \approx 0$ to smooth the non-convexity of the objective function $\hat{l}(\B{\theta}; C, \B{\lambda})$ similar to simulated annealing \citep{kirkpatrick1983optimization}.
To address the second issue, \citet{ichikawa2023controlling} further annealed the entropy term to $\gamma \le 0$ until the entropy term approaches zero, i.e., $s(\mathbf{\theta}, C) \approx 0$, enforcing the relaxed variable to take on discrete values and further smoothing the continuous loss landscape for original discrete solutions.
This method is referred to as \underline{\textbf{C}}ontinuous \underline{\textbf{R}}elaxation \underline{\textbf{A}}nnealing (\textbf{CRA}), and the solver that applies the CRA to the PI-GNN solver is referred to as CRA-PI-GNN solver.

\section{Continuous Parallel Relaxation Annealing for Diverse Solutions}\label{sec:continuous-tensor-relaxation}
We propose an extension of CRA, termed \underline{\textbf{C}}ontinuous \underline{\textbf{T}}ensor \underline{\textbf{R}}elaxation (\textbf{CPRA}), which enables UL-based solvers to efficiently handle multiple problem instances within a single training run. 
Beyond this core advancement, we demonstrate how CPRA can be effectively tailored to discover both \textit{penalty-diversified solutions} and \textit{variation-diversified solutions}.

\subsection{Continuous Parallel  Relaxation (CPRA)}
Let us consider solving multiple instances $\mathcal{C}_{S} = \{C_{s} \mid C_{s} \in \mathcal{C} \}_{1\le s \le S}$ with different penalty parameters $\Lambda_{S}=\{\B{\lambda}_{s}\}_{1\le s \le S}$ simultaneously. 
To handle these instances, we relax a binary vector $\B{x} \in \{0, 1\}^{N}$ into an augmented continual matrix $P \in [0, 1]^{N \times S}$ that is trained via minimizing the following loss function:
\begin{equation}
    \hat{R}(P; \mac{C}_{S}, \Lambda_{S}, \gamma) = \sum_{s=1}^{S} \hat{l}(P_{:s}; C_{s}, \B{\lambda}_{s}) +\gamma S(P),~~S(P) \delequal \sum_{i=1}^{N} \sum_{s=1}^{S} (1-(2P_{is}-1)^{\alpha}),
\end{equation}
where $P_{:s} \in [0, 1]^{N}$ denotes $s$-the column in $P$, i.e. $P =(P_{:s})_{1\le s \le S} \in [0, 1]^{N \times S}$. Optimizing $\hat{R}$ drives each column $P_{:s}$ to minimize its respective objective function $\hat{l}(P_{:s}; C_{s}, \B{\lambda}_{s})$.
Additionally, we also generalize the entropy term $s(\B{p})$ in Eq.~\eqref{eq:penalty-cost-gnn} into $S(P)$ for this augmented higher-order array.
Specifically, the following theorem holds.
\begin{theorem}
\label{theorem:limit-gamma}
    Under the assumption that for all $s\in [S]$, each objective function $\hat{l}(P_{:s}; C_{s}, \B{\lambda}_{s})$ remains bounded on $[0, 1]^{N}$, each column solutions $P^{\ast}_{:s}$ such that $P^{\ast} \in \mathrm{argmin}_{P} \hat{R}(P;\mac{C}_{S}, \Lambda_{S}, \gamma)$ converges to the corresponding discrete optimal $\B{x}^{\ast} \in \mathrm{argmin}_{\B{x}} l(\B{x}; C_{s}, \B{\lambda}_{s})$ as $\gamma \to +\infty$. 
    Furthermore, as $\gamma \to -\infty$, the loss function $\hat{R}(P; \mac{C}_{S}, \Lambda_{S})$ becomes convex and admits a unique half-integral solution $\nicefrac{\B{1}_{N} \B{1}_{N}^{\top}}{2} = \mathrm{argmin}_{P} \hat{R}(P; \mac{C}_{S}, \Lambda_{S}, \gamma)$.
\end{theorem}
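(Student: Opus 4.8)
The plan rests on one structural observation: the augmented objective decouples across the $S$ columns. Writing $s_{0}(\B{p}) \delequal \sum_{i=1}^{N}\bigl(1-(2p_{i}-1)^{\alpha}\bigr)$ for the per-column entropy, one has
\begin{equation*}
\hat{R}(P; \mac{C}_{S}, \Lambda_{S}, \gamma) = \sum_{s=1}^{S}\Bigl(\hat{l}(P_{:s}; C_{s}, \B{\lambda}_{s}) + \gamma\, s_{0}(P_{:s})\Bigr),
\end{equation*}
and the $s$-th summand depends only on the $s$-th column. Hence $\argmin_{P}\hat{R}$ is exactly the Cartesian product of the per-column minimizers of $\B{p}\mapsto \hat{l}(\B{p}; C_{s}, \B{\lambda}_{s}) + \gamma\, s_{0}(\B{p})$, so the theorem follows once the single-instance statement is established for each $s$ — which is precisely the annealing result underlying CRA \citep{ichikawa2023controlling}, transported through the sign change from $s(\cdot)$ in Eq.~\eqref{eq:penalty-cost-gnn} to $S(\cdot)$ (so that CPRA's $\gamma$ plays the role of CRA's $-\gamma$). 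I would therefore run the two limiting arguments column-wise and reassemble.

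For $\gamma \to +\infty$, I would use that $s_{0}(\B{p})\ge 0$ on $[0,1]^{N}$ with equality iff $\B{p}\in\{0,1\}^{N}$ (here $\alpha$ even is essential). Fix $s$ and a discrete optimum $\B{x}^{\ast}\in\argmin_{\B{x}} l(\B{x}; C_{s}, \B{\lambda}_{s})$. By the decoupling, $P^{\ast}_{:s}$ minimizes the $s$-th summand, so $\hat{l}(P^{\ast}_{:s}) + \gamma s_{0}(P^{\ast}_{:s}) \le \hat{l}(\B{x}^{\ast}) = l(\B{x}^{\ast})$, using $\hat{l}=l$ on the cube and $s_{0}(\B{x}^{\ast})=0$. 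Boundedness of $\hat{l}$ below then gives $0 \le s_{0}(P^{\ast}_{:s}) \le (l(\B{x}^{\ast}) - \inf\hat{l})/\gamma \to 0$, so $P^{\ast}_{:s}$ approaches the vertex set; by compactness of $[0,1]^{N}$ and continuity of $\hat{l}$, every limit point is a vertex $\B{x}$ with $l(\B{x}) = \hat{l}(\B{x}) \le l(\B{x}^{\ast})$, hence a discrete optimum. (Equivalently, on the compact set $\{s_{0}\ge\epsilon\}$ the objective exceeds $\inf\hat{l}+\gamma\epsilon\to\infty$, forcing minimizers into $\{s_{0}<\epsilon\}$ for every $\epsilon$.)

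For $\gamma \to -\infty$, I would rescale: dividing the $s$-th summand by $|\gamma|$ yields $\hat{l}(\B{p})/|\gamma| - s_{0}(\B{p})$, which by boundedness of $\hat{l}$ converges uniformly on $[0,1]^{N}$ to $-s_{0}(\B{p})=\sum_{i}\bigl((2p_{i}-1)^{\alpha}-1\bigr)$. Since $q\mapsto q^{\alpha}$ is convex for even $\alpha$, $-s_{0}$ is convex and is uniquely minimized at $\B{p}=\tfrac12\B{1}_{N}$; the standard stability of $\argmin$ under uniform convergence to a function with a unique minimizer on a compact set then gives $\argmin_{\B{p}}(\hat{l}+\gamma s_{0})\to\tfrac12\B{1}_{N}$, and reassembling columns yields $\nicefrac{\B{1}_{N}\B{1}_{N}^{\top}}{2}$. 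For the convexity assertion I would invoke the (in practice automatic, since $\hat{f},\hat{v}_{i}$ are polynomial relaxations) $C^{2}$ bound $\|\nabla^{2}\hat{l}\|\le M$: for $\alpha=2$, $\nabla^{2}s_{0}=-8I_{N}$, so $\nabla^{2}(\hat{l}+\gamma s_{0})\succeq(-8\gamma-M)I_{N}\succ 0$ once $\gamma<-M/8$; for general even $\alpha$ the same holds on any neighborhood of $\tfrac12\B{1}_{N}$ bounded away from the boundary, where the minimizer eventually lives.

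The main obstacle is not the decoupling, which is immediate, but the two points where the bare boundedness hypothesis must be supplemented: identifying the $\gamma\to+\infty$ limit as a genuine discrete optimum needs continuity of $\hat{l}$, and the convexity claim in the $\gamma\to-\infty$ regime needs a curvature bound on $\hat{l}$ together with care about the degeneracy $\nabla^{2}s_{0}\to 0$ at the boundary when $\alpha>2$. Both are harmless for the relaxations used here, so the cleanest exposition states them explicitly and then runs the compactness and uniform-convergence arguments above.
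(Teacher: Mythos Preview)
Your approach is essentially the paper's: both rely on the column-wise separability of $\hat{R}$ and the domination of the entropy term in the two limits, reducing to the single-instance CRA analysis of \citet{ichikawa2023controlling}. Your treatment is in fact more careful than the paper's own proof, which establishes convexity only for the penalty term $\Phi$ (not for $\hat{R}$) and does not explicitly acknowledge the continuity and curvature assumptions on $\hat{l}$ that you correctly flag as needed.
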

A detailed proof of Theorem \ref{theorem:limit-gamma} is available in Appendix  \ref{subsec:proof-theorem-31}.
The relaxation approach naturally extends to higher-order arrays, $P \in [0, 1]^{N \times S_{1} \times \cdots}$, potentially enabling more powerful GPU-based parallelization. 
A comprehensive exploration of such higher-dimensional implementations remains an exciting avenue for future research. 
For UL-based solvers, we parameterize the soft higher-order array $P$ as $P_{\B{\theta}}$, leading to
\begin{equation}
    \label{eq:CPRA-loss-function-GNN}
    \hat{R}(\B{\theta}; \mac{C}_{S}, \Lambda_{S}, \gamma) = \sum_{s=1}^{S} \hat{l}(P_{\B{\theta}, :s}(\mac{C}_{S}); C_{s}, \B{\lambda}_{s}) + \gamma S(P_{\B{\theta}}(\mac{C}_{S})),~
    S(P_{\B{\theta}}(\mac{C}_{S})) \delequal \sum_{i=1}^{N} \sum_{s=1}^{S} (1-(2P_{\B{\theta}, is}(\mac{C}_{S})-1)^{\alpha}),
\end{equation}
where $\gamma$ is also annealed from a positive to a negative value as in CRA-PI-GNN solver in Section \ref{sec:background}.
Following the UL-based solvers \citep{karalias2020erdos, schuetz2022graph, ichikawa2023controlling}, we encode $P_{\theta}$ via a GNN-based architecture.
This study refer to the solver that applies CPRA to PI-GNN solver as CPRA-PI-GNN solver.

\paragraph{Simple Architectural Modifications for Efficient Learning and Shared Representation Learning.}
In this study, we employ a specialized GNN-based architecture that builds upon the core designs of PI-GNN \citep{schuetz2022combinatorial} and CRA-PI-GNN \citep{ichikawa2023controlling} to simultaneously address multiple problem instances, as defined in Eq.~\eqref{eq:CPRA-loss-function-GNN}. 
Unlike these solvers that generate a single solution per instance, CPRA-PI-GNN handles $S$ instances in parallel by expanding the final-layer node embedding dimension from $1$ to $S$, as illustrated in Fig.~\ref{fig:multi-shot-image}. 
As a result, the number of parameters increases linearly only in the output layer, while the rest of the architecture remains unchanged.
This design is both memory- and computation-efficient, as the overall network size and training time remain comparable to solving a single instance.

Furthermore, by keeping the network unchanged apart from the output layer, CPRA-PI-GNN solver naturally encourages the model to learn compact, shared representations across multiple instances, functioning similarly to a bottleneck in an autoencoder, as illustrated in Fig.~\ref{fig:multi-shot-image}.
This simple architectural choice improves solution quality by leveraging the learned representations.
Indeed, numerical experiments demonstrate that this shared representation learning strategy yields better results than single-instance solvers such as PI-GNN and CRA-PI-GNN.
Additional results in Appendix~\ref{subsec:add-results-multi-problem} further show that CPRA-PI-GNN solves multiple similar problems more efficiently and effectively than CRA-PI-GNN.
To further reduce computational cost, we adopt a two-stage learning process that leverages shared representation learning: first training on a smaller, representative subset $S^{\prime} \subset S$, then fine-tuning only the final-layer embeddings when scaling to the full set $S$.
This approach offers an efficient way to extend the model to larger problem sets.

\subsection{CPRA for Finding Penalty-Diversified Solutions}
To find \textit{penalty-diversified solutions}, we aim to minimize the following loss function for a problem instance $C$, which is a special case of Eq.~\eqref{eq:CPRA-loss-function-GNN}:
\begin{equation}
    \label{eq:CPRA-loss-function-GNN-penalty}
    \hat{R}(\B{\theta}; C, \Lambda_{S}, \gamma) = \sum_{s=1}^{S} \hat{l}(P_{\B{\theta}, :s}(C); C, \B{\lambda}_{s}) + \gamma S (P_{\B{\theta}}(C)).
\end{equation}
By solving this optimization problem, each column $P_{\theta, :s}(C_{s})$, for all $s \in [S]$, corresponds to the optimal solution for the penalty parameter $\B{\lambda}_{s}$.
For penalty-diversified solutions, the variation in each $s$ is primarily restricted to the penalty coefficient, leading to a strong correlation among instances.

\subsection{CPRA for Finding Variation-Diversified Solutions}
To explore \textit{variation-diversified solutions} for a single instance $C$ with a penalty parameter $\B{\lambda}$, we introduce a diversity penalty into Eq.~\eqref{eq:CPRA-loss-function-GNN} as follows:
\begin{multline}
    \label{eq:CPRA-loss-function-GNN-variation}
    \hat{R}(\B{\theta}; C, \B{\lambda}, \gamma, \nu) = \sum_{s=1}^{S} \hat{l}(P_{\B{\theta}, :s}(C); C, \B{\lambda}) + \gamma S(P_{\B{\theta}}(C)) + \nu \Psi(P_{\B{\theta}}(C)),\\
    \Psi(P_{\B{\theta}}(C)) = - S \sum_{i=1}^{N} \mab{S}\mab{T}\mab{D}\left[\{P_{\B{\theta}, is}(C)\}_{1\le s \le S} \right],
\end{multline}
where $\Psi(P_{\B{\theta}}(C))$ serves as a constraint term that promotes diversity in each column $P_{\B{\theta}, :s}(C)$, and $\nu$ is the parameter controlling the strength of this constraints.
Setting $\nu=0$ in Eq.~\eqref{eq:CPRA-loss-function-GNN-variation} is nearly equivalent to solving the same CO problem with different initial conditions. 

As shown in Proposition~\ref{proposition:relaxation-max-sum-hamming}, the diversity term $\Psi(P_{\B{\theta}}(C))$ can be interpreted as a natural relaxation of the max-sum Hamming distance, a widely used diversity metric in combinatorial optimization \citep{fomin2020diverse, fomin2023diverse, baste2022diversity, baste2019fpt}. 
Specifically, for a set of binary sequences $\{\B{x}^{(s)}\}_{s=1}^S$ with $\B{x}^{(s)} \in \{0,1\}^N$, the following holds:
\begin{proposition}
\label{proposition:relaxation-max-sum-hamming}
For a set of binary sequences $\{\B{x}^{(s)} \}_{s=1}^{S}$, $\forall s,~\B{x}^{(s)}\in \{0, 1\}^{N}$, the following equality holds:
\begin{equation}
    \label{eq:diverse-generalization-1}
    S^{2}\sum_{i=1}^{N} \mab{V}\mab{A}\mab{R}\left[\left\{x^{(s)}_{i} \right\}_{1 \le s \le S} \right] = \sum_{s<l} d_{H}(\B{x}^{(s)}, \B{x}^{(l)}),
\end{equation}
where the right-hand side of Eq.~\eqref{eq:diverse-generalization-1} represents the max-sum Hamming distance.
\end{proposition}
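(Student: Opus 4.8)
The plan is to prove the identity by a direct computation that reduces both sides to the same pairwise sum, exploiting the fact that each coordinate $i$ decouples. First I would write $\mab{V}\mab{A}\mab{R}[\{x^{(s)}_i\}_{s=1}^S] = \frac{1}{S}\sum_{s=1}^S (x^{(s)}_i)^2 - \left(\frac{1}{S}\sum_{s=1}^S x^{(s)}_i\right)^2$, so that $S^2 \mab{V}\mab{A}\mab{R}[\{x^{(s)}_i\}] = S\sum_s (x^{(s)}_i)^2 - \left(\sum_s x^{(s)}_i\right)^2$. Expanding the square of the sum gives $\sum_{s,l} x^{(s)}_i x^{(l)}_i$, and using the standard algebraic identity $S\sum_s a_s^2 - \sum_{s,l} a_s a_l = \sum_{s<l}(a_s - a_l)^2$ (which itself follows by counting each unordered pair twice) we obtain $S^2 \mab{V}\mab{A}\mab{R}[\{x^{(s)}_i\}] = \sum_{s<l}(x^{(s)}_i - x^{(l)}_i)^2$.

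The second step is the binary-specific observation: since $x^{(s)}_i, x^{(l)}_i \in \{0,1\}$, we have $(x^{(s)}_i - x^{(l)}_i)^2 = |x^{(s)}_i - x^{(l)}_i| = \B{1}[x^{(s)}_i \neq x^{(l)}_i]$. Hence $\sum_{i=1}^N (x^{(s)}_i - x^{(l)}_i)^2 = \sum_{i=1}^N \B{1}[x^{(s)}_i \neq x^{(l)}_i] = d_H(\B{x}^{(s)}, \B{x}^{(l)})$ by the definition of Hamming distance given in the Notation section.

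Finally I would combine the two steps: sum the per-coordinate identity from step one over $i \in [N]$, then apply step two termwise inside the resulting double sum over pairs $s<l$, swapping the order of the finite sums over $i$ and over pairs freely. This yields
\begin{equation}
S^2 \sum_{i=1}^N \mab{V}\mab{A}\mab{R}[\{x^{(s)}_i\}_{1\le s\le S}] = \sum_{i=1}^N \sum_{s<l} (x^{(s)}_i - x^{(l)}_i)^2 = \sum_{s<l} \sum_{i=1}^N \B{1}[x^{(s)}_i \neq x^{(l)}_i] = \sum_{s<l} d_H(\B{x}^{(s)}, \B{x}^{(l)}),
\end{equation}
which is exactly Eq.~\eqref{eq:diverse-generalization-1}.

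There is no real obstacle here — the proof is entirely elementary. The only point requiring the slightest care is the bookkeeping in the algebraic identity $S\sum_s a_s^2 - \big(\sum_s a_s\big)^2 = \sum_{s<l}(a_s-a_l)^2$: one should note $\sum_{s,l}(a_s-a_l)^2 = \sum_{s,l}(a_s^2 + a_l^2 - 2a_s a_l) = 2S\sum_s a_s^2 - 2\big(\sum_s a_s\big)^2$ and that the left side equals $2\sum_{s<l}(a_s-a_l)^2$ since the diagonal terms vanish, giving the claimed factor. Everything else is substitution and the binary identity $t^2 = \B{1}[t\neq 0]$ for $t\in\{-1,0,1\}$.
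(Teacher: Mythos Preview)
Your proof is correct and follows essentially the same approach as the paper's own proof: both reduce each side, coordinate by coordinate, to the common pairwise sum $\sum_{s<l}(x^{(s)}_i - x^{(l)}_i)^2$ and then use that for binary entries this equals the indicator $\B{1}[x^{(s)}_i \neq x^{(l)}_i]$. The only cosmetic difference is that the paper expands the Hamming side and the variance side separately and matches the resulting expressions, whereas you invoke the identity $S\sum_s a_s^2 - \big(\sum_s a_s\big)^2 = \sum_{s<l}(a_s-a_l)^2$ directly; the underlying computation is the same.
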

The detailed proof can be found in Appendix \ref{sec:proof-proposition32}.
This proposition shows that the proposed diversity penalty corresponds exactly to the max-sum Hamming distance when evaluated at the binary vertices of the hypercube $\{0,1\}^N$, which means the left hand side of Eq.~\ref{eq:diverse-generalization-1} is a natural continuous relaxation of the discrete diversity measure onto the continuous domain $[0,1]^N$.

This relaxation yields a significant computational advantage. Directly computing the max-sum Hamming distance involves evaluating all $\binom{S}{2} = \mathcal{O}(S^2)$ pairwise distances between solutions, which becomes computationally expensive as $S$ increases.
In contrast, the relaxed diversity term $\Psi(\cdot)$ computes per-coordinate variance using simple summary statistics, which can be evaluated in $\mathcal{O}(S)$ time.
This efficiency holds for both forward and backward passes, approaching well-suited to gradient-based optimization.
To ensure consistent scale and sensitivity with the other loss components in Eq.~\eqref{eq:CPRA-loss-function-GNN-variation}, we normalize the diversity term using the sample standard deviation.

\section{Experiments}\label{sec:experiment}
This section evaluate the effectiveness of CPRA-PI-GNN solver in discovering penalty-diversified and variation-diversified solutions across three CO problems: the maximum independent set (MIS), maximum cut (MaxCut), diverse bipartite matching (DBM) problems. Their objective functions are summarized in Table~\ref{tab:objective-functions} in Appendix \ref{subsec:app-problem-specification}; For a detailed explanation, refer to Appendix \ref{subsec:app-problem-specification}.
\subsection{Settings}\label{sub-sec:config}
    \begin{figure}[tb]
        \begin{minipage}[c]{0.6\textwidth}
            \begin{tabular}{c@{~}|@{~}c@{~~}c@{~~}c@{~~}c}
                \toprule
                & \multicolumn{3}{c}{MIS ($d=5$)}                     \\
                \cmidrule(r){2-4} 
                Method \{\#Runs\}  & \#Params  & Time (s) & $\mathrm{ApR}^{\ast}$ \\
                \midrule
                PI-GNN \{20\} & 5,022,865$\times$20 & 13,189$\pm$60 & 0.883$\pm$0.002 \\
                CRA \{20\}    & 5,022,865$\times$20 & 14,400$\pm$42 & \underline{\textbf{0.961$\pm$0.002}} \\
                \cmidrule(r){1-4}
                CPRA \{1\}    & \underline{\textbf{5,083,076}} & \underline{\textbf{1,194$\pm$8}} & 0.934$\pm$0.002 \\
                \bottomrule
            \end{tabular}
        \end{minipage}
        \hfill
        \begin{minipage}[c]{0.4\textwidth}
            \centering
            \includegraphics[width=\columnwidth]{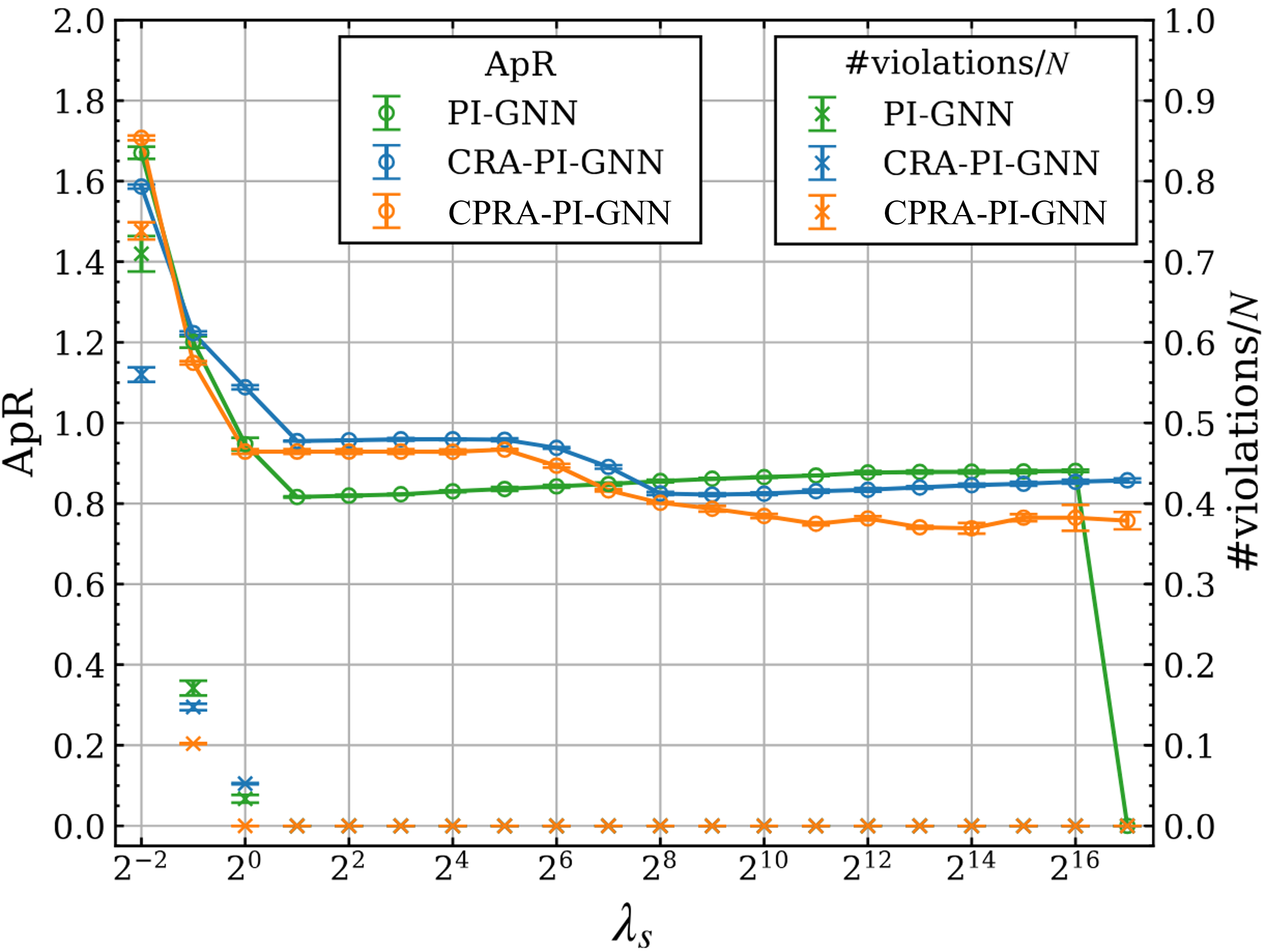}\\
        \end{minipage}\\
        \begin{minipage}[c]{0.6\textwidth}
            \begin{tabular}{c@{~}|@{~}c@{~~}c@{~~}c@{~~}c}
                \toprule
                & \multicolumn{3}{c}{MIS ($d=20$)}   \\
                \cmidrule(r){2-4} 
                Method \{\#Runs\}   & \#Params  & Time (s) & $\mathrm{ApR}^{\ast}$ \\
                \midrule
                PI-GNN \{20\} & 5,022,865$\times$20 & 15,191$\pm$24 & 0.759$\pm$0.007 \\
                CRA \{20\}    & 5,022,865$\times$20 & 14,816$\pm$40 & \underline{\textbf{0.928$\pm$0.004}} \\
                \cmidrule(r){1-4} 
                CPRA \{1\}    & \underline{\textbf{5,083,076}} & \underline{\textbf{1,254$\pm$10}} & 0.878$\pm$0.011 \\
                \bottomrule
            \end{tabular}
        \end{minipage}
        \hfill
        \begin{minipage}[c]{0.4\textwidth}
            \centering
            \includegraphics[width=\columnwidth]{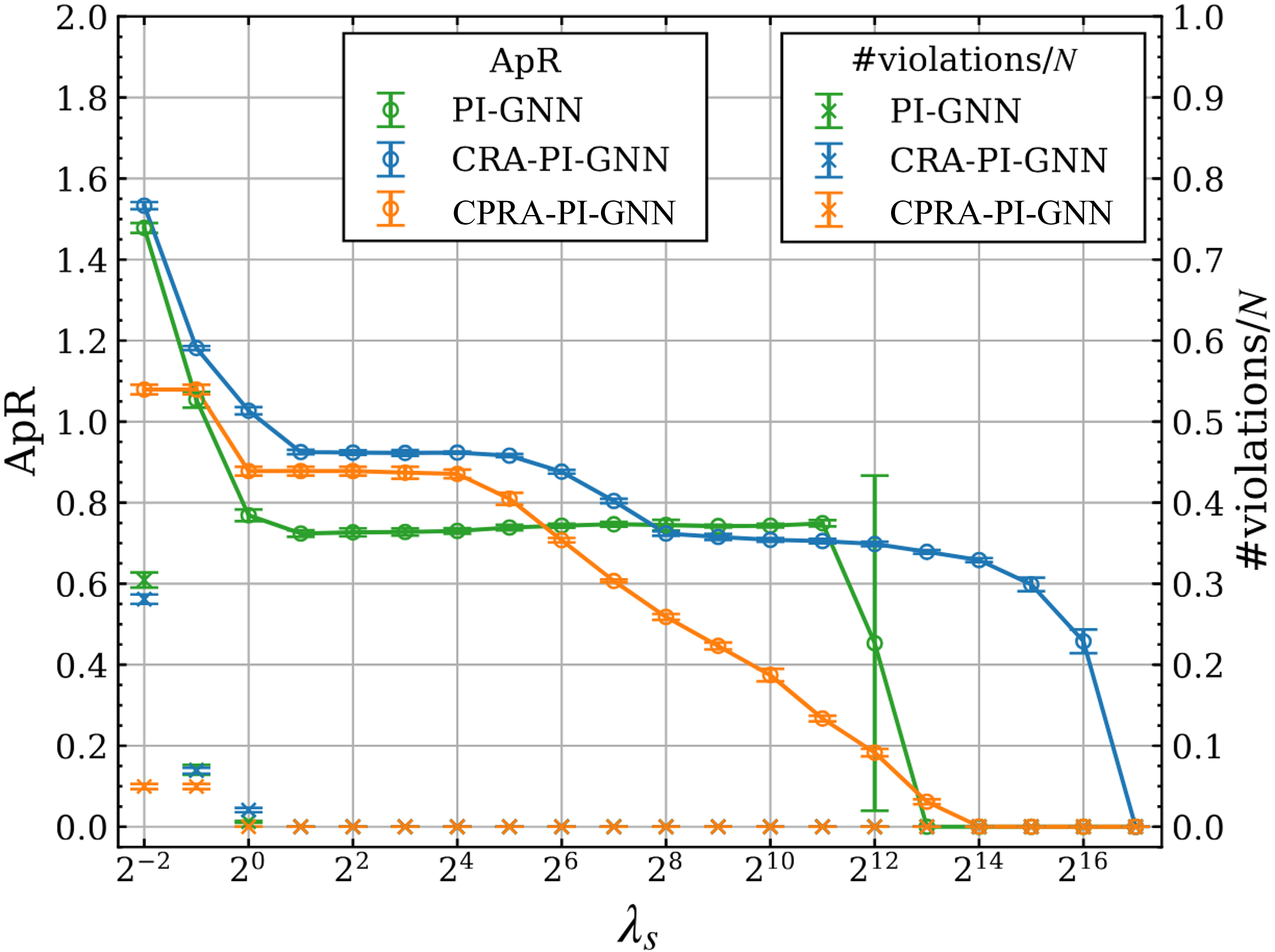}\\
        \end{minipage}\\
        \caption{(Left Table) shows runtime (Time), number of parameters (\#Params), and maximum ApR ($\mathrm{ApR}^{\ast})$ for each method. (Right Figure) shows ApRs across different penalty parameters $\Lambda_{s}$. Error represent the standard deviations of 5 random seeds.
        CPRA-PI-GNN solver can find penalty-diversified solutions in a single run with a comparable \#Params and runtime to UL-based solvers that output a single solution.}
        \label{fig:mis-params-time}
    \end{figure}
    
\paragraph{Baseline.}
    Our baseline include results from executing a greedy algorithms, PI-GNN solver \citep{schuetz2022combinatorial} and  CRA-PI-GNN solver \citep{ichikawa2023controlling} multiple times.
    These solvers are executed multiple times using different penalty parameters for penalty-diversified solutions and different random seeds for variation-diversified solutions, allowing us to assess the search efficiency for both types of diversified solutions.
    For the MIS problem, we employ a random greedy search implemented by \texttt{NetworkX}, and for the MaxCut problem, we use a random greedy search implemented by \citet{cvx_graph_algorithms}.
    Although some online heuristics exist for exploring variation-diversified solutions by generating solutions that are distant from those already obtained, we do not include these methods as benchmarks due to their inefficient GPU utilization and poor scalability to large problems.
    We measure the runtime $t$ of each execution, from model training to the final output.
    
\paragraph{Implementation.}
    This numerical experiment aims to validate that CPRA can generate penalty-diversified and variation-diversified solutions while maintaining a comparable number of parameters and runtime to UL-based solvers, which produce a single solution, as described in Section \ref{sec:continuous-tensor-relaxation}.
    Therefore, in our experiments, the CPRA-PI-GNN solver utilizes the same network architecture as the PI-GNN \citep{schuetz2022combinatorial} and CRA-PI-GNN \citep{ichikawa2023controlling} solvers, except for the output size of the final layer as discussed in Section \ref{sec:continuous-tensor-relaxation}.
    We use \texttt{GraphSage}, implemented with the Deep Graph Library \citep{wang2019deep}.
    The detailed architectures of these GNNs are provided in Appendix \ref{subsec:app-architecture-gnns}.
    We employ the AdamW \citep{kingma2014adam} optimizer with a learning rate of $\eta = 10^{-4}$ and a weight decay of $10^{-2}$.
    The GNNs are trained for up to $5\times 10^{4}$ epochs with early stopping, which monitors the summarized loss function $\sum_{s=1}^{S} \hat{l}(P_{:, s})$ and the entropy term $\Phi(P; \gamma, \alpha)$, using a tolerance of $10^{-5}$ and patience of $10^{3}$ epochs. 
    Further details are provided in Appendix \ref{subsec:app-training-params}.
    We set the initial scheduling value to $\gamma(0)=-20$ for the MIS and DBM problems and $\gamma(0)=-6$ for the MaxCut problems, using the same scheduling rate $\varepsilon=10^{-3}$ and curvature rate $\alpha=2$ in Eq.~\eqref{eq:CPRA-loss-function-GNN}.

\paragraph{Evaluation Metrics.}
    Following the metric of \citet{wang2023unsupervised}, we use the approximation rate (ApR) for all experiments, defined as $\mathrm{ApR}=\nicefrac{f(\B{x}; C)}{f(\B{x}^{\ast}; C)}$, where $\B{x}^{\ast}$ represents the optimal solutions.
    For MIS, these optimal solutions set to the theoretical results \citep{barbier2013hard}, for DBM problems, they are identified using Gurobi 10.0.1 solver with default settings, and for MaxCut problems, they are the best-known solutions.
    To evaluate the quality of penalty-diversified solutions, we compute $\mathrm{ApR}^{\ast} = \max_{s \in [S]} (\mathrm{ApR}(\B{x}_{s}))$ as a function of the parallel number $S$ in Eq.~\eqref{eq:CPRA-loss-function-GNN}.
    To evaluate the quality of variation-diversified solutions, we compute the average ApR, defined as $\bar{\mathrm{ApR}} = \sum_{s=1}^{S} \nicefrac{\mathrm{ApR}{s}}{S}$, and introduce a diversity score (DScore) for the bit sequences $\{\B{x}_{s}\}_{s=1}^{S}$:
    \begin{equation}
        \mathrm{DScore}(\{\B{x}_{s}\}_{s=1}^{S}) = \frac{2}{N S(S-1)} \sum_{s<l} d_{H}(\B{x}_{s}, \B{x}_{l})
    \end{equation}
    A higher $\mathrm{DScore}$ indicates greater variation among solutions. A desirable variation-diversified solution should exhibit both high-quality solutions and a diverse set of solutions with distinct characteristics.
    Thus, solutions with higher values of both average ApR and DScore are more desirable.
    
\subsection{Finding Penalty-Diversified Solutions}\label{subsec:experiment-pd-solutions-mis}
   \begin{figure*}[tb]
    \begin{minipage}[c]{0.6\textwidth}
        \begin{tabular}{@{}c@{~}|@{~}c@{~}c@{~}c@{~}c@{}}
            \toprule
            & \multicolumn{3}{c}{DBM instance-1, matching-1}                     \\
            \cmidrule(r){2-4} 
            Method \{\#Runs\}  & \#Params  & Time (s) & $\mathrm{ApR}^{\ast}$ \\
            \midrule
            PI-GNN \{121\} & 12,507,501$\times$121 & 45,000$\pm$5,778 & 0.883$\pm$0.040 \\
            CRA \{121\}    & 12,507,501$\times$121 & 213,612$\pm$5,132 & \underline{\textbf{1.000$\pm$0.000}} \\
            \cmidrule(r){1-4}
            CPRA \{1\}    & \underline{\textbf{13,107,621}} & \underline{\textbf{1,961$\pm$101}} & 0.883$\pm$0.011 \\
            \bottomrule
        \end{tabular}
    \end{minipage}
    \hfill
    \begin{minipage}[c]{0.4\textwidth}
        \centering
        \includegraphics[width=0.95\columnwidth]{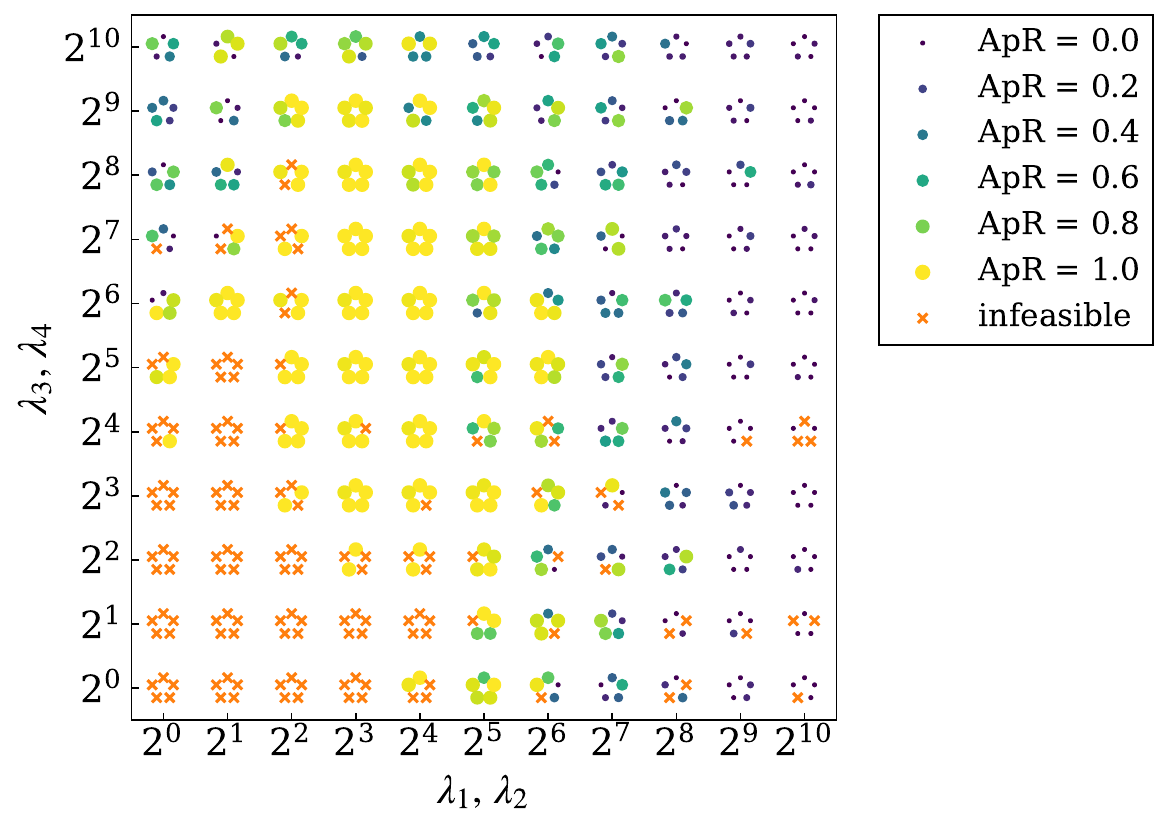}\\
    \end{minipage}\\
    \begin{minipage}[c]{0.6\textwidth}
        \begin{tabular}{@{}c@{~}|@{~}c@{~}c@{~}c@{~}c@{}}
            \toprule
            & \multicolumn{3}{c}{DBM instance-1, matching-2}   \\
            \cmidrule(r){2-4} 
            Method \{\#Runs\} & \#Params  & Time (s) & $\mathrm{ApR}^{\ast}$ \\
            \midrule
            PI-GNN \{121\} & 12,507,501$\times$121 & 28,064$\pm$7,105 & 0.927$\pm$0.024 \\
            CRA \{121\}    & 12,507,501$\times$121 & 208,141$\pm$903 & 0.990$\pm$0.013 \\
            \cmidrule(r){1-4} 
            CPRA \{1\}    & \underline{\textbf{13,107,621}} & \underline{\textbf{2,154$\pm$164}} & \underline{\textbf{1.000$\pm$0.000}} \\
            \bottomrule
        \end{tabular}
    \end{minipage}
    \hfill
    \begin{minipage}[c]{0.4\textwidth}
        \centering
        \includegraphics[width=0.95\columnwidth]{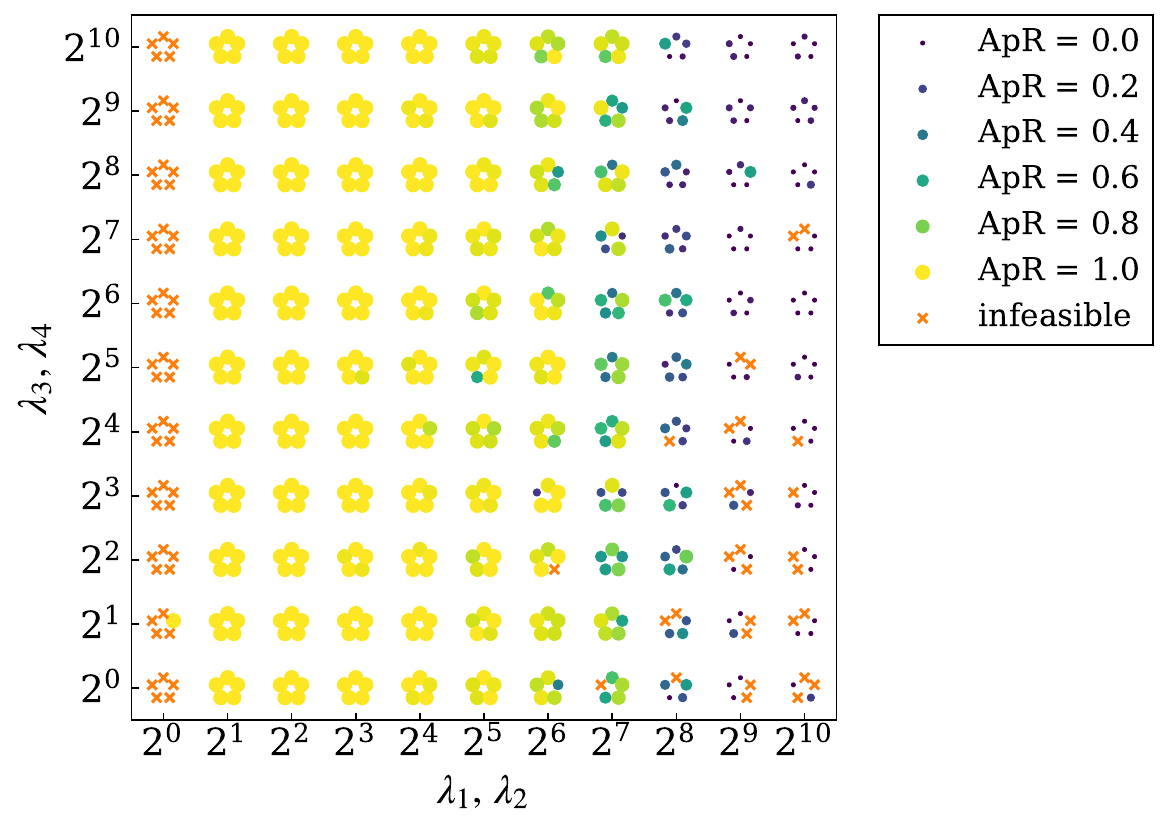}\\
    \end{minipage}\\
    \caption{(Left Table) shows runtime (Time), number of parameters (\#Params), and maximum ApR ($\mathrm{ApR}^{\ast})$ for each method, with errors representing the standard deviations of $5$ random seeds. (Right Figure) shows ApRs, where each point represents the results from $5$ random seed across various penalty parameters $\Lambda_{S}=\{\B{\lambda}_{s}=(\lambda_{a}, \lambda_{a}, \lambda_{b}, \lambda_{b}) \mid  
    \lambda_{a}, \lambda_{b} \in \{2^{s} \mid s=0, \ldots, 10\}\}$. CPRA-PI-GNN solver is capable of finding penalty-diversified solutions in a single run, with a comparable number of parameters and runtime to those of UL-based solvers.}
    \label{fig:matching-results-penalty}
    \end{figure*}
    \paragraph{MIS Problems.}
    First, we compare the performacne of CPRA-PI-GNN on MIS problems in RRGs, $G(V, E)$, with $|V|=10{,}000$ nodes and the node degree of $5$ and $20$. 
    CPRA-PI-GNN solver run using Eq.~\eqref{eq:CPRA-loss-function-GNN-penalty}, with a set of penalty parameters, $\Lambda_{S}=\{2^{s-3} \mid s= 1, \ldots, 20\}$.
    CRA-PI-GNN and PI-GNN solver run multiple times for each penalty parameter $\lambda_{s} \in \Lambda_{S}$. 
    Fig.~\ref{fig:mis-params-time} (Right) shows the ApR as a function of penalty parameters $\lambda_{s} \in \Lambda_{S}$.
    Across all penalty parameters, from $2^{-2}$ to $2^{16}$, CPRA-PI-GNN solver performs on par with or slightly underperforms CRA-PI-GNN solver. 
    Table in Fig.~\ref{fig:mis-params-time} shows the runtime and number of paramers (\#Params) for CPRA-PI-GNN solver at $S=20$, compared to the total runtime and \#Params for $S$ runs of PI-GNN and CRA-PI-GNN solvers.
    These result indicate that CPRA-PI-GNN solver can find penalty-diversified solutions with a comparable number of parameters and runtime to UL-based solvers that output a single solution. 
    For a more detailed discussion on the dependence of the runtime and the \#params for number of shot $S$, refer to Appendix \ref{subsec:app-runtime-number-params-function-of-shot}.

\paragraph{DBM Problems.}
    We next demonstrate the effectiveness of CPRA-PI-GNN solver for DBM problems, which serve as practical CO problems.
    We focus on the first of the 27 DBM instances; see Appendix \ref{subsec:add-results-DBM} for the results of the remaining instances.
    Given that $(\lambda_{1}, \lambda_{2})$ and $(\lambda_{3}, \lambda_{4})$ share similar properties, 
    CPRA-PI-GNN run with a set of  $S=11 \times 11$ parameters on the a grid, $\Lambda_{S}=\{\B{\lambda}_{s}=(\lambda_{a}, \lambda_{a}, \lambda_{b}, \lambda_{b}) \}$, where $\lambda_{a}, \lambda_{b} \in \{2^{s} \mid s=0, \ldots, 10\}$.
    CRA-PI-GNN and PI-GNN solver run multiple times for each penalty parameter $\lambda_{s} \in \Lambda_{S}$
    Fig.~\ref{fig:matching-results-penalty} (Right) shows that the ApR on the grid $\Lambda_{S}$ using the CPRA-PI-GNN solver identifies a desirable region where the ApR is nearly $1.0$. 
    Table in Fig.~\ref{fig:matching-results-penalty} demonstrates that CPRA-PI-GNN solver can find penalty-diversified solutions with a comparable number of parameters and runtime to UL-based solvers that output a single solution. 

    \subsection{Finding Variation-Diversified Solutions}\label{subsec:diverse-sol-acquisition}
    We next demonstrate that CPRA-PI-GNN solver can efficiently find variation-diversified solutions.
    Furthermore, we also show that the CPRA-PI-GNN solver enhances exploration capabilities and achieves higher-quality solutions.
    
    \paragraph{MIS Problems.}
    We first run CPRA-PI-GNN solver using Eq.~\eqref{eq:CPRA-loss-function-GNN-variation} to find variation-diversified solutions for MIS problems on small-scaled RRGs with $30$ nodes and the node degree set to $3$.
    We set the parameter $\nu=0.5$ and the number of shots tp $S=100$ in Eq.~\eqref{eq:CPRA-loss-function-GNN-variation}.
    As shown in Fig.~\ref{fig:small_mis_solutions_2x3}, CPRA-PI-GNN solver successfully obtain $6$ solutions, each with $13$ independent sets, which is the global optimum.
    \begin{figure}[tb]
        \centering
        \includegraphics[width=\columnwidth]{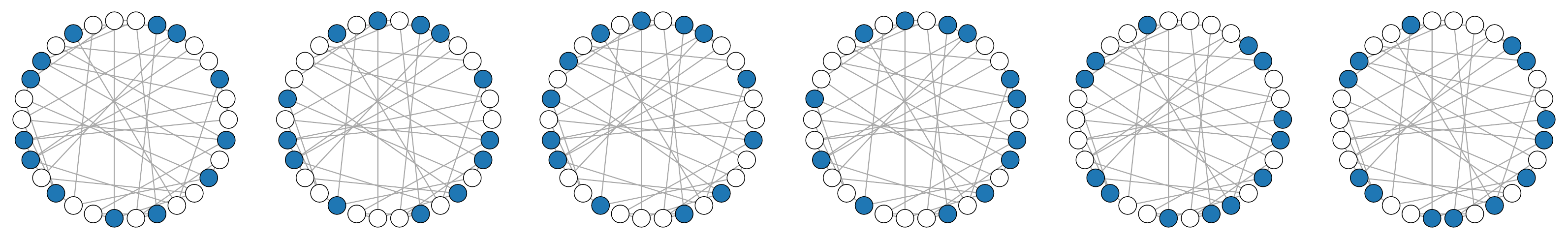}
        \caption{The obtained solutions by CPRA-PI-GNN solver for the MIS problem on a RRG with $30$ nodes and the degree $d=3$. Blue nodes represent the independent set.}
        \label{fig:small_mis_solutions_2x3}
    \end{figure}
    \begin{figure*}[tb]
    \begin{minipage}[c]{0.5\textwidth}
        \centering
            \begin{tabular}{c|ccc}
            \toprule & \multicolumn{3}{c}{MIS ($d=20$)}                     \\
            \cmidrule(r){2-4} 
            Method $\{\# \mathrm{Runs}\}$   & Time (s) & $\bar{\mathrm{ApR}}$ & DScore \\
            \midrule Greedy $\{300\}$  & $\underline{\mathbf{8}}$    & $0.715$ & $0.239$              \\
            PI-GNN $\{300\}$  & $13{,}498$           & $0.712$ & $0.238$              \\
            CRA $\{300\}$              &  $15{,}136$          & $0.923$  & $0.248$              \\
            \cmidrule(r){1-4}
            CPRA ($\nu=0.0$) $\{1\}$   &  $95$   & $0.873$      & $0.019$               \\
            CPRA ($\nu=0.2$) $\{1\}$    &  $154$  & $\underline{\mathbf{0.936}}$      & $\underline{\mathbf{0.260}}$               \\
            CPRA ($\nu=0.4$) $\{1\}$    &  $154$  & $0.900$      & $0.251$               \\
            CPRA ($\nu=0.6$) $\{1\}$     & $149$  & $0.852$      & $0.257$               \\
            \bottomrule
        \end{tabular}
    \end{minipage}
    \hfill
    \begin{minipage}[c]{0.5\textwidth}
        \centering
        \includegraphics[width=0.8\columnwidth]{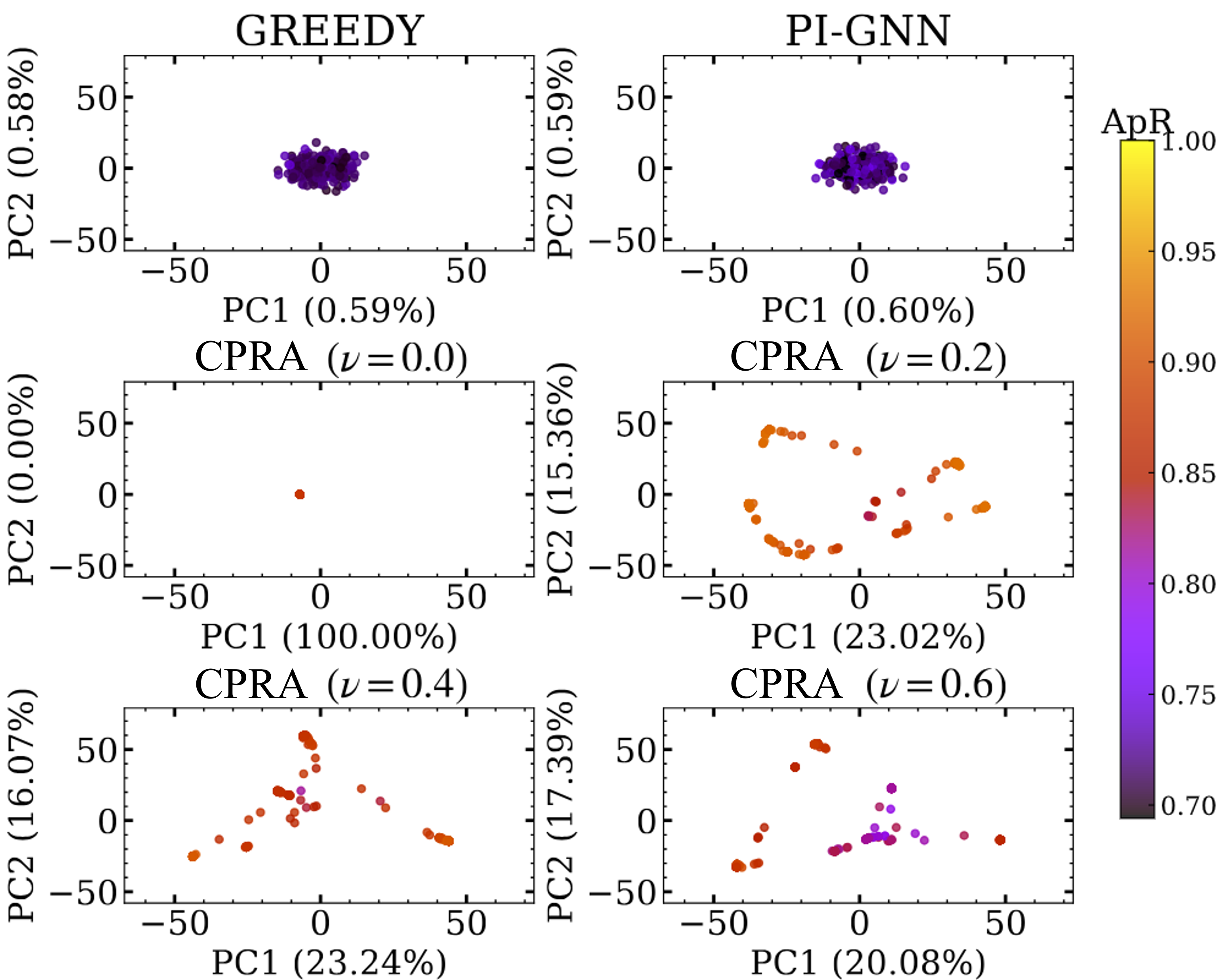}
    \end{minipage}
    \caption{(Left Table) shows  runtime (Time), average $\mathrm{ApR}$ ($\bar{\mathrm{ApR}}$), DScore  for each method on MIS problems with a node degree $d=20$. (Right Figure) shows the distribution of solutions in a $2$-dimensional space using PCA with varying $\nu$.}
    \label{fig:mis-variation-diverse}
    \end{figure*}
    \begin{figure*}[tb]
    \begin{minipage}[c]{0.5\textwidth}
        \centering
            \begin{tabular}{c|ccc}
            \toprule & \multicolumn{3}{c}{MaxCut G14}                     \\
            \cmidrule(r){2-4} 
            Method $\{\# \mathrm{Runs}\}$   & Time (s) & $\bar{\mathrm{ApR}}$ & DScore \\
            \midrule Greedy $\{1{,}000\}$  & $\underline{\mathbf{87}}$    & $0.936$ & $0.479$              \\
            PI-GNN $\{1{,}000\}$  & $25{,}871$           & $0.963$ & $0.499$              \\
            CRA $\{1{,}000\}$              &  $48{,}639$          & $0.988$  & $0.499$              \\
            \cmidrule(r){1-4}
            CPRA ($\nu=0.0$) $\{1\}$   &  $144$   & $0.977$      & $0.497$               \\
            CPRA ($\nu=0.4$) $\{1\}$    & $138$    & $\underline{\mathbf{0.991}}$      & $0.501$               \\
            CPRA ($\nu=0.8$) $\{1\}$    &  $141$  & $0.989$      & $0.501$               \\
            CPRA ($\nu=1.2$) $\{1\}$     & $147$  & $0.985$      & $\underline{\mathbf{0.502}}$               \\
            \bottomrule
        \end{tabular}
    \end{minipage}
    \hfill
    \begin{minipage}[c]{0.5\textwidth}
        \centering
        \includegraphics[width=0.8\columnwidth]{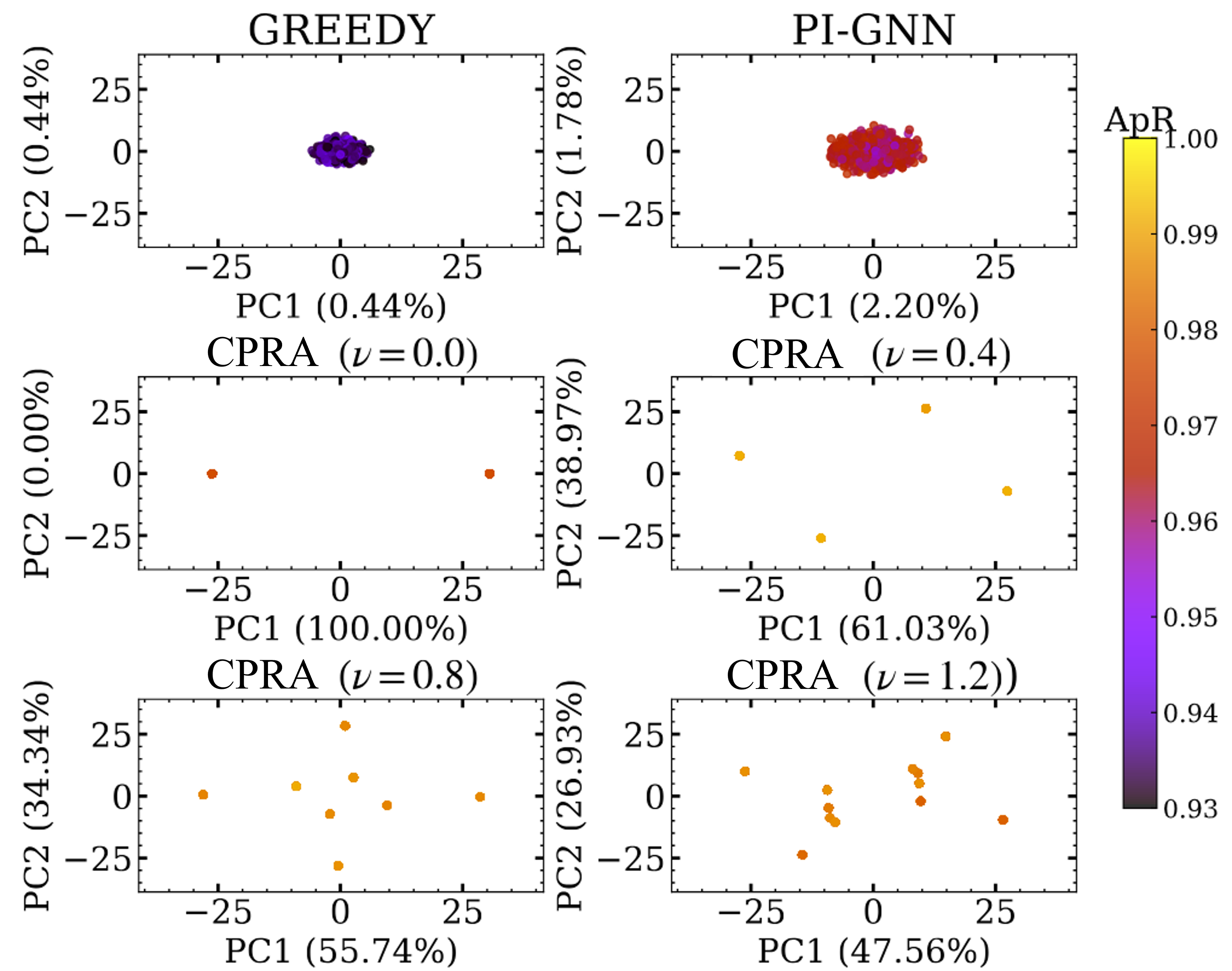}
    \end{minipage}
    \caption{(Left Table) shows  runtime (Time), average $\mathrm{ApR}$ ($\bar{\mathrm{ApR}}$), DScore  for each method on MaxCut G14, with error representing the standard deviations of $5$ random seeds. (Right Figure) shows the distribution of solutions in a $2$-dimensional space using PCA with varying $\nu$.}
    \label{fig:maxcut-variation-diverse}
    \end{figure*}
    We extend the investigation to large-scale RRG with $10{,}000$ nodes and a node degree $d=20$, which is known for its optimization challenges \citep{angelini2023modern}.
    These experiments investigate how the quality of variation-diversified solutions depends on the parameter $\nu$, using a fixed number of shots $S=300$.
    Fig.~ \ref{fig:mis-variation-diverse} (Right) shows a low dimensional visualization of the normalized solutions $\{P_{:s}\}_{s=1}^{300}$ using two-dimensional principal component analysis (PCA) mapping. 
    The two principal components with the highest contribution rates are selected for different parameters $\nu=0.0, 0.2, 0.4, 0.6$.
    These results indicate that increasing parameter $\nu$ leads to more diverse solutions, with the solution space becoming increasingly separated in the high-contribution region. 
    Table in Fig.~\ref{fig:mis-variation-diverse} measures the computation time, $\bar{\mathrm{ApR}}$, and DScore when the parallel execution number $S=300$ using different random seeds. The results show that although the time of CPRA-PI-GNN solver takes longer than executing the greedy algorithm multiple times, both ApR and DScore reach their maximum at $\nu=0.2$, yielding the highest quality variation-diversified solutions. 
    Furthermore, increasing parameter $\nu$ enhances the exploration capability of GNN, leading to better solutions than those obtained by conventional PI-GNN and CRA-PI-GNN; see Appendix \ref{subsec:add-results-mis-variational}.
    
\paragraph{MaxCut Problems.}
    Next, we evaluate the ability to find variation-diversified solutions in the G14 instance of Gset, which primarily has four-clustered solution space.
    We set the number of shot $S=1{,}000$ in Eq.~\eqref{eq:CPRA-loss-function-GNN-variation}.
    Fig.~\ref{fig:maxcut-variation-diverse} (Right) demonstrates that CPRA-PI-GNN solver can capture four-clustered solutions beyond a certain value of $\nu$. 
    Table in Fig.~\ref{fig:maxcut-variation-diverse} measures the computation time, $\bar{\mathrm{ApR}}$, and DScore when the parallel execution number $S=1{,}000$ is performed using different random seeds. 
    The results show that although the runtime of CPRA-PI-GNN solver is slower compared to executing the greedy algorithm multiple times, ApR and DScore reach their maximum at $\nu=0.4$ and $\nu=1.2$, respectively. Additionally, similar to MIS problems, exploration enhancement is consistent across various instances of Gset. For further details; see Appendix \ref{subsec:add-results-maxcut-g14}.

\section{Conclusion}\label{sec:conclusion}
    This study introduces the CPRA framework for UL-based solvers designed to efficiently find penalty-diversified and variation-diversified solutions within a single training process. 
    Our numerical experiments demonstrate that CPRA can produce penalty-diversified and variation-diversified solutions while maintaining a comparable number of parameters and runtime to conventional UL-based solvers that generate only a single solution. 
    This approach not only enhances the computational efficiency in finding these diversified solutions but also improves the search capabilities, leading to higher-quality solutions compared to existing UL-based solvers that find a single solution and greedy algorithms.

\subsubsection*{Acknowledgments}
We would like to thank Hisanao Akima and Nobuo Namura. We also appreciate the constructive feedback provided by the anonymous reviewers and the action editor. This work was supported by Fujitsu Limited.

\bibliography{ref}
\bibliographystyle{tmlr}

\newpage
\appendix
\section{Derivations}\label{sec:derivation-theorem}
\subsection{Proof of Theorem 3.1}\label{subsec:proof-theorem-31}
Following the proof of \citet{ichikawa2023controlling}, we show Theorem \ref{theorem:limit-gamma} based on following three lemmas.
\begin{lemma}
\label{lem:A1}
For any even natural number $\alpha=2, 4, \ldots$, the function $\phi(p) = 1 - (2p-1)^{\alpha}$ defined on $[0, 1]$ achieves its maximum value of $1$ when $p=1/2$ and its minimum value of $0$ when $p=0$ or $p=1$. 
\end{lemma}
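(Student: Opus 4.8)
The plan is to reduce the statement to an elementary fact about even powers on $[-1,1]$ via the affine substitution $u = 2p-1$. First I would observe that as $p$ ranges over $[0,1]$, the variable $u = 2p-1$ ranges over $[-1,1]$, with $p = 1/2$ corresponding to $u = 0$ and $p \in \{0,1\}$ corresponding to $u \in \{-1,1\}$. Under this change of variables $\phi(p) = 1 - u^{\alpha}$, so it suffices to analyze $\psi(u) = 1 - u^{\alpha}$ on $[-1,1]$.

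Next, since $\alpha$ is even, $u^{\alpha} = |u|^{\alpha} \ge 0$ for all $u$, with equality if and only if $u = 0$; hence $\psi(u) \le 1$ with equality exactly at $u = 0$, i.e., $\phi(p) \le 1$ with equality exactly at $p = 1/2$. Moreover $t \mapsto t^{\alpha}$ is nondecreasing on $[0,1]$, so $|u| \le 1$ implies $|u|^{\alpha} \le 1$, with equality if and only if $|u| = 1$; hence $\psi(u) \ge 0$ with equality exactly at $u \in \{-1,1\}$, i.e., $\phi(p) \ge 0$ with equality exactly at $p \in \{0,1\}$. Evaluating, $\phi(1/2) = 1 - 0 = 1$ and $\phi(0) = \phi(1) = 1 - 1 = 0$, which establishes the claimed extreme values and the claimed extremizers.

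Alternatively, one can argue by single-variable calculus: $\phi$ is differentiable on $[0,1]$ with $\phi'(p) = -2\alpha(2p-1)^{\alpha-1}$, whose only zero in $[0,1]$ is $p = 1/2$ (since $\alpha - 1 \ge 1$), and comparing the interior critical value $\phi(1/2) = 1$ with the boundary values $\phi(0) = \phi(1) = 0$ pins down the global maximum and minimum. I expect no genuine obstacle here; the only place the hypothesis that $\alpha$ is even is used is to guarantee that $u^{\alpha}$ is nonnegative and attains its maximum on $[-1,1]$ at both endpoints $u = \pm 1$, which is precisely what makes $p = 0$ and $p = 1$ both minimizers.
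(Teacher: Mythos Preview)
Your argument is correct. Your primary route is slightly different from the paper's: you make the affine substitution $u=2p-1$ and reduce to the elementary observation that for even $\alpha$ the map $u\mapsto u^{\alpha}=|u|^{\alpha}$ on $[-1,1]$ is minimized at $0$ and maximized at $\pm 1$, avoiding calculus entirely. The paper instead argues via derivatives, locating the interior critical point at $p=1/2$, checking the second derivative for concavity, and then invoking the symmetry $\phi(p)=\phi(1-p)$ to conclude the minimum is at the endpoints. Your alternative calculus sketch is essentially the paper's approach, and in fact your derivative $\phi'(p)=-2\alpha(2p-1)^{\alpha-1}$ is the correct general formula; the paper's stated $\phi'(p)=-2\alpha(2p-1)$ and $\phi''(p)=-4\alpha$ are only literally right for $\alpha=2$, though the conclusions (unique critical point at $1/2$, concavity) remain valid. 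Your substitution argument is cleaner for arbitrary even $\alpha$ and makes the role of the evenness hypothesis fully transparent.
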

\begin{proof}
The derivative of $\phi(p)$ relative to $p$ is $\phi^{\prime}(p) = -2\alpha(2p-1)$, which is zero when $p=1/2$. 
This is a point where the function is maximized because the second derivative $\phi^{\prime \prime}(p) = -4\alpha \le 0$.
In addition, this function is concave and symmetric relative to $p=1/2$ because $\alpha$ is an even natural number, i.e., $\phi(p)=\phi(1-p)$, thereby achieving its minimum value of $0$ when $p=0$ or $p=1$.
\end{proof}
\begin{lemma}
\label{lem:A2}
For any even natural number $\alpha=2, 4, \ldots$ and a matrix $P\in [0, 1]^{N\times S}$, if $\lambda \to +\infty$, minimizing the penalty term $\Phi(P;\gamma)= \gamma \sum_{s=1}^{S}\sum_{i=1}^{N} (1-(2P_{is}-1)^{\alpha})= \gamma \sum_{s=1}^{S}\sum_{i=1}^{N} \phi(P_{is}; \alpha)$ enforces that the components of $P_{is}$ must be either $0$ or $1$ and, if $\gamma \to - \infty$, the penalty term enforces $P=\B{1}_{N}\B{1}_{N}^{\top}/2$.
\end{lemma}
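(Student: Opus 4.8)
The plan is to obtain Lemma~\ref{lem:A2} directly from the pointwise characterisation in Lemma~\ref{lem:A1}, exploiting that the penalty $\Phi(P;\gamma)$ is a $\gamma$-weighted sum of $NS$ independent one‑variable terms $\phi(P_{is};\alpha)$, each constrained to the interval $[0,1]$. So the whole argument reduces to a separable optimisation on the cube $[0,1]^{N\times S}$ plus the elementary facts about $\phi$ already proved.

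First I would fix $\gamma>0$. Since $\gamma$ is a positive constant, $\argmin_{P\in[0,1]^{N\times S}}\Phi(P;\gamma)=\argmin_{P\in[0,1]^{N\times S}}\sum_{s=1}^{S}\sum_{i=1}^{N}\phi(P_{is};\alpha)$. The objective is separable across the coordinates $(i,s)$, and by Lemma~\ref{lem:A1} each summand satisfies $\phi(P_{is};\alpha)\ge 0$; hence the sum is bounded below by $0$, and this bound is attained exactly when $\phi(P_{is};\alpha)=0$ for every $i,s$, i.e.\ when $P_{is}\in\{0,1\}$ for all $i\in[N]$, $s\in[S]$, again by Lemma~\ref{lem:A1}. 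Thus the minimiser set of $\Phi(\cdot;\gamma)$ is precisely $\{0,1\}^{N\times S}$, which is the assertion that the penalty enforces binary entries. Symmetrically, for $\gamma<0$ I would write $\Phi(P;\gamma)=-|\gamma|\sum_{s,i}\phi(P_{is};\alpha)$, so minimising $\Phi$ is equivalent to maximising the separable sum $\sum_{s,i}\phi(P_{is};\alpha)$; by Lemma~\ref{lem:A1} each term is at most $1$ with equality iff $P_{is}=1/2$, so the sum is at most $NS$ and this value is attained only at the half‑integral matrix $P$ with $P_{is}=1/2$ for all $i,s$ (the matrix written $\tfrac12\B{1}_{N}\B{1}_{N}^{\top}$ in the statement). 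Uniqueness follows from the strict concavity of $\phi$ noted in the proof of Lemma~\ref{lem:A1} (second derivative $-4\alpha<0$), so $\phi(p;\alpha)=1$ forces $p=1/2$.

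Finally I would make the ``$\gamma\to\pm\infty$'' wording precise, since strictly speaking the minimiser set of $\Phi(\cdot;\gamma)$ in isolation depends only on $\sign(\gamma)$, not on $|\gamma|$. The magnitude enters only once $\Phi$ is added to the bounded objective $\sum_{s}\hat l(P_{:s};C_{s},\B\lambda_{s})$: using the boundedness hypothesis, a diverging $|\gamma|$ makes the penalty term dominate, so every near‑minimiser of the total loss must lie arbitrarily close to the $\sign(\gamma)$‑dependent minimiser set identified above. That coupling of a bounded term with a penalty of diverging weight is the only genuinely delicate point, and it is exactly what the proof of Theorem~\ref{theorem:limit-gamma} invokes Lemma~\ref{lem:A2} for; the content of Lemma~\ref{lem:A2} itself is the routine separable computation above, and I do not expect any obstacle beyond carefully tracking the boundedness assumption so that the two effects can be combined cleanly in the theorem.
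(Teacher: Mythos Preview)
Your proposal is correct and follows essentially the same route as the paper: both arguments reduce the matrix optimisation to $NS$ independent scalar problems via separability and then invoke Lemma~\ref{lem:A1} coordinatewise to locate the minimisers for each sign of $\gamma$. Your version is considerably more careful---in particular your remark that the minimiser set of $\Phi(\cdot;\gamma)$ alone depends only on $\sign(\gamma)$, with $|\gamma|$ mattering only once the bounded objective is added, is a clarification the paper leaves implicit until the proof of Theorem~\ref{theorem:limit-gamma}.
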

\begin{proof}
From Lemma \ref{lem:A1}, as $\gamma \to +\infty$, the case where $\phi(P_{is})$ becomes minimal occurs when, for each $i, s$, $p_{is}=0$ or $p_{i}=1$. 
In addition, as $\gamma \to -\infty$, the case where $\phi(p; \gamma)$ is minimized occurs when, for each $i$, $P_{is}$ reaches its maximum value with $P_{is}=1/2$.
\end{proof}

\begin{lemma}
\label{lem:A3}
$\Phi(P; \gamma)= \gamma \sum_{s=1}^{S}\sum_{i=1}^{N} (1-(2p_{i}-1)^{\alpha})= \gamma \sum_{s=1}^{S}\sum_{i=1}^{N} \phi(p_{i}; \alpha)$ is concave when $\lambda$ is positive and is a convex function when $\lambda$ is negative.
\end{lemma}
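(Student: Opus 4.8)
The plan is to exploit the fact that $\Phi(P;\gamma)$ is a separable sum: it is $\gamma$ times a sum of $NS$ copies of the single-variable building block $\phi(p;\alpha) = 1-(2p-1)^{\alpha}$, where each copy depends on exactly one matrix entry $P_{is}$. Convexity and concavity of such a separable function on the box $[0,1]^{N\times S}$ are governed entirely by the one-dimensional curvature of $\phi$, so I would first settle the sign of $\phi''$ on $[0,1]$ and then lift the conclusion to the matrix argument. (I read the ``$\lambda$'' in the statement as the penalty weight $\gamma$ appearing in the definition of $\Phi$, consistent with its role throughout the section.)

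First I would differentiate the building block twice. From $\phi(p;\alpha)=1-(2p-1)^{\alpha}$ one obtains $\phi'(p)=-2\alpha(2p-1)^{\alpha-1}$ and $\phi''(p)=-4\alpha(\alpha-1)(2p-1)^{\alpha-2}$. The decisive observation is that $\alpha$ is an even natural number, so the exponent $\alpha-2$ is even and nonnegative, which forces $(2p-1)^{\alpha-2}\ge 0$ for every $p\in[0,1]$; combined with $\alpha(\alpha-1)>0$ this yields $\phi''(p)\le 0$ on the whole interval. Hence each individual term $\phi(P_{is};\alpha)$ is concave as a function of $P_{is}$. The case $\alpha=2$ gives the constant $\phi''\equiv-8$, and for $\alpha\ge 4$ the second derivative vanishes only at $p=1/2$, which is harmless for concavity.

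Next I would assemble these pieces at the level of the matrix variable. Because $\Phi$ is separable, its Hessian with respect to the $NS$ entries of $P$ is diagonal, with the entry corresponding to $P_{is}$ equal to $\gamma\,\phi''(P_{is})$. A diagonal matrix is negative semidefinite exactly when all its diagonal entries are nonpositive, and positive semidefinite exactly when all are nonnegative. Since $\phi''\le 0$ throughout $[0,1]$, for $\gamma>0$ every diagonal entry $\gamma\,\phi''(P_{is})$ is nonpositive, so the Hessian is negative semidefinite and $\Phi$ is concave; for $\gamma<0$ the sign flips, every diagonal entry is nonnegative, the Hessian is positive semidefinite, and $\Phi$ is convex. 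This is precisely the claimed dichotomy.

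I do not anticipate a genuine obstacle; the argument is essentially the second-derivative test combined with separability. The only point requiring a little care is the reliance on $\alpha$ being even---without it, $(2p-1)^{\alpha-2}$ could change sign and $\phi$ would fail to be concave on all of $[0,1]$---together with the boundary behavior at $p=1/2$ for $\alpha\ge 4$, where $\phi''$ touches zero but never becomes positive, so only weak (semidefinite) concavity and convexity are asserted, which is all that the subsequent penalty-method argument requires.
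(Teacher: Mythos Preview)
Your proof is correct and follows the same strategy as the paper---separability plus the one-variable second-derivative test---so there is nothing to fix. In fact your computation $\phi''(p)=-4\alpha(\alpha-1)(2p-1)^{\alpha-2}$ is the right expression for general even $\alpha$, whereas the paper records the second derivative as $-4\gamma\alpha$, which is literally only the $\alpha=2$ case; your handling of the $(2p-1)^{\alpha-2}\ge 0$ sign via the parity of $\alpha$ is exactly what is needed to cover all admissible $\alpha$.
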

\begin{proof}
Note that $\Phi(P; \gamma) = \gamma \sum_{s=1}^{S}\sum_{i=1}^{N} \phi(P_{is}; \alpha) = \gamma \sum_{i=1}^{N} (1-(2P_{is}-1)^{\alpha})$ is separable across its components $P_{is}$. 
Thus, it is sufficient to prove that each $\gamma \phi(P_{is}; \alpha)$ is concave or convex in $P_{is}$ because the sum of the concave or convex functions is also concave (and vice versa). Therefore, we consider the second derivative of $\gamma \phi_{i}(P_{is}; \alpha)$ with respect to $P_{is}$:
    \begin{equation*}
        \gamma \frac{d^{2} \phi_{i}(P_{is}; \alpha)}{dP_{is}^{2}} = - 4 \gamma \alpha
    \end{equation*}
Here, if $\gamma > 0$, the second derivative is negative for all $p_{i} \in [0, 1]$, and this completes the proof that $\Phi(P; \gamma, \alpha)$ is a concave function when $\gamma$ is positive over the domain $\B{p} \in [0, 1]^{N}$
\end{proof}

\begin{theorem}
\label{theorem:limit-gamma-2}
    Under the assumption that the objective function $\sum_{s} \hat{l}(P_{:s}; C_{s}, \B{\lambda}_{s})$ is bounded within the domain $[0, 1]^{N \times S}$, 
    for any $S \in \mab{N}$, $C_{s} \in \mac{C}_{S}$ and $\B{\lambda}_{s} \in \Lambda_{S}$, as $\gamma \to +\infty$, each column $P_{:s}^{\ast}$ of the soft solutions $P^{\ast} \in \mathrm{argmin}_{P} \hat{R}(P;\mac{C}_{S}, \Lambda_{S}, \gamma)$ converges to the original solutions $\B{x}^{\ast} \in \mathrm{argmin}_{\B{x}} l(\B{x}; C_{s}, \B{\lambda}_{s})$. 
    In addition, as $\gamma \to -\infty$, the loss function $\hat{R}(P; \mac{C}_{S}, \Lambda_{S})$ becomes convex and the soft solution $\B{1}_{N} \B{1}_{N}^{\top}/2 = \mathrm{argmin}_{P} \hat{R}(P; \mac{C}_{S}, \Lambda_{S}, \gamma)$ is unique.
\end{theorem}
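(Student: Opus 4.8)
The plan is to exploit a structural simplification: the base CPRA objective $\hat{R}(P;\mac{C}_{S},\Lambda_{S},\gamma)=\sum_{s=1}^{S}\hat{l}(P_{:s};C_{s},\B{\lambda}_{s})+\gamma S(P)$ is \emph{separable across the $S$ columns}. Since $S(P)=\sum_{i=1}^{N}\sum_{s=1}^{S}\phi(P_{is})$ with $\phi(p)=1-(2p-1)^{\alpha}$, one has $\hat{R}(P;\mac{C}_{S},\Lambda_{S},\gamma)=\sum_{s=1}^{S}g_{s}(P_{:s};\gamma)$ where $g_{s}(\B{p};\gamma)\delequal\hat{l}(\B{p};C_{s},\B{\lambda}_{s})+\gamma\sum_{i}\phi(p_{i})$, so $\mathrm{argmin}_{P}\hat{R}$ is exactly the Cartesian product of the column-wise minimisers $\mathrm{argmin}_{\B{p}\in[0,1]^{N}}g_{s}(\B{p};\gamma)$. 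Thus the theorem reduces to the single-instance statement, and I would establish it for one column, reproducing the argument of \citet{ichikawa2023controlling} via Lemmas~\ref{lem:A1}--\ref{lem:A3}; the column index is then carried along for free.

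For $\gamma\to+\infty$, the plan is: (i) use Lemma~\ref{lem:A1} to get $\phi\ge 0$ on $[0,1]$ with zero set exactly $\{0,1\}$, so that with $M_{s}\delequal\sup_{[0,1]^{N}}|\hat{l}(\cdot;C_{s},\B{\lambda}_{s})|<\infty$ the penalised value $g_{s}(\B{p};\gamma)$ is at most $M_{s}$ at any binary point (where $\hat{l}=l$ by the relaxation-consistency recalled in the Background, and $\phi=0$) but blows up like $\gamma\sum_{i}\phi(p_{i})$ elsewhere; hence the minimiser $\B{p}^{\ast}(\gamma)$ satisfies $\sum_{i}\phi(p^{\ast}_{i}(\gamma))=O(1/\gamma)\to 0$, and Lemma~\ref{lem:A2} (coordinate-wise) pins $\B{p}^{\ast}(\gamma)$ to a shrinking neighbourhood of $\{0,1\}^{N}$. (ii) Close with a compactness/$\Gamma$-convergence step: along $\gamma_{k}\to\infty$ extract a subsequential limit $\B{x}^{\infty}\in\{0,1\}^{N}$ of $\B{p}^{\ast}(\gamma_{k})$, compare $g_{s}$ at $\B{p}^{\ast}(\gamma_{k})$ against an arbitrary fixed binary competitor, pass to the limit using $\hat{l}=l$ on binary points (and lower semicontinuity of $\hat{l}$), and conclude $\B{x}^{\infty}\in\mathrm{argmin}_{\B{x}}l(\B{x};C_{s},\B{\lambda}_{s})$; since this set is finite, $P^{\ast}_{:s}$ converges to it.

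For $\gamma\to-\infty$, I would argue: by Lemma~\ref{lem:A1}, $S$ is maximised on $[0,1]^{N\times S}$ \emph{only} at the all-$1/2$ matrix $P^{\diamond}$; for $\gamma<0$, Lemma~\ref{lem:A3} makes $\gamma S$ convex, and dividing by $|\gamma|$ shows $\hat{R}/|\gamma|=\frac{1}{|\gamma|}\sum_{s}\hat{l}(\cdot)-S\to -S$ uniformly (since $\hat{l}$ is bounded), which is the precise meaning of ``$\hat{R}$ becomes convex''; if moreover $\hat{l}\in C^{2}$ with bounded Hessian, then $\nabla^{2}\hat{R}=\nabla^{2}\hat{l}+\gamma\nabla^{2}S\succ 0$ for $|\gamma|$ large enough, so $\hat{R}$ is strictly convex with a unique minimiser. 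To identify it, for $\|P-P^{\diamond}\|\ge\epsilon$ compactness gives $S(P)\le NS-c(\epsilon)$ with $c(\epsilon)>0$, whence $\hat{R}(P)-\hat{R}(P^{\diamond})\ge|\gamma|c(\epsilon)-2\sum_{s}M_{s}\to+\infty$; therefore $\mathrm{argmin}_{P}\hat{R}\to P^{\diamond}$, the claimed unique half-integral solution.

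The hard part is not the penalty algebra — that is exactly Lemmas~\ref{lem:A1}--\ref{lem:A3} — but the two limiting steps, and in particular making the $\gamma\to+\infty$ statement rigorous under only the boundedness hypothesis on $\hat{l}$: one really needs lower semicontinuity of $\hat{l}$ (hence existence of $\mathrm{argmin}_{P}\hat{R}$), which is automatic for the continuous GNN/polynomial relaxations considered here but should be flagged explicitly. A second, minor wrinkle is that for $\alpha>2$ the map $\phi$ is concave but not \emph{strictly} concave ($\phi''$ vanishes at $p=1/2$), so I would deduce uniqueness of the $\gamma\to-\infty$ limit from Lemma~\ref{lem:A1} (unique maximiser of $S$) rather than from strict concavity, invoking the Hessian comparison above only if an honest finite-$\gamma$ convexity claim is wanted.
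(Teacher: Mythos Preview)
Your proposal is correct and follows essentially the same route as the paper: both exploit Lemmas~\ref{lem:A1}--\ref{lem:A3} to argue that the entropy term dominates in the two limits, forcing the minimiser to $\{0,1\}^{N\times S}$ (then separating over columns) as $\gamma\to+\infty$ and to the all-$1/2$ matrix as $\gamma\to-\infty$. Your version is in fact more careful than the paper's---you make the ``domination'' step quantitative via $\sum_i\phi(p_i^{\ast})=O(1/\gamma)$ and a compactness extraction, and you correctly flag two points the paper glosses over: the implicit need for lower semicontinuity of $\hat{l}$ to guarantee existence of minimisers, and the fact that for $\alpha>2$ the second derivative $\phi''$ vanishes at $p=1/2$ (the paper's computation $\phi''=-4\alpha$ in Lemmas~\ref{lem:A1} and~\ref{lem:A3} is only valid for $\alpha=2$), so uniqueness at $\gamma\to-\infty$ should indeed come from the unique maximiser of $S$ rather than strict concavity of $\phi$.
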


\begin{proof}
As $\lambda \to +\infty$, the penalty term $\Phi(P; \B{\lambda})$ dominates the loss function $\hat{R}(\B{p}; C, \B{\lambda}, \gamma)$. 
According to Lemma \ref{lem:A2}, this penalty term forces the optimal solution $P^{\ast}$ to have components $p_{is}^{\ast}$ that are either $0$ or $1$ because any nonbinary value will result in an infinitely large penalty.
This effectively restricts the feasible region to the vertices of the unit hypercube, which correspond to the binary vector in $\{0, 1\}^{NS}$.
Thus, as $\lambda \to +\infty$, the solutions to the relaxed problem converge to $X = \mathrm{argmin}_{X\in\{0, 1\}^{N\times S}} R(X_{:s}; C_{s}, \B{\lambda}_{s})$. 
Futhermore, $\mathrm{argmin}_{X\in\{0, 1\}^{N\times S}} R(X_{:s}; C_{s}, \B{\lambda}_{s})$ is separable as $\sum_{s=1}^{S} \mathrm{argmin}_{\B{x}\in\{0, 1\}^{N}} l(\B{x}; C_{s}, \B{\lambda}_{s})$, which indicate that each columns $X_{:s}^{\ast} \in \mathrm{argmin}_{\B{x} \in \{0, 1\}^{N}} l(\B{x}; C_{s}, \B{\lambda}_{s})$.
As $\lambda \to - \infty$, the penalty term $\Phi(\B{p}; \alpha)$ also dominates the loss function $\hat{r}(\B{p}; C, \B{\lambda}, \gamma)$ and the $\hat{r}(\B{p}; C, \B{\lambda})$ convex function from Lemma \ref{lem:A3}.
According to Lemma \ref{lem:A2}, this penalty term forces the optimal solution $P^{\ast}=\B{1}_{N}\B{1}_{N}/2$. 
\end{proof}

\subsection{Proof of Proposition \ref{proposition:relaxation-max-sum-hamming}}\label{sec:proof-proposition32}
In this section, we derive the following Proposition.
\begin{proposition}
For binary sequences $\{\B{x}_{s} \}_{s=1}^{S}$, $\forall s,~\B{x}_{s}\in \{0, 1\}^{N}$, following equality holds
\begin{equation}
    \label{eq:diverse-generalization-2}
    S^{2}\sum_{i=1}^{N} \mab{V}\mab{A}\mab{R}\left[\{\B{x}_{s, i}\}_{1 \le s \le S} \right] = \sum_{s<l} d_{H}(\B{x}_{s}, \B{x}_{l}).
\end{equation}
where the right-hand side of Eq.~\eqref{eq:diverse-generalization-2} is the max-sum Hamming distance.
\end{proposition}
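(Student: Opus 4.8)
The plan is to reduce the claimed identity to a coordinatewise computation plus the standard algebraic identity relating empirical variance to pairwise squared differences. First I would fix a coordinate $i \in [N]$ and set $a_s = x_{s,i} \in \{0,1\}$ for $s \in [S]$. The key algebraic fact I would invoke is that, for any real numbers $a_1, \dots, a_S$,
\[
\sum_{s<l}(a_s - a_l)^2 \;=\; S\sum_{s=1}^S a_s^2 - \Bigl(\sum_{s=1}^S a_s\Bigr)^2 \;=\; S^2\,\mab{V}\mab{A}\mab{R}\bigl[\{a_s\}_{1\le s\le S}\bigr],
\]
which follows by expanding the square on the left (each $a_k^2$ appears $S-1$ times, each cross term $-2a_s a_l$ once) and recognizing that the middle expression equals $S^2$ times $\tfrac1S\sum_s a_s^2 - \bigl(\tfrac1S\sum_s a_s\bigr)^2$. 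Applied with $a_s = x_{s,i}$, this gives $S^2\,\mab{V}\mab{A}\mab{R}[\{x_{s,i}\}_{1\le s\le S}] = \sum_{s<l}(x_{s,i}-x_{l,i})^2$ for every $i$.

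Next I would exploit binariness: since $x_{s,i}, x_{l,i} \in \{0,1\}$, the difference $x_{s,i}-x_{l,i}$ lies in $\{-1,0,1\}$, so $(x_{s,i}-x_{l,i})^2 = \B{1}[x_{s,i}\neq x_{l,i}]$. Summing the coordinatewise identity over $i\in[N]$ and swapping the two finite sums yields
\[
S^2\sum_{i=1}^N \mab{V}\mab{A}\mab{R}\bigl[\{x_{s,i}\}_{1\le s\le S}\bigr] \;=\; \sum_{i=1}^N\sum_{s<l}\B{1}[x_{s,i}\neq x_{l,i}] \;=\; \sum_{s<l}\sum_{i=1}^N \B{1}[x_{s,i}\neq x_{l,i}] \;=\; \sum_{s<l} d_H(\B{x}_s,\B{x}_l),
\]
where the last step is the definition of the Hamming distance $d_H(\B{a},\B{b}) = \sum_{i=1}^N \B{1}[a_i\neq b_i]$ recalled in the Notation section. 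This is precisely Eq.~\eqref{eq:diverse-generalization-2}.

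There is no substantive obstacle here. The only points deserving care are getting the constant right in the variance-to-pairwise-differences identity — the factor matching the unnormalized double sum $\sum_{s<l}$ is exactly $S^2$, consistent with the $S^2$ on the left-hand side — and noting that the collapse $(x_{s,i}-x_{l,i})^2 = \B{1}[x_{s,i}\neq x_{l,i}]$ genuinely uses that the entries are $0/1$ valued and would fail for general reals. I would present the argument in the order above: the coordinatewise variance identity, the binary reduction, and then the interchange of summation over coordinates and pairs.
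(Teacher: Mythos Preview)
Your proof is correct and follows essentially the same approach as the paper: both reduce to the coordinatewise identity $\sum_{s<l}(a_s-a_l)^2 = S\sum_s a_s^2 - (\sum_s a_s)^2 = S^2\,\mab{V}\mab{A}\mab{R}[\{a_s\}]$ combined with the observation that $(x_{s,i}-x_{l,i})^2$ equals the disagreement indicator for binary entries. The paper's version expands both sides separately and shows they meet at $S\sum_i\bigl(\sum_s X_{s,i}^2 - \tfrac1S\sum_{s,l}X_{s,i}X_{l,i}\bigr)$, whereas you invoke the variance--pairwise-differences identity directly; the mathematical content is identical.
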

\begin{proof}
    We first note that, for binary vectors $\B{x}_{s}, \B{x}_{l} \in \{0, 1\}^{N}$, the Hamming distance is expressed as follows:
    \begin{equation}
        d_{H}(\B{x}_{s}, \B{x}_{l}) = \sum_{i=1}^{N} \left(x_{s,i}^{2}+x_{l, i}^{2} - 2 x_{s, i} x_{l, i} \right).
    \end{equation}
    Based on this expression, the diversity metric $\sum_{s<l} d_{H}(X_{:s}, X_{:l})$ can be expanded for a binary matrix $X \in \{0, 1\}^{N \times S}$ as follows:
    \begin{align*}
        \sum_{s<l} d_{H}(X_{:s}, X_{:l}) &= \frac{1}{2} \left(\sum_{s, l} d_{H}(X_{:s}, X_{:l})-\sum_{s} d_{H}^{2}(X_{:s}, X_{:s}) \right) \\
        &= \frac{1}{2} \sum_{i=1}^{N} \sum_{s, l} (X_{:s, i}^{2}+X_{:l, i}^{2} - 2 X_{:s, i} X_{:l, i}) \\
        &= S \sum_{i}^{N}\left(\sum_{s} X_{:s, i}^{2} - \frac{1}{S} \sum_{s, l} X_{:s, i} X_{:l, i}  \right).
    \end{align*}
    On the other hand, the variance of each column in a binary matrix $X$ can be expanded as follows: 
    \begin{align*}
        S^{2} \sum_{i=1}^{N}\mab{V}\mab{A}\mab{R}\left[\{X_{s, i}\}_{1 \le s \le S} \right] &= S \sum_{i=1}^{N} \sum_{s^{\prime}=1}^{S} \left(X_{:s^{\prime}, i} - \frac{1}{S} \sum_{s} X_{:s, i}  \right)^{2} \\ 
        &= S \sum_{i=1}^{N} \sum_{s^{\prime}=1}^{S} \left(X^{2}_{:s^{\prime}, i} -2 X_{:s^{\prime}, i} \frac{\sum_{s} X_{:s, i}}{S} + \frac{\sum_{s,l} X_{:s, i} X_{:l, i}}{S^{2}}   \right) \\
        &= S \sum_{i=1}^{N}  \left(\sum_{s^{\prime}} X^{2}_{:s^{\prime}, i} - \frac{2\sum_{s^{\prime}, s}X_{:s^{\prime}, i}  X_{:s, i}}{S}   + \frac{\sum_{s, l} X_{:s, i} X_{:l, i}}{S} \right) \\
        &= S \sum_{i=1}^{N}  \left(\sum_{s^{\prime}} X^{2}_{:s^{\prime}, i} + \frac{1}{S} \sum_{sl} X_{:s, i} X_{:l, i}  \right) \\
        &= \sum_{s<l} d_{H}(X_{:s}, X_{:l}).
    \end{align*}
    By this, we finish the proof.
\end{proof}

\section{Additional Implementation Details}
\subsection{Graph Neural Networks}\label{subsec:graph-neural-network}
A graph neural network (GNN) \citep{gilmer2017neural, scarselli2008graph} is a specialized NN for representation learning of graph-structured data. 
GNNs learn a vectorial representation of each node through two steps. (I) Aggregate step: This step employs a permutation-invariant function to generate an aggregated node feature. 
(II) Combine step: Subsequently, the aggregated node feature is passed through a trainable layer to generate a node embedding, known as `message passing' or `readout phase.' 
Formally, for given graph $G = (V, E)$, where each node feature $\B{h}^{0}_{v} \in \mab{R}^{N^{0}}$ is attached to each node $v \in V$, the GNN iteratively updates the following two steps. First, the aggregate step at each $k$-th layer is defined by
\begin{equation}
    \B{a}_{v}^{k} = \mathrm{Aggregate}_{\theta}^{k}\left(\{h_{u}^{k-1}, \forall u \in \mac{N}_{v} \}\right),
\end{equation}
where the neighorhood of $v \in V$ is denoted as $\mac{N}_{v} = \{u \in V \mid (v, u) \in E \}$, $\B{h}_{u}^{k-1}$ is the node feature of neighborhood, and $\B{a}_{v}^{k}$ is the aggregated node feature of the neighborhood. Second, the combined step at each $k$-th layer is defined by
\begin{equation}
    \B{h}_{v}^{k} = \mathrm{Combine}^{k}_{\theta}(\B{h}_{v}^{k-1}, \B{a}_{v}^{k}),
\end{equation}
where $\B{h}_{v}^{k} \in \mab{R}^{N^{k}}$ denotes the node representation at $k$-th layer. 
The total number of layers, $K$, and the intermediate vector dimension, $N^{k}$, are empirically determined hyperparameters. 
Although numerous implementations for GNN architectures have been proposed, the most basic and widely used GNN architecture is a graph convolutional network (GCN) \citep{scarselli2008graph} given by
\begin{equation}
    \B{h}_{v}^{k} = \sigma \left(W^{k} \sum_{u \in \mac{N}(v)} \frac{\B{h}_{u}^{k-1}}{|\mac{N}(v)|} + B^{k} \B{h}_{v}^{k-1}  \right),
\end{equation}
where $W^{k}$ and $B^{k}$ are trainable parameters, $|\mac{N}(v)|$ serves as normalization factor, and $\sigma: \mab{R}^{N^{k}} \to \mab{R}^{N^{k}}$ is some component-wise nonlinear activation function such as sigmoid or ReLU function. 

\section{Experiment Details}\label{sec:experiment-details}
This section describes the details of the experiments .

\subsection{Architecture of GNNs}\label{subsec:app-architecture-gnns}
We describe the details of the GNN architectures used in our numerical experiments.
For each node $v \in V$, the first convolutional layer takes a node embedding vectors, $\B{h}_{v, \B{\theta}}^{0}$ for each node, yielding feature vectors $\B{h}_{v, \B{\theta}}^{1} \in \mab{R}^{H_{1}}$. 
    Then, the ReLU function is used as a component-wise nonlinear transformation. 
    The second convolutional layer takes the feature vector, $\B{h}^{1}_{\B{\theta}}$, as input, producing a feature vector $\B{h}^{2}_{v, \B{\theta}}\in \mab{R}^{S}$.
    Finally, a sigmoid function is applied to the vector $\B{h}^{2}_{\B{\theta}}$, producing the higher-order array solutions $P_{v:, \B{\theta}} \in [0, 1]^{N \times S}$. 
    Here, for MIS and MaxCut problems, we set $|H_{0}|=\mathrm{int}(N^{0.8})$ as in \citet{schuetz2022combinatorial, ichikawa2023controlling}, and for the DBM problems, we set it to $2{,}500$.
    Across all problems, we set $H_{1}=H_{0}$, and $H_{2}=S$. We conducted all experiments by using V100GPU.

\subsection{Training setting and post-rounding method}\label{subsec:app-training-params}
We use the AdamW \citep{kingma2014adam} optimizer with a learning rate as $\eta =10^{-4}$ and weight decay as $10^{-2}$.
The training the GNNs conducted for a duration of up to $5\times 10^{4}$ epochs with early stopping, which monitors the summarized loss function $\sum_{s=1}^{S} \hat{l}(P_{:, s})$ and penalty term $\Phi(P; \gamma, \alpha)$ with tolerance $10^{-5}$ and patience $10^{3}$.
After the training phase, we apply projection heuristics to round the obtained soft solutions back to discrete solutions using simple projection, where for all $i \in [N], s \in [S]$, we map $P_{\theta, i, s}$ to $0$ if $P_{\theta, i, s} \le 0.5$ and $P_{\theta, i, s}$ to $1$ if $P_{\theta, i, s} > 0.5$. 
Note that due to the annealing, CPRA-PI-GNN solver ensures that the soft solution are nearly binary for all benchmarks, making them robust against the threshold $0.5$ in our experiments.

\subsection{Problem specification}\label{subsec:app-problem-specification}
\begin{table*}[tb]
    \centering
    \caption{The objective functions for the three problems to be studied.}
    \label{tab:objective-functions}
    \begin{tabular}{c|p{270pt}|p{58pt}}
        \toprule
         & Objective Function & Parameters \\
        \midrule
        MIS & $l(\B{x}; G, \lambda) = - \sum_{i \in V} x_{i} + \lambda \sum_{(i,j) \in E} x_{i} x_{j}$ & $|V|=N$  \\
        \midrule
        MaxCut & $l(\B{x}; G) = \sum_{i<j} A_{ij} (2x_{i}x_{j} - x_{i}-x_{j})$ & $A \in \mab{R}^{N\times N}$\\
        \midrule
        DBM & $l(\B{x}; C=\{A, M\}, \B{\lambda})$ $ = - \sum_{ij} A_{ij} x_{ij}$ $+ \lambda_{1} \sum_{i}\mathrm{ReLU}(\sum_{j} x_{ij}-1 )$
        $+ \lambda_{2} \sum_{j}\mathrm{ReLU}(\sum_{i} x_{ij} - 1)$ 
        $+ \lambda_{3} \mathrm{ReLU}(p\sum_{ij} x_{ij} - \sum_{ij} M_{ij} x_{ij})$ \newline $+ \lambda_{4} \mathrm{ReLU}(q\sum_{ij} x_{ij} - \sum_{ij} (1-M_{ij}) x_{ij})$ & $A\in \mab{R}^{N_{1}\times N_{2}}$ \newline $M \in \mab{R}^{N_{1} \times N_{2}}$ \newline $p, q \in \mab{R}$ \\
        \bottomrule
    \end{tabular}
\end{table*}

\paragraph{Maximum independent set problems}
    There are some theoretical results for MIS problems on RRGs with the node degree set to $d$, where each node is connected to exactly $d$ other nodes. 
The MIS problem is a fundamental NP-hard problem \citep{karp2010reducibility} defined as follows.
Given an undirected graph $G(V, E)$,  an independent set (IS) is a subset of nodes $\mac{I} \in V$ where any two nodes in the set are not adjacent.
The MIS problem attempts to find the largest IS, which is denoted $\mac{I}^{\ast}$. 
In this study, $\rho$ denotes the IS density, where $\rho = |\mac{I}|/|V|$.
To formulate the problem, a binary variable $x_{i}$ is assigned to each node $i \in V$. Then the MIS problem is formulated as follows: 
\begin{equation}
    f(\B{x}; G, \lambda) = - \sum_{i \in V} x_{i} + \lambda \sum_{(i,j) \in E} x_{i} x_{j},
\end{equation}
where the first term attempts to maximize the number of nodes assigned $1$, and the second term penalizes the adjacent nodes marked $1$ according to the penalty parameter $\lambda$.
In our numerical experiments, we set $\lambda=2$, following \citet{schuetz2022combinatorial}, no violation is observed as in \citep{schuetz2022combinatorial}. 
First, for every $d$, a specific value $\rho_{d}^{\ast}$, which is dependent on only the degree $d$, exists such that the independent set density $|\mac{I}^{\ast}|/|V|$ converges to $\rho_{d}^{\ast}$ with a high probability as $N$ approaches infinity \citep{bayati2010combinatorial}. 
Second, a statistical mechanical analysis provides the typical MIS density $\rho^{\mathrm{Theory}}_{d}$, 
and we clarify that for $d > 16$, the solution space of $\mac{I}$ undergoes a clustering transition, which is associated with hardness in sampling \citep{barbier2013hard} because the clustering is likely to create relevant barriers that affect any algorithm searching for the MIS $\mac{I}^{\ast}$. 
Finally, the hardness is supported by analytical results in a large $d$ limit, which indicates that, while the maximum independent set density is known to have density $\rho^{\ast}_{d \to \infty} = 2 \log(d)/d$, to the best of our knowledge, there is no known algorithm that can find an independent set density exceeding $\rho^{\mathrm{alg}}_{d \to \infty}=\log(d)/d$ \citep{coja2015independent}.
\paragraph{Diverse bipartite matching (DBM) problems}
    We adopt this CO problem from \citet{ferber2020mipaal, mulamba2020contrastive, mandi2022decision} as a practical example.
    The topologies are sourced from the CORA citation network \citep{sen2008collective}, where each node signifying a scientific publication, is characterized by $1{,}433$ bag-of-words features, and the edges represents represents the likelihood of citation links.
    \citet{mandi2022decision} focused on disjoint topologies, creating $27$ distinct instances. 
    Each instance is composed of $100$ nodes, categorised into two group of $50$ nodes, labeled $N_{1}$ and $N_{2}$. 
    The objective of DBM problems is to find the maximum matching under diversity constraints for similar and different fields. 
    It is formulated as follows:
    \begin{multline}
        \label{eq:matching-cost-function}
        l(\B{x}; C, M, \B{\lambda}) = - \sum_{ij} C_{ij} x_{ij}  + \lambda_{1} \sum_{i=1}^{N_{1}}\mathrm{ReLU}\Big(\sum_{j=1}^{N_{2}} x_{ij}-1 \Big) + \lambda_{2} \sum_{j=1}^{N_{2}}\mathrm{ReLU}\Big(\sum_{i=1}^{N_{1}} x_{ij} - 1\Big) \\
        \\
        + \lambda_{3} \mathrm{ReLU}\Big(p\sum_{ij} x_{ij} - \sum_{ij} M_{ij} x_{ij} \Big) 
        + \lambda_{4} \mathrm{ReLU}\Big(q\sum_{ij} x_{ij} - \sum_{ij} (1-M_{ij}) x_{ij} \Big),
    \end{multline}
    where a reward matrix $C \in \mab{R}^{N_{1} \times N_{2}}$ indicates the likelihood of a link between each node pair, for all $i, j$, $M_{ij}$ is assigned $0$ if articles $i$ and $j$ belong to the same field, or $1$ if they don't.
    The parameters $p, q \in [0, 1]$ represent the probability of pairs being in the same field and in different fields, respectively.
    Following \citep{mandi2022decision}, we examine two variations of this problem: Matching-1 and Matching-2, characterized by $p$ and $q$ values of $25$\% and $5$\%.

\paragraph{Maximum cut problems}
    The MaxCut problem, a well-known NP hard problems \citep{karp2010reducibility}, has practical application in machine scheduling \citep{alidaee19940}, image recognition \citep{neven2008image} and electronic circuit layout design \citep{deza1994applications}. 
    It is defined as follows: In an undirected graph $G=(V, E)$, a cut set $\mac{C} \in E$, which is a subset of edges, divides the nodes into two groups  $(V_{1}, V_{2} \mid V_{1} \cup V_{2} = V,~V_{1} \cap V_{2} = \emptyset)$. 
    The objective the MaxCut problem is to find the largest cut set.
    To formulate this problem, each node is assigned a binary variable: $x_{i}=1$ signifies that node $i$ is in $V_{1}$, while $x_{i}=0$ indicates node $i$ is in $V_{2}$. For an edge $(i, j)$, $x_{i} + x_{j} - 2 x_{i} x_{j} = 1$ is true if $(i, j) \in \mac{C}$; otherwise, it equal $0$. This leads to the following objective function:
    \begin{equation}
    \label{eq:maxcut-qubo}
        l(\B{x}; G) = \sum_{i<j} A_{ij} (2x_{i}x_{j} - x_{i}-x_{j})
    \end{equation}
    where $A_{ij}$ is the adjacency matrix, where $A_{ij}=0$ signifies the absence of an edge, and $A_{ij} > 0$ indicates a connecting edge.
    Following  \citet{schuetz2022combinatorial, ichikawa2023controlling}, this experiments employ seven instances from Gset dataset \citep{gsetdataset}, recognized as a standard MaxCut benchmark.
    These seven instances are defined on distinct graphs, including Erd\"{o}s-Renyi graphs with uniform edge probability, graphs with gradually decaying connectivity from $1$ to $N$, 4-regular toroidal graphs, and one of the largest instance with $10{,}000$ nodes.

\section{Additional Experiments}

    \begin{figure*}
            \centering
            \includegraphics[width=0.3\linewidth]{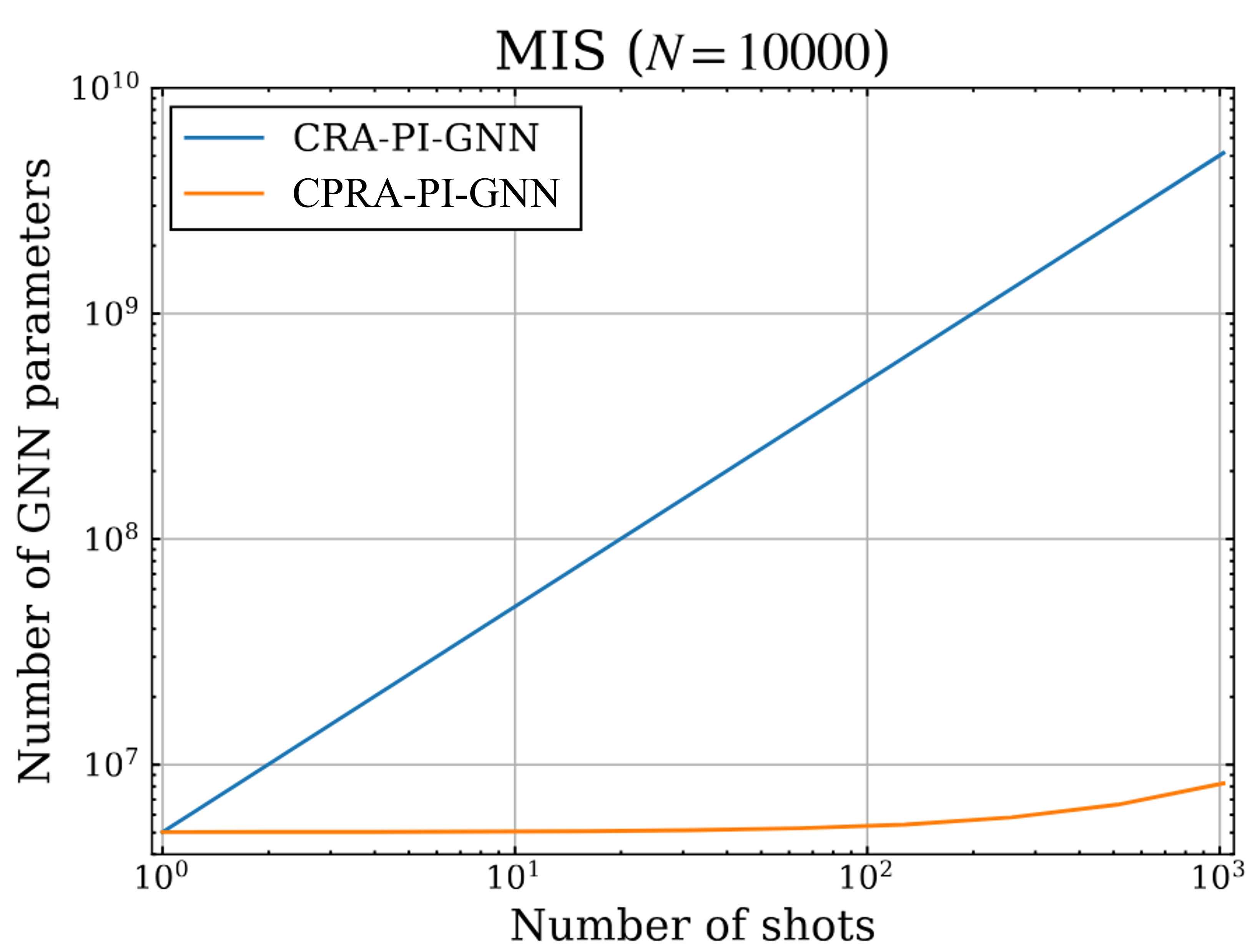}\hfill
            \includegraphics[width=0.3\linewidth]{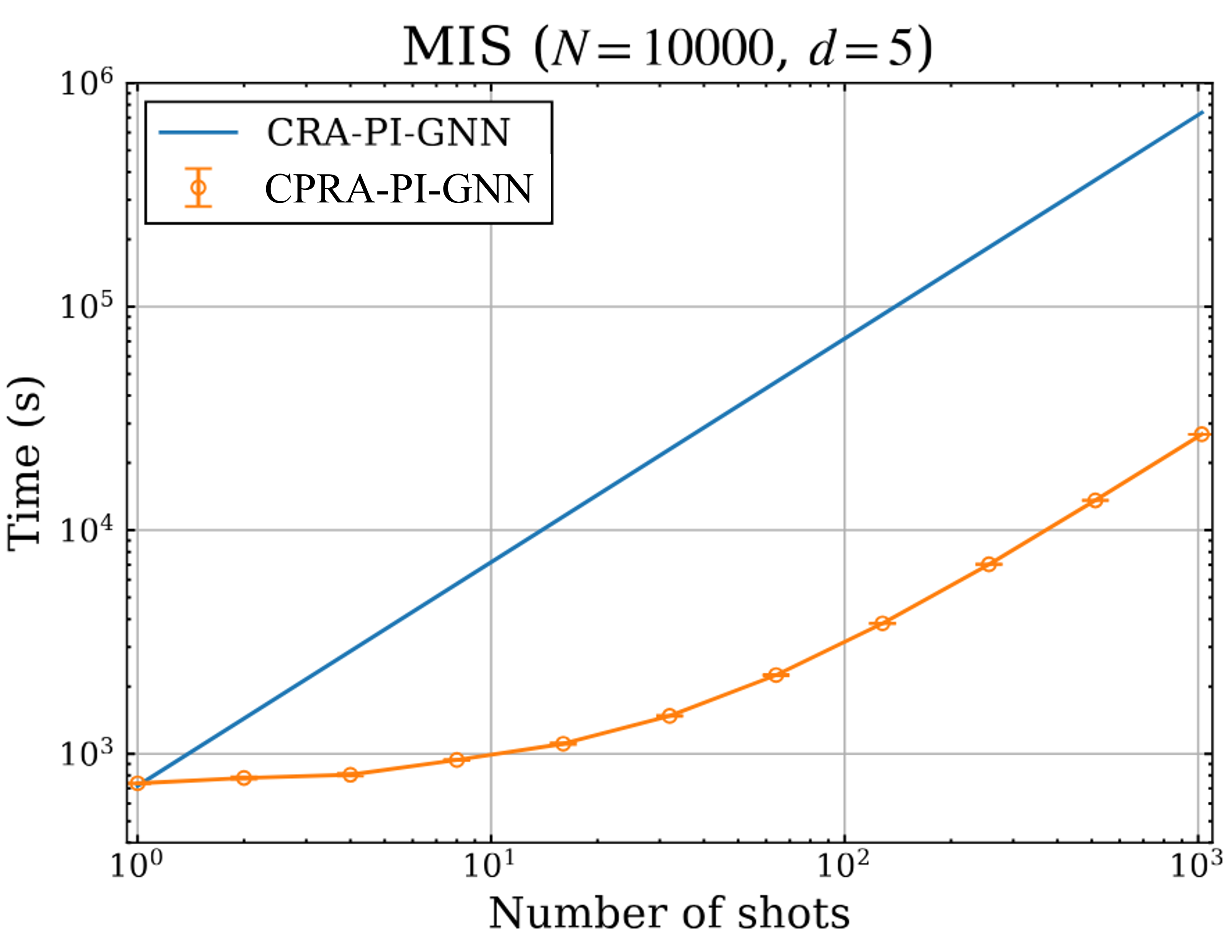}\quad
            \includegraphics[width=0.3\linewidth]{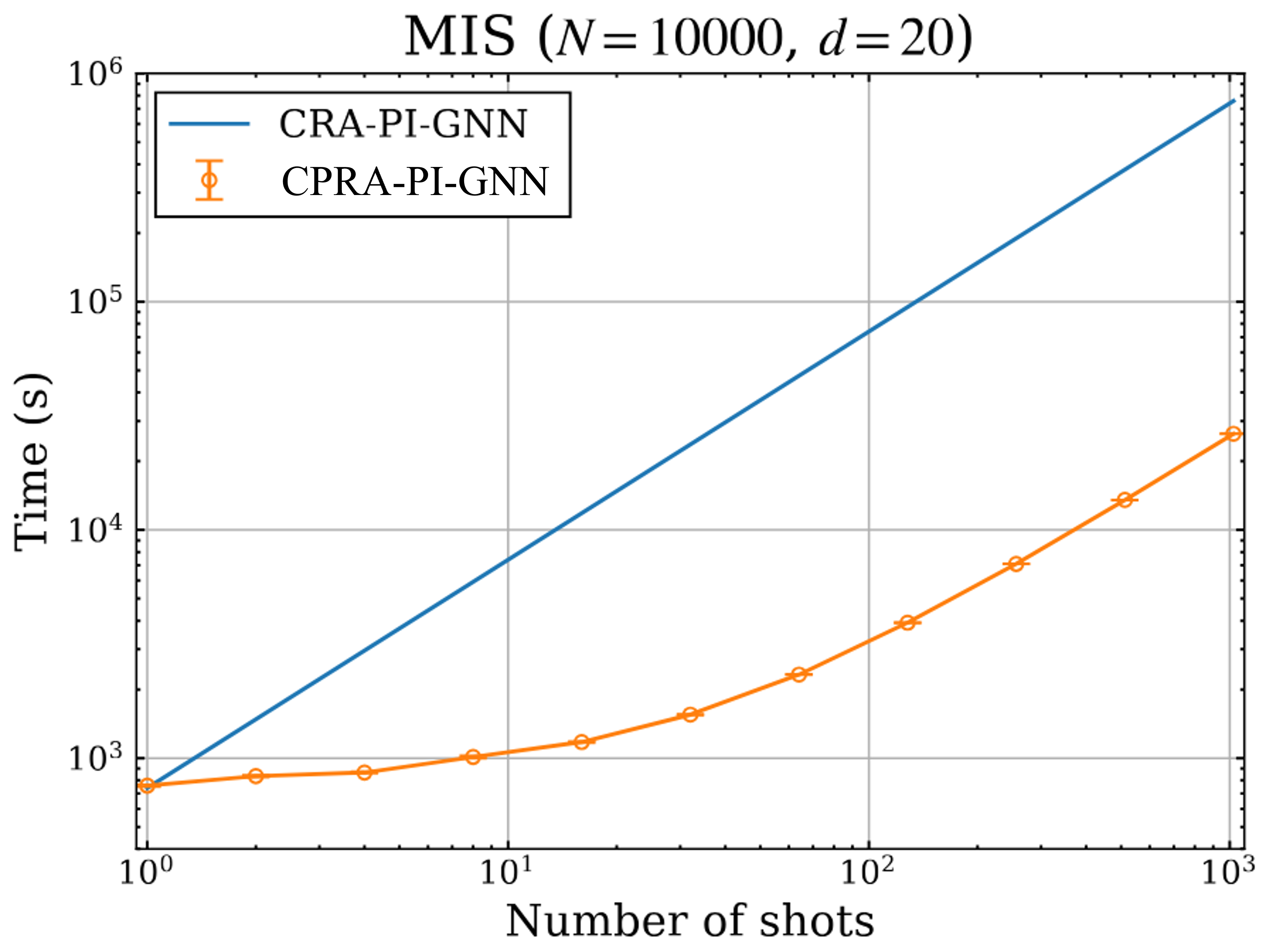}
            \caption{The runtime of the CPRA-PI-GNN solver, compared to $S$ individual runs of the CRA-PI-GNN solver, as a function of number of shots $S$. Error bars represent the standard deviations of 5 random seeds.}
        
        \label{fig:mis-runtime-params}
        \end{figure*}
\subsection{Runtime and \# Params as a function of number of shots}\label{subsec:app-runtime-number-params-function-of-shot}

    In this section, we investigate the runtime of CPRA-PI-GNN solver as a function of the number of shots, $S$, compared to the runtime for $S$ individual runs of CRA-PI-GNN solver. 
    Fig.~\ref{fig:mis-runtime-params} shows each runtime as a function of the number of shots $S$. 
    For this analysis, we incrementally increase the number of shots, further dividing the range of penalty parameters from $2^{-2}$ to $2^{17}$.
    The results indicate that CPRA-PI-GNN solver can find penalty-diversified solutions within a runtime nearly identical to that of a single run of CRA-PI-GNN solver for shot numbers $S$ from $2^{0}$ to $2^{10}$. 
    However, for $S>10^{2}$, we observe a linear increase in runtime as the number of shots $S$ grows because of the limitation of memory of GPUs. 
    Fig.~ \ref{fig:mis_N10000_heterogeneous_sols} (right) shows the distribution of Hamming distances combination, $\{d_{H}(P_{:s}, P_{:l})\}_{1\le s<l\le 300}$, and the count of unique solutions with different $\nu=0.00, 0.05, 0.10, 0.20$, whereas Fig.~ \ref{fig:mis-runtime-params} (right) shows the maximum ApR, i.e., $\max_{s=1,\ldots, 300} \mathrm{ApR}(P_{:, s})$ as a function of the parameter $\nu$.
    These results indicate that the CPRA-PI-GNN solver can find more variation-diversified solutions as the parameter $\nu$ increases.
    Furthermore, This result indicates that the CPRA-PI-GNN solver can boost the exploration capabilities of the CRA-PI-GNN solver, leading to the discovery of better solutions.

\subsection{Additional results of variation-diversified solutions for MIS}\label{subsec:add-results-mis-variational}
        \begin{figure}[tb]
        \centering
        \includegraphics[width=0.9\columnwidth]{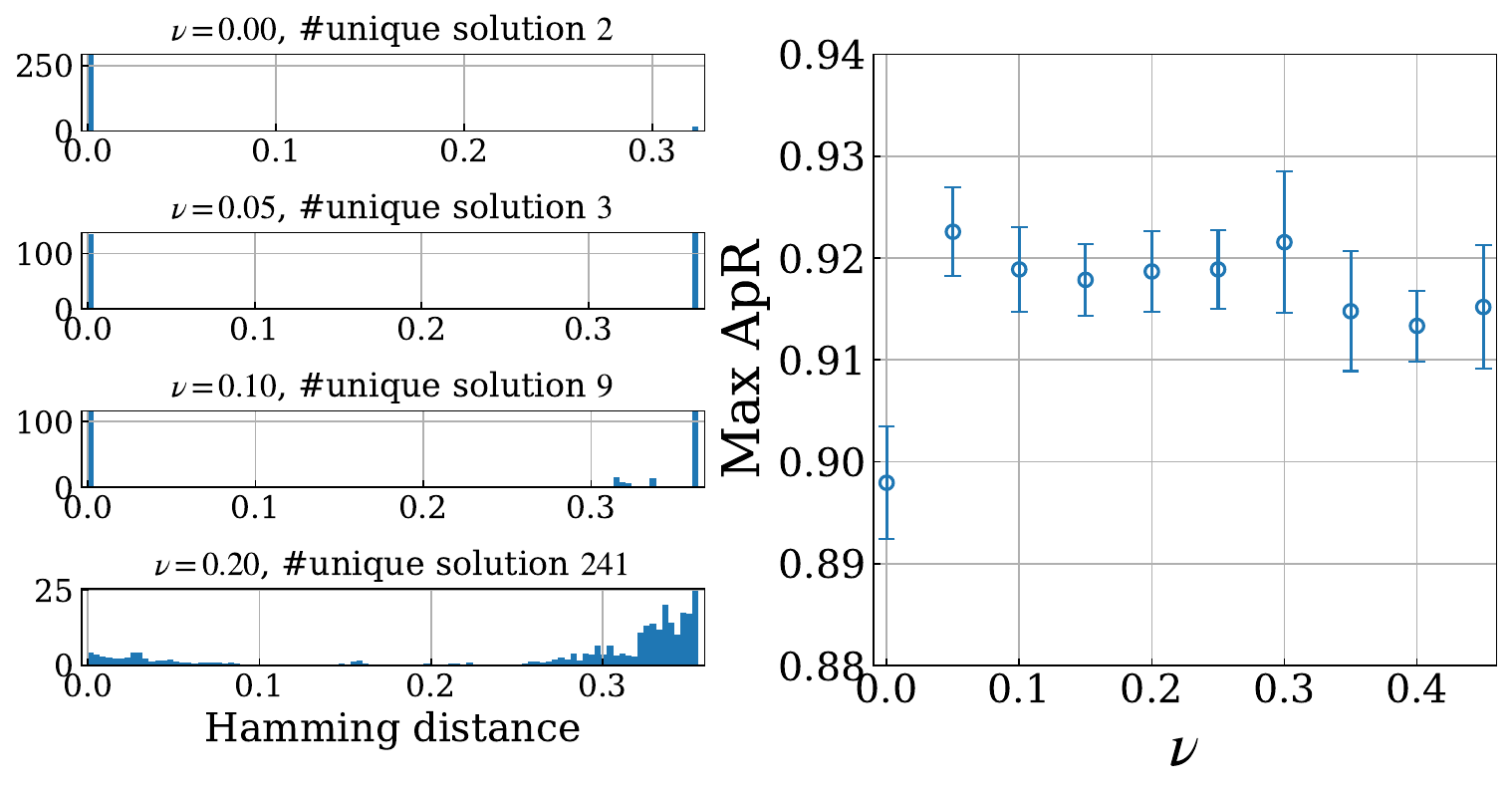}
        \caption{The density of Hamming distance combination of the solution, $\{d_{H}(P_{:s}, P_{:l})\}_{1\le s<l\le 300}$, with different parameters $\nu$ and the count of unique solutions (left), and the maximum ApR, $\max_{s=1, \ldots, 300} \mathrm{ApR}(P_{:s})$, as a function of the parameter $\nu$.}
        \label{fig:mis_N10000_heterogeneous_sols}
    \end{figure}
    
    \begin{figure*}[tb]
        \centering
        \includegraphics[width=\linewidth]{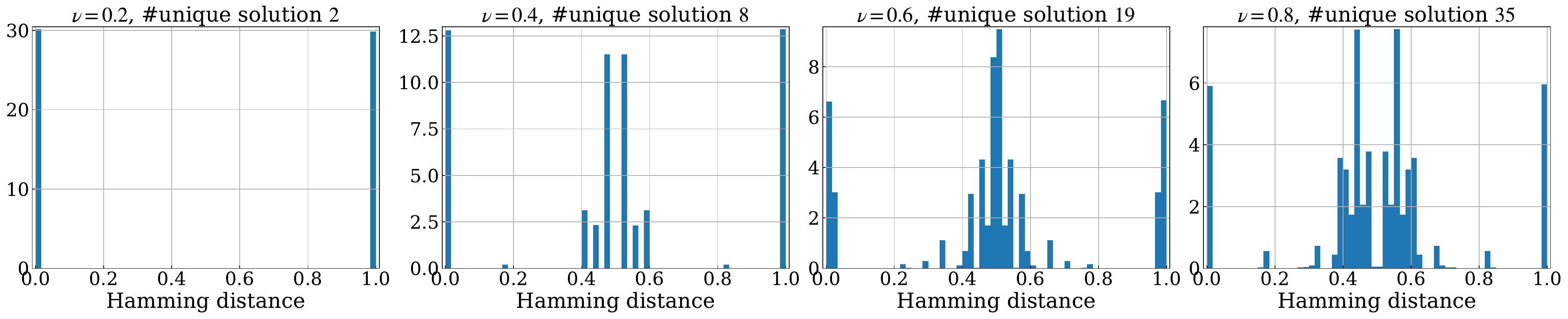}
        \caption{The density of Hamming distance combination of the solution, $\{d_{H}(P_{:s}, P_{:l})\}_{1\le s<l\le 1000}$ in MaxCut G14, with different parameters $\nu$ and the count of unique solutions}
        \label{fig:maxcut_G14_combhamming}
    \end{figure*}
    Fig.~ \ref{fig:mis_N10000_heterogeneous_sols} (right) shows the distribution of Hamming distances combination, $\{d_{H}(P_{:s}, P_{:l})\}_{1\le s<l\le 300}$, and the count of unique solutions with different $\nu=0.00, 0.05, 0.10, 0.20$, whereas Fig.~ \ref{fig:mis_N10000_heterogeneous_sols} (right) shows the maximum ApR, i.e., $\max_{s=1,\ldots, 300} \mathrm{ApR}(P_{:, s})$ as a function of the parameter $\nu$.
    These results indicate that the CPRA-PI-GNN solver can find more variation-diversified solutions as the parameter $\nu$ increases.
    Furthermore, This result indicates that the CPRA-PI-GNN solver can boost the exploration capabilities of the CRA-PI-GNN solver, leading to the discovery of better solutions.

    \subsection{Additional results of variation-diversified solutions for MaxCut G14.}\label{subsec:add-results-maxcut-g14}
    In this section, to supplement the results of the variation-diversified solutions for MaxCut G14 in Section \ref{subsec:diverse-sol-acquisition}, we present the results of the Hamming distance distribution. Fig.~ \ref{fig:maxcut_G14_combhamming} shows the distribution of combinations of solution Hamming distances under the same settings as in Section \ref{sub-sec:config}. From these results, it is evident that the CPRA-PI-GNN solver has acquired solutions in four distinct clusters.

    \subsection{Additional results for validation of exploration ability}
    These improvement is consistent across other Gset instances on distict graphs with varying nodes, as shown in Table. \ref{tab:maxcut-gset}.
In these experiment, we fix as $\nu=6$ and evaluate the maximum ApR, $\max_{s=1, \ldots,1000} \mathrm{ApR}(P_{:s})$.
This result shows that CPRA-PI-GNN solver outperforme CRA-PI-GNN, PI-GNN, and RUN-CSP solvers.

    \begin{table}[t]
\caption{Numerical results for MaxCut on Gset instances}
\label{tab:maxcut-gset}
\vskip 0.15in
\begin{center}
\begin{small}
\begin{sc}
\begin{tabular}{lccccc}
\toprule

(Nodes, Edges)& CSP & PI & CRA & CPRA \\
\midrule
G14 ($800$, $4{,}694$)  & $0.960$ & $0.988$ & $0.994$ & $0.997$ \\
G15 ($800$, $4{,}661$)  & $0.960$ & $0.980$  & $0.992$ & $0.995$ \\
G22 ($2{,}000$, $19{,}990$)  & $0.975$ & $0.987$ & $0.998$ &  $0.999$  \\
G49 ($3{,}000$, $6{,}000$)  & $\mathbf{1.000}$ &  $0.986$ & $\mathbf{1.000}$ & $\mathbf{1.000}$ \\
G50 ($3{,}000$, $6{,}000$) & $\mathbf{1.000}$ & $0.990$  & $\mathbf{1.000}$ & $\mathbf{1.000}$ \\
G55 ($5{,}000$, $12{,}468$)  & $0.982$ & $0.983$ & $0.991$ & $0.994$ \\
G70 ($10{,}000$, $9{,}999$)  & $-$ & $0.982$ & $0.992$ & $0.997$ \\
\bottomrule
\end{tabular}
\end{sc}
\end{small}
\end{center}
\vskip -0.1in
\end{table}

\begin{figure}[tb]
\centering    
\includegraphics[width=\linewidth]{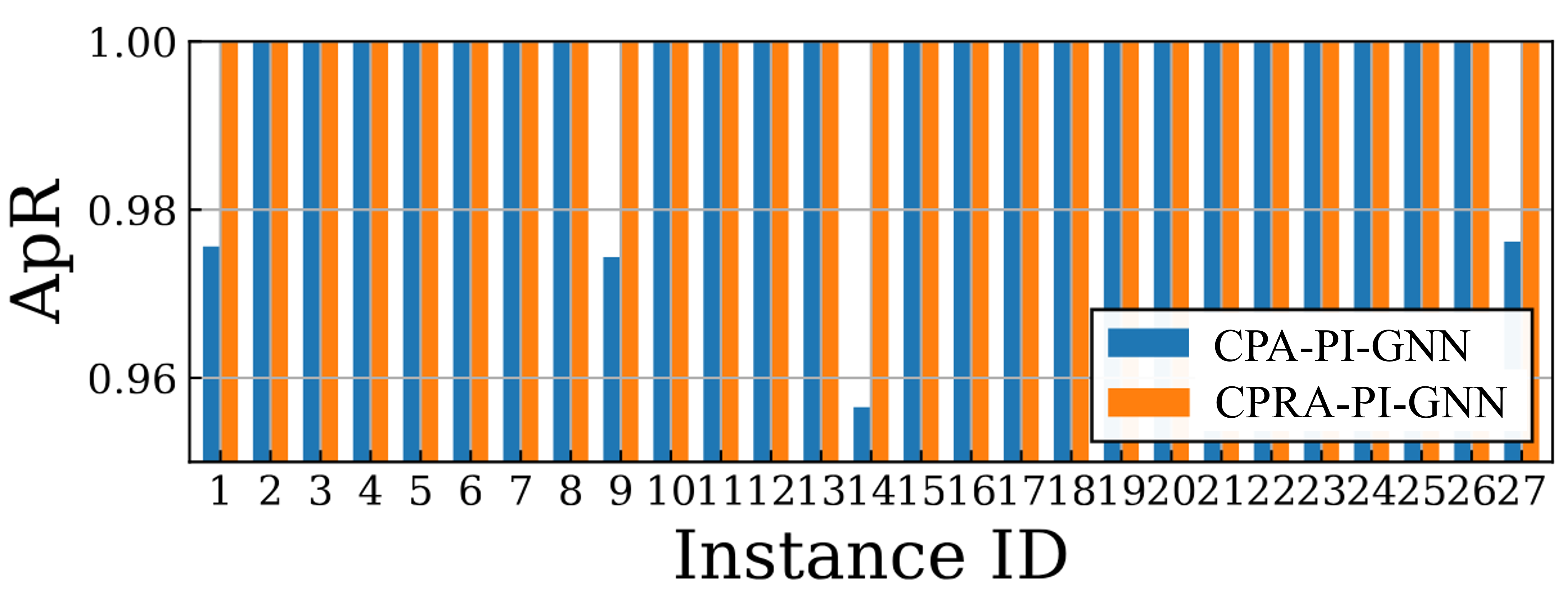}
    \caption{The ApR of DBM (Matching-1) using CPRA-PI-GNN and CRA-PI-solvers \citep{ichikawa2023controlling}.}
    \label{fig:matching_result_CPRA-CRA}
\end{figure}

\subsection{CPRA for Multi-instance Solutions}\label{subsec:add-results-multi-problem}
In this section, we numerically demonstrate that the CPRA-PI-GNN solver can efficiently solve multiple problems with similar structures. 
The numerical experiments solve all $27$ DBM instances using the CPRA-PI-GNN solver with the following loss function: 
\begin{align}
    \hat{R}(\B{\theta}; \mac{C}_{S}, \B{\lambda}, \gamma) = \sum_{s=1}^{S} \hat{l}(\B{\theta}; C_{s}, \B{\lambda}_{s}) + S(\B{\theta}; \mac{C}_{S}, \gamma, \alpha),~~S(\B{\theta}, \mac{C}_{S}, \gamma, \alpha) \delequal \gamma \sum_{i=1}^{N} \sum_{s=1}^{S} (1-(2P_{\B{\theta}, is}(C_{s})-1)^{\alpha}).
\end{align}
where $\mac{C}_{S}=\{C_{s}, M_{s}\}_{s=1}^{27}$ represents the instance parameters, and $\hat{l}$ is defined as follows: 
\begin{multline*}
    l(\B{x}; C, M, \B{\lambda}) = - \sum_{i, j} C_{ij} x_{ij} 
    + \lambda_{1} \sum_{i}\mathrm{ReLU}\Big(\sum_{j} x_{ij}-1 \Big) 
    + \lambda_{2} \sum_{j}\mathrm{ReLU}\Big(\sum_{i} x_{ij} - 1\Big) \\
    + \lambda_{3} \mathrm{ReLU}\Big(p\sum_{ij} x_{ij} - \sum_{ij} M_{ij} x_{ij} \Big) 
    + \lambda_{4} \mathrm{ReLU}\Big(q\sum_{ij} x_{ij} - \sum_{ij} (1-M_{ij}) x_{ij} \Big),
\end{multline*}
where $\B{\lambda}$ is fixed as $\B{\lambda}=(\lambda_{1}, \lambda_{2}, \lambda_{3}, \lambda_{4})=(2, 2, 12, 12)$.
The parameters for the CPRA-PI-GNN solver is set the same as in Section \ref{sub-sec:config}. 
On the other hand, the CRA-PI-GNN solver repeatedly solve the $27$ problems using the same settings as \citet{ichikawa2023controlling}. 
As a result, the CPRA-PI-GNN solver can explore global optimal solutions for all problems.
Fig.~ \ref{fig:matching_result_CPRA-CRA} showcases the solutions yielded by both the CRA-PI-GNN and CPRA-PI-GNN solvers for the 27 Matching-1 instances. Matching-2 is excluded from this comparison, given that both solvers achieved global solutions for these instances.
The CRA-PI-GNN solver, applied 27 times for Matching-1, accumulated a total runtime of $36{,}925 \pm 445$ seconds, significantly longer than the CPRA-PI-GNN's efficient $5{,}617 \pm 20$ seconds. For Matching-2, the CRA-PI-GNN solver required $36{,}816 \pm 149$ seconds, whereas the CPRA-PI-GNN solver completed its tasks in just $2{,}907 \pm 19$ seconds. The reported errors correspond to the standard deviation from five random seeds.
These findings not only highlight the CPRA-PI-GNN solver's superior efficiency in solving a multitude of problems but also its ability to achieve higher Acceptance Probability Ratios (ApR) compared to the CRA-PI-GNN solver. The consistency of these advantages across different problem types warrants further investigation.

\subsection{Additional Results of penalty-diversified solutions for DBM problems}\label{subsec:add-results-DBM}
This section extends our discussion on penalty-diversified solutions for DBM problems, as introduced in Section \ref{subsec:experiment-pd-solutions-mis}. 
In these numerical experiments, we used the same $\Lambda_{S}$ as in Section \ref{subsec:experiment-pd-solutions-mis} and executed the CPRA-PI-GNN under the same settings as in Section \ref{sub-sec:config}.
As shown in Fig.~ \ref{fig:results-all-matching}, the CPRA-PI-GNN can acquire penalty-diversified solutions for all instances of the DBM.

\section{Two–Stage Extension Experiment}\label{app:two-stage}

To validate the claim in Section \ref{sec:continuous-tensor-relaxation} that a small warm-up subset
$S^{\prime} \subset S$ allows CPRA-PI-GNN to scale efficiently,
we solve $500$ weighted‐MIS tasks on a fixed $1000$-node, $d=5$ regular graph.
The first $200$ weights form $S^{\prime}$ (warm-up); all $500$ form $S$.
During warm-up the network has $200$ heads and is trained for $8000$ epochs
with $\lambda$ annealed $-2\to+2$,
but the update stops once the entropy term falls below $10^{-4}$.
The encoder is then frozen, the head is replaced by a fresh $500$-way layer,
and only that layer is optimised for $2000$ epochs
($\lambda:0.1\!\to\!0$, same stopping rule). 
A scratch-trained 500-head model serves as baselines.

\begin{table}[tb]
\centering
\setlength{\tabcolsep}{6pt}
\caption{Runtime and solution quality on the 500-task, 1000-node weighted-MIS benchmark.
``Pre'' and ``FT'' denote warm-up and fine-tune phases of the two-stage schedule.
The two-stage variant is not only faster than scratch training but also yields a markedly higher total weight.}
\label{tab:table-two-stage}
\begin{tabular}{lcccc}
\toprule
Method & Pre [s] & FT [s] & Total [s] & Avg.\ weight \\ \midrule
Two-Stage (200\,$\to$\,500) & 43.5 & 30.4 & \textbf{73.9} & \textbf{139.87} \\
Scratch-500                 & —    & —    & 83.7          & 123.87          \\ \bottomrule
\end{tabular}
\end{table}

As shown in Table \ref{tab:table-two-stage}, two-Stage finishes $11\%$ faster than scratch yet attains
$113\%$ of the scratch objective and $143\%$ of greedy.
This confirms that learning a shared encoder on $S'$ and
adapting only the output layer is both \emph{faster} and \emph{better}
than training a large head from scratch on all tasks.

\begin{figure*}
    \newcommand{\figscale}{0.32}
    \centering
    \includegraphics[width=\figscale\textwidth]{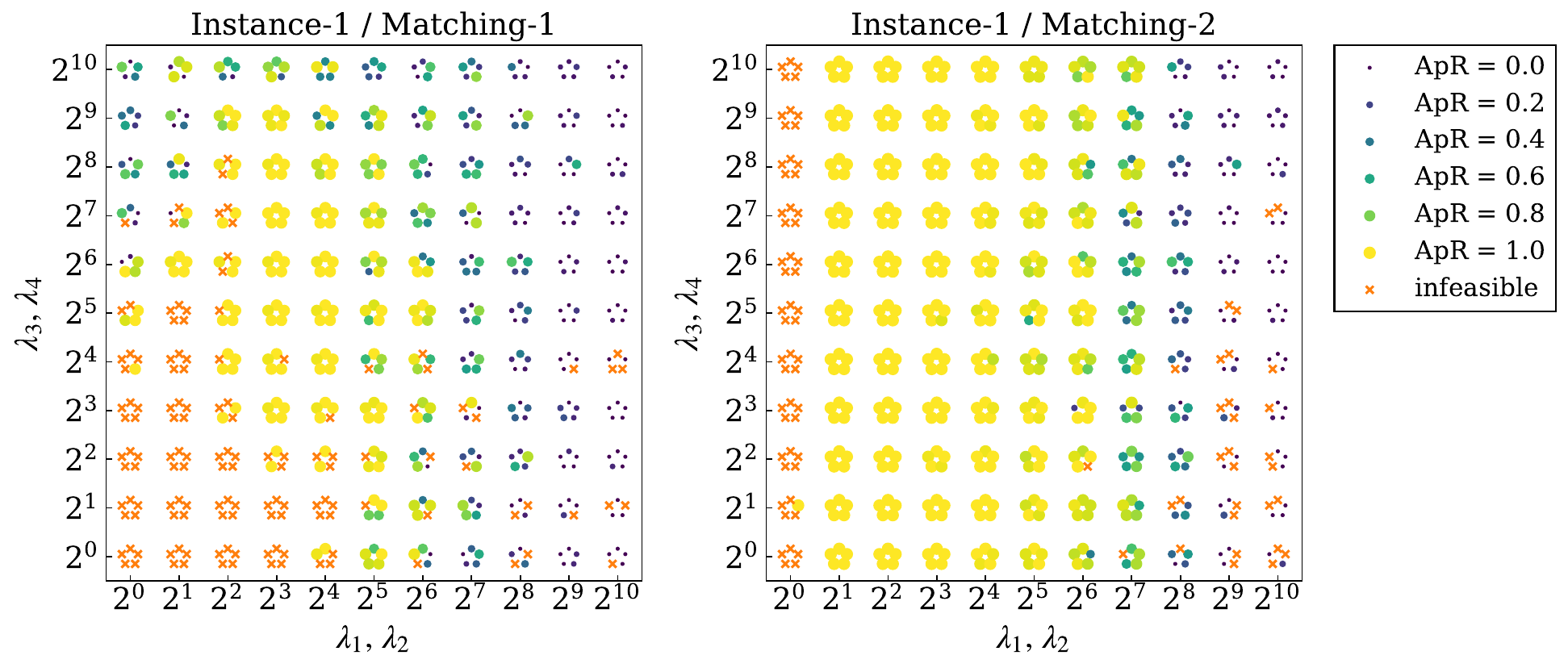}
    \hfil
    \includegraphics[width=\figscale\textwidth]{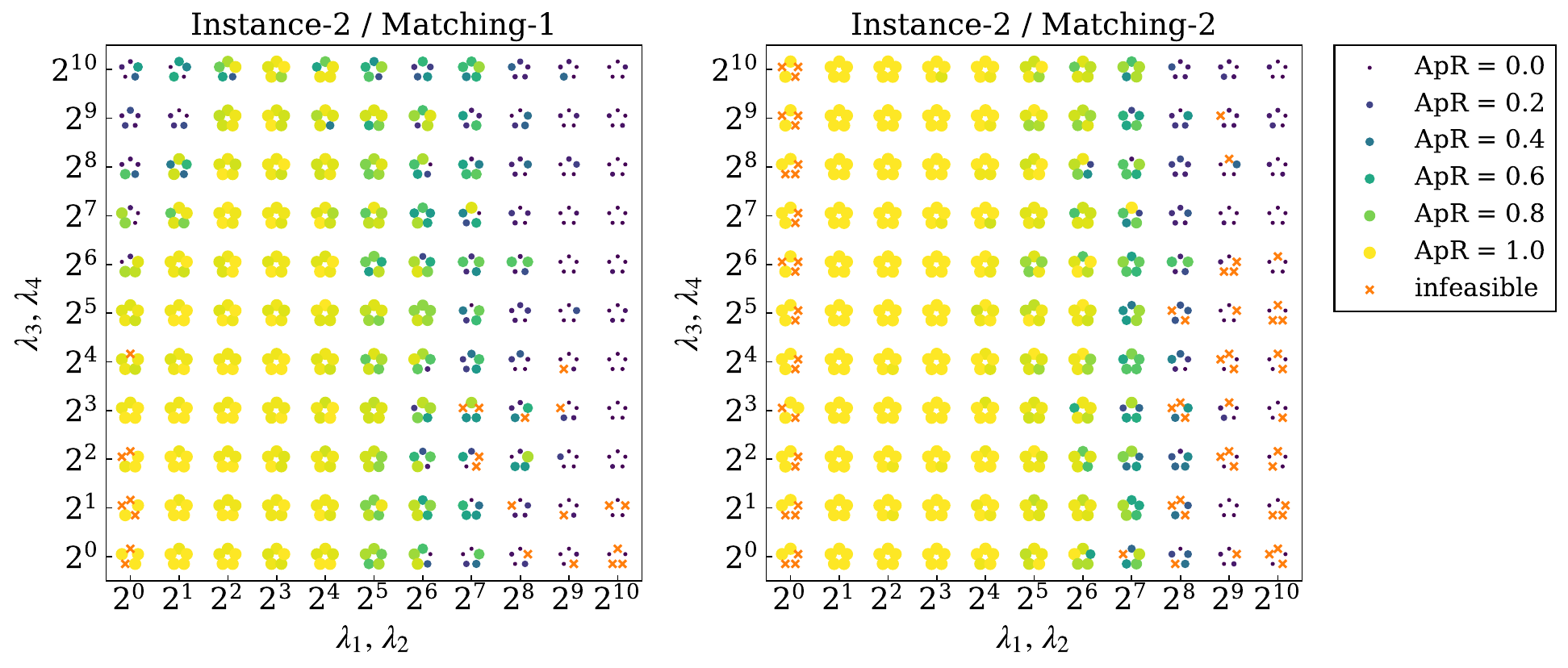}
    \hfil
    \includegraphics[width=\figscale\textwidth]{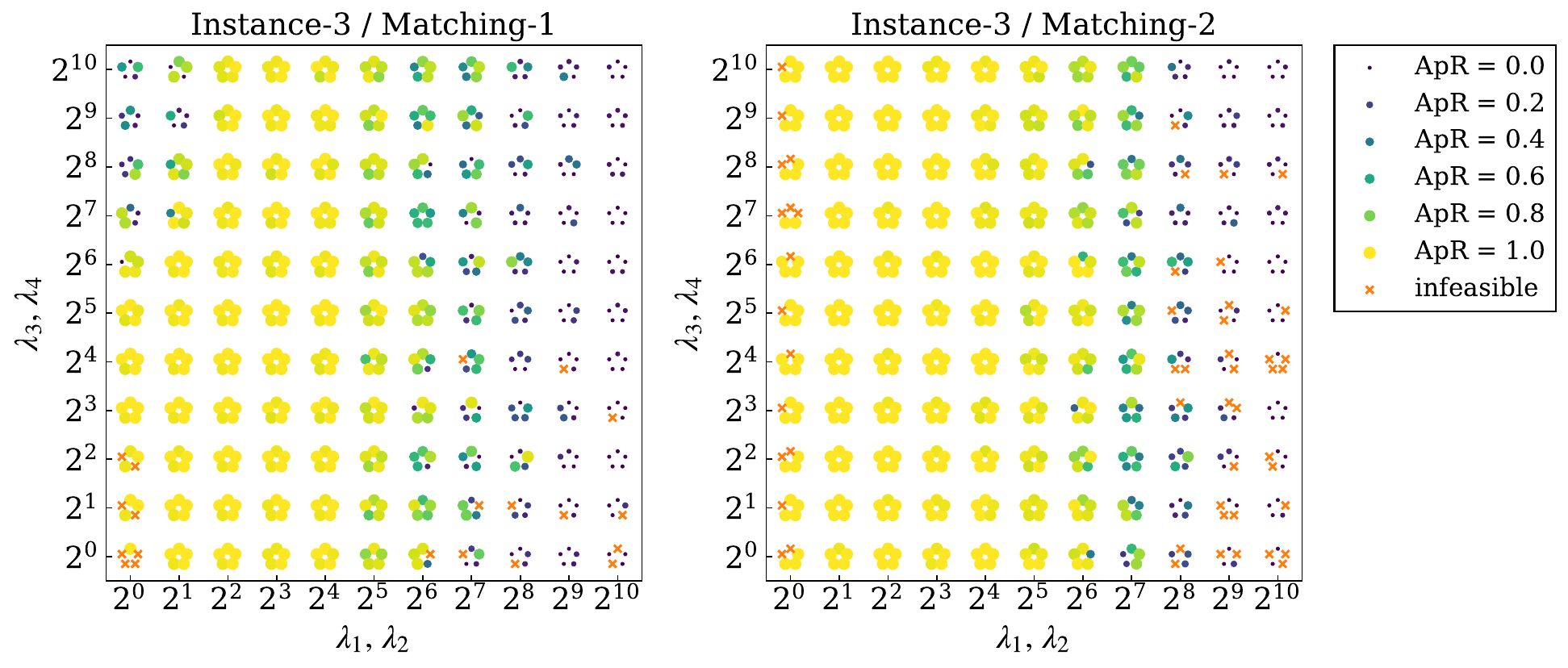}
    \hfil
    \includegraphics[width=\figscale\textwidth]{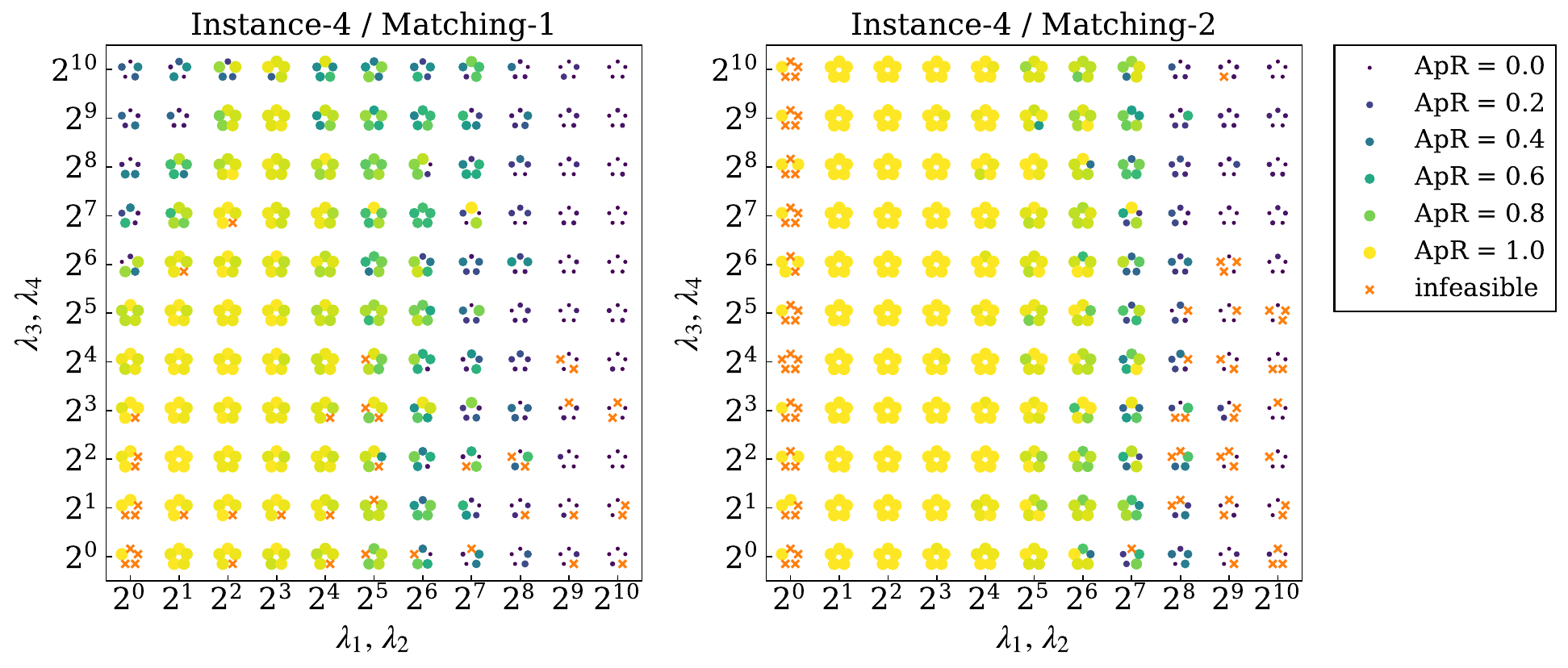}
    \hfil
    \includegraphics[width=\figscale\textwidth]{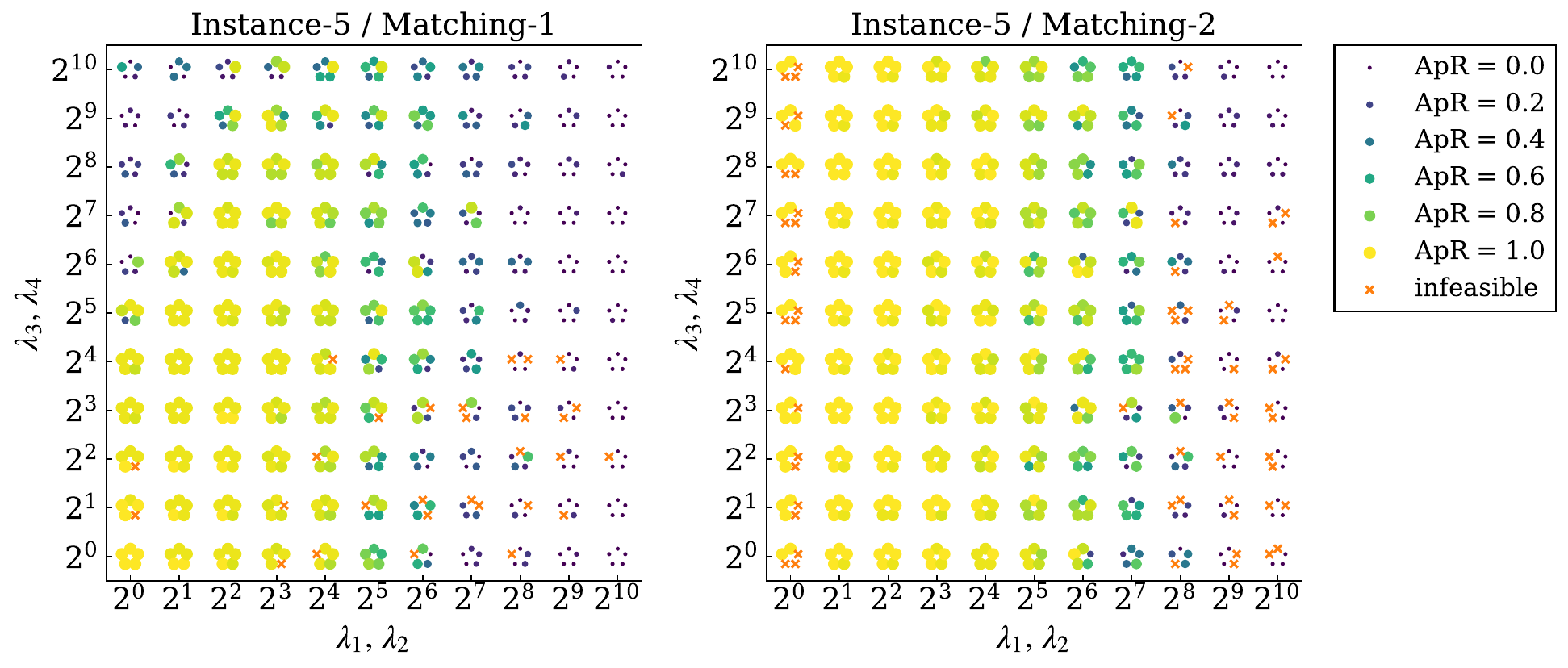}
    \hfil
    \includegraphics[width=\figscale\textwidth]{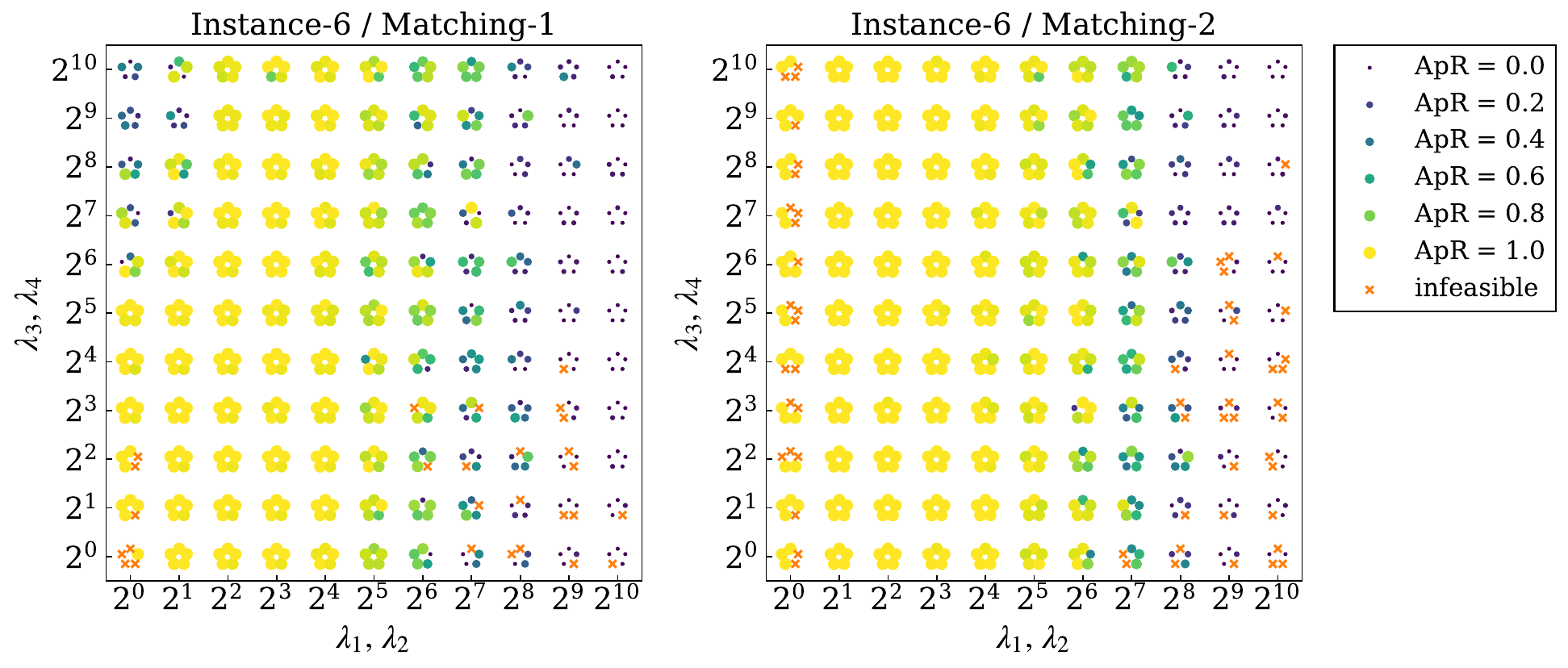}
    \hfil
    \includegraphics[width=\figscale\textwidth]{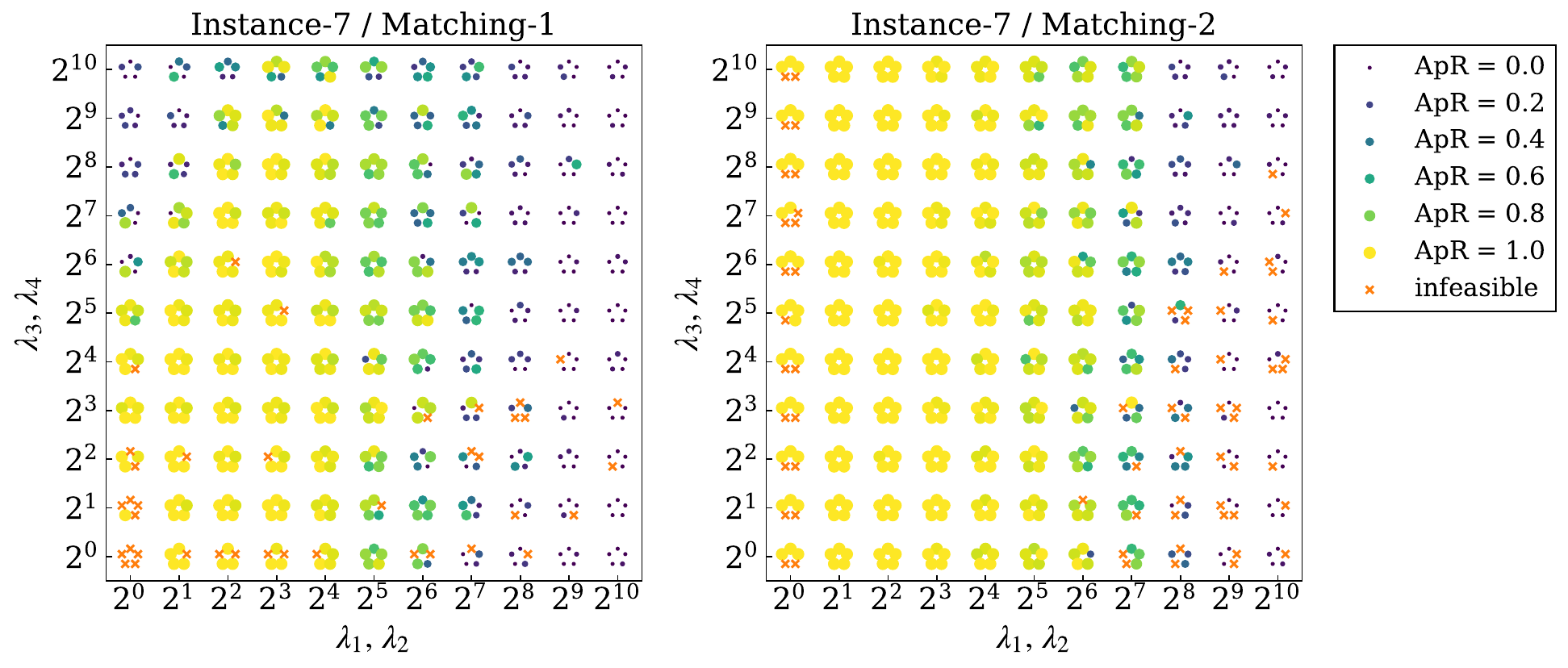}
    \hfil
    \includegraphics[width=\figscale\textwidth]{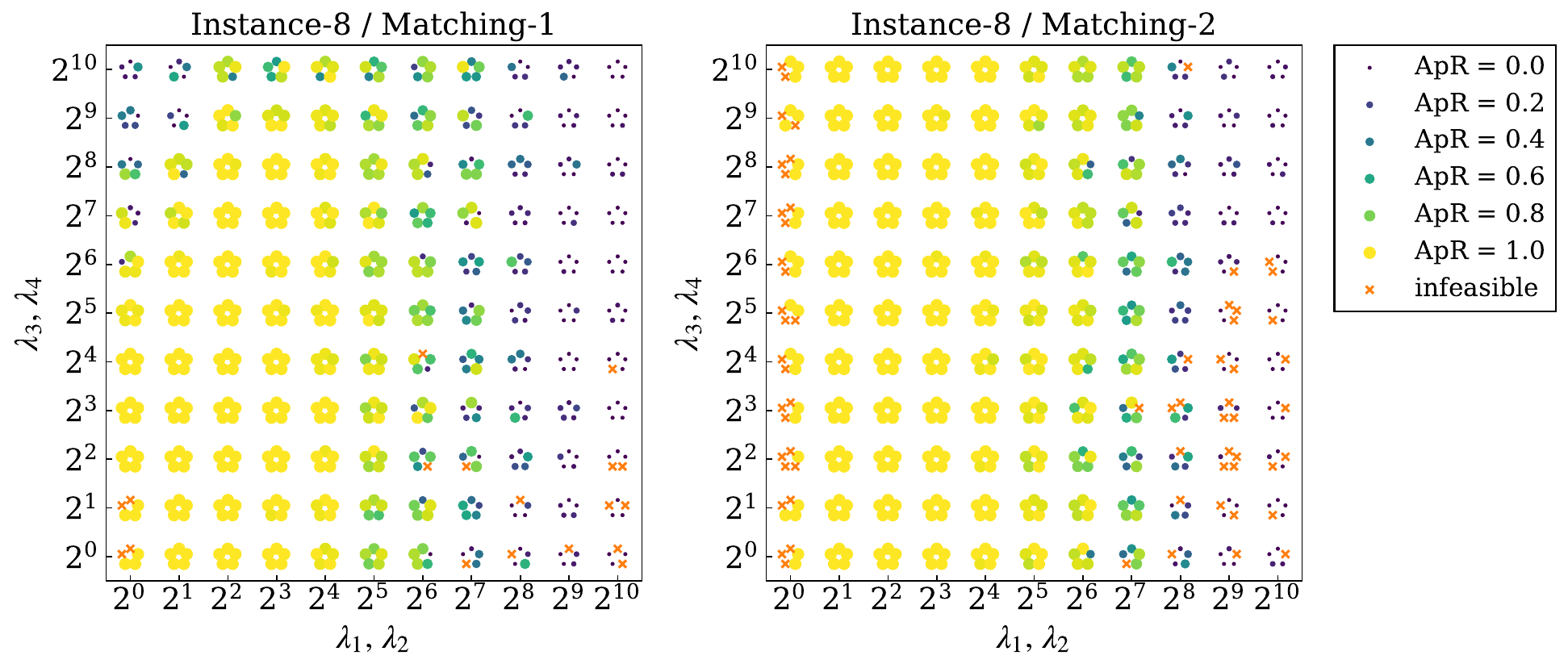}
    \hfil
    \includegraphics[width=\figscale\textwidth]{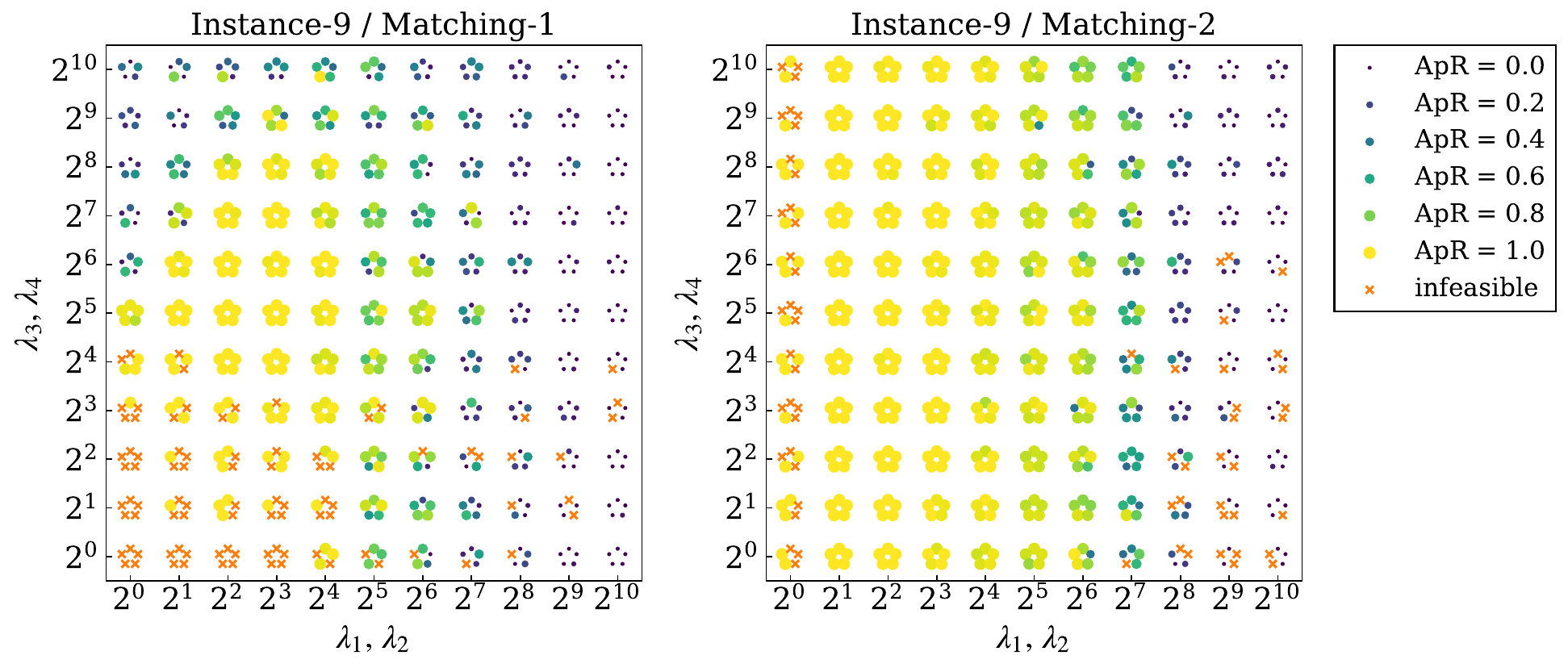}
    \hfil
    \includegraphics[width=\figscale\textwidth]{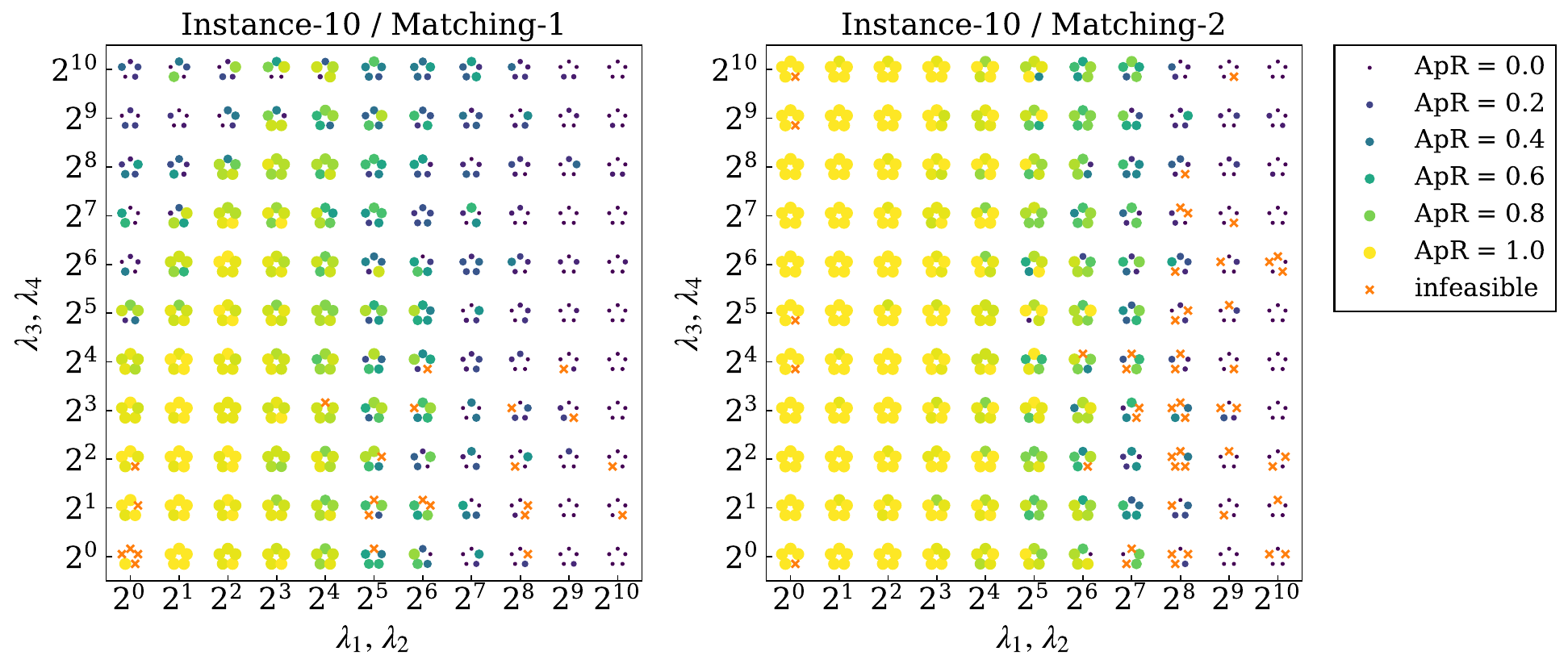}
    \hfil
    \includegraphics[width=\figscale\textwidth]{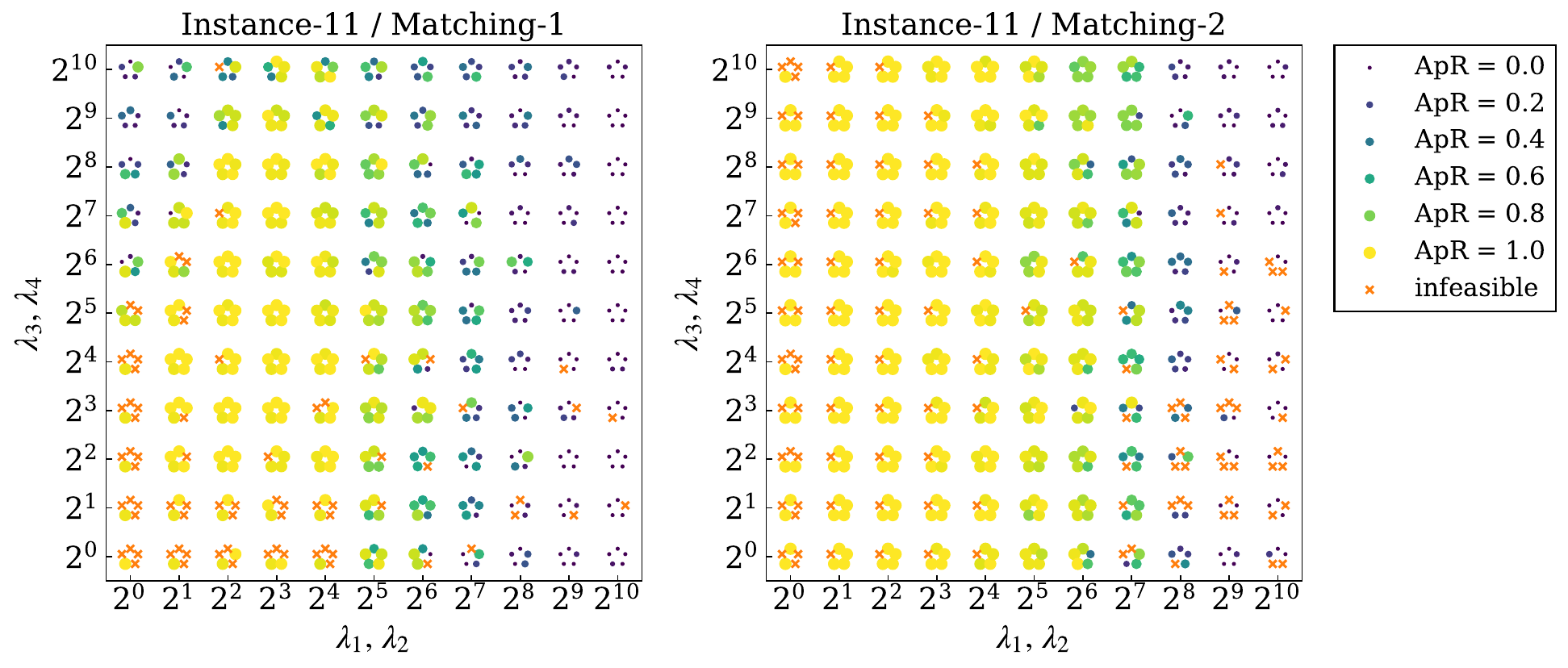}
    \hfil
    \includegraphics[width=\figscale\textwidth]{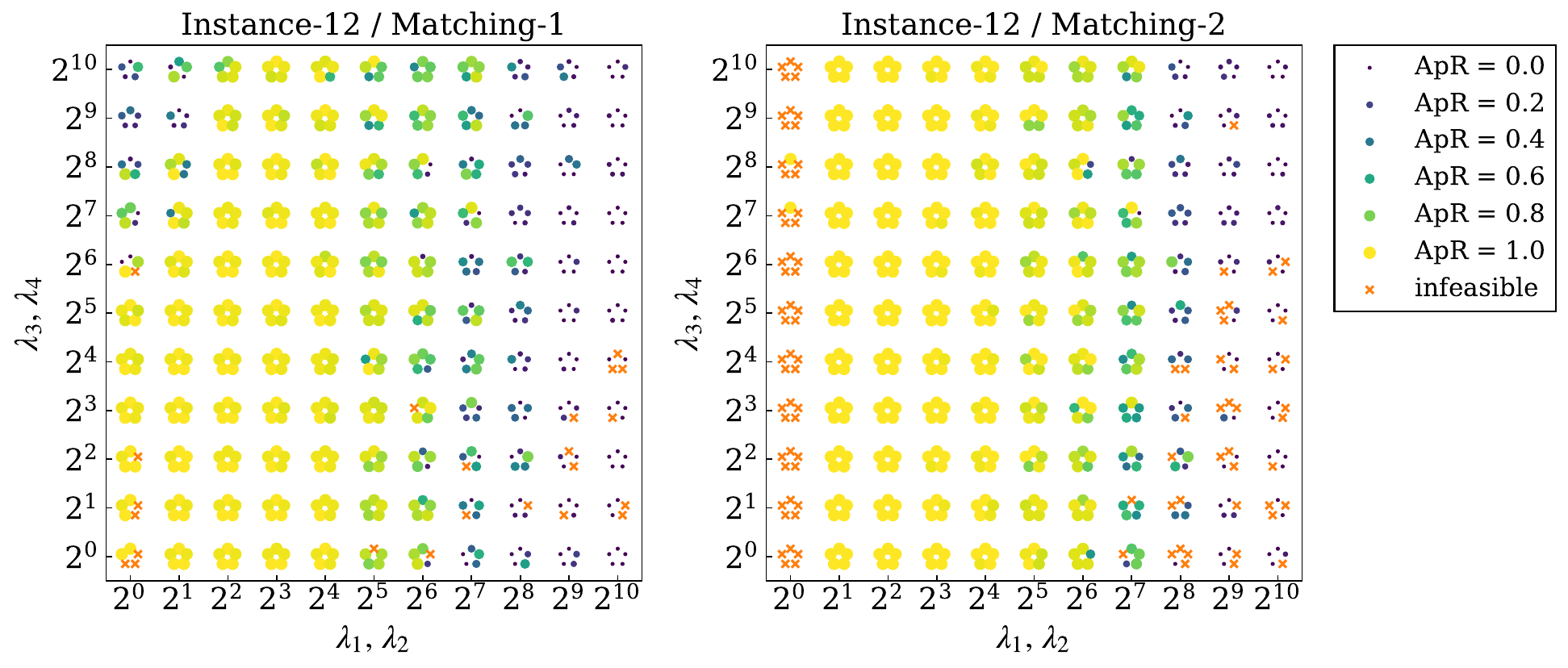}
    \hfil
    \includegraphics[width=\figscale\textwidth]{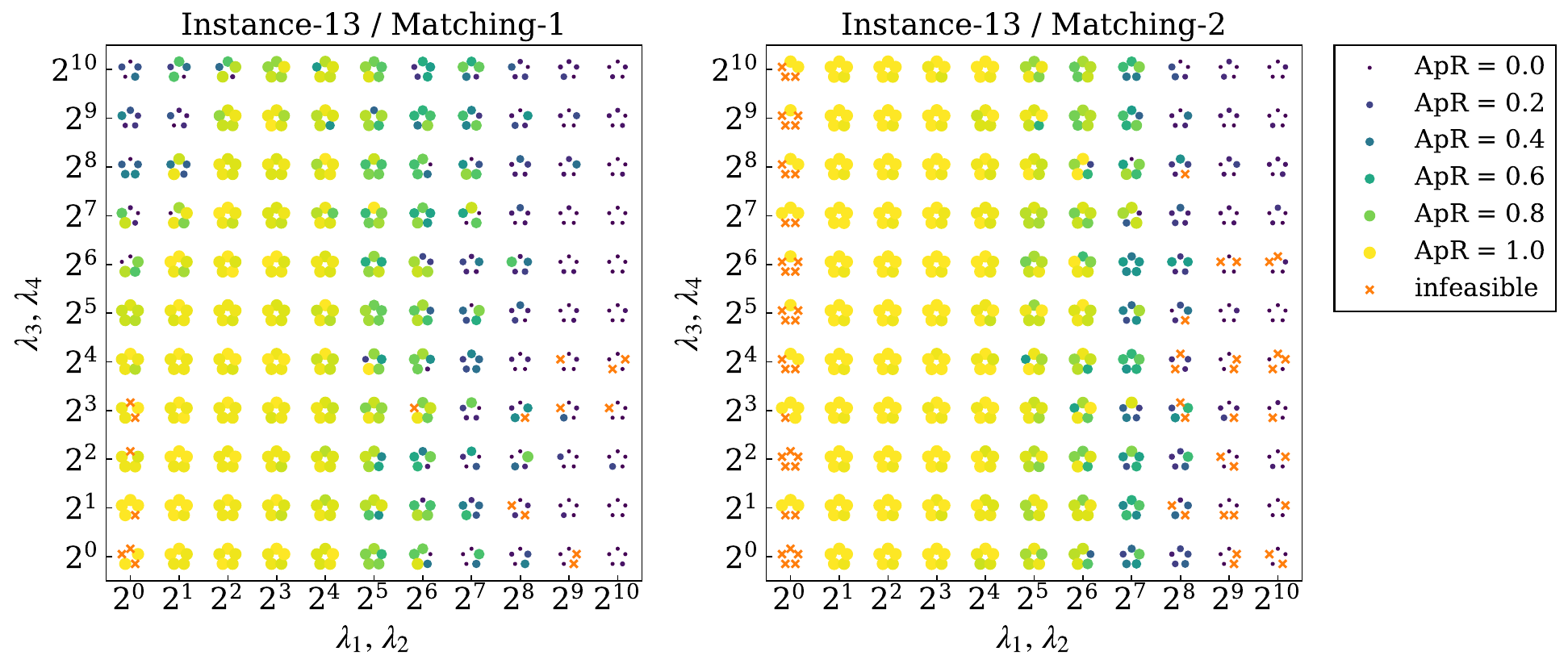}
    \hfil
    \includegraphics[width=\figscale\textwidth]{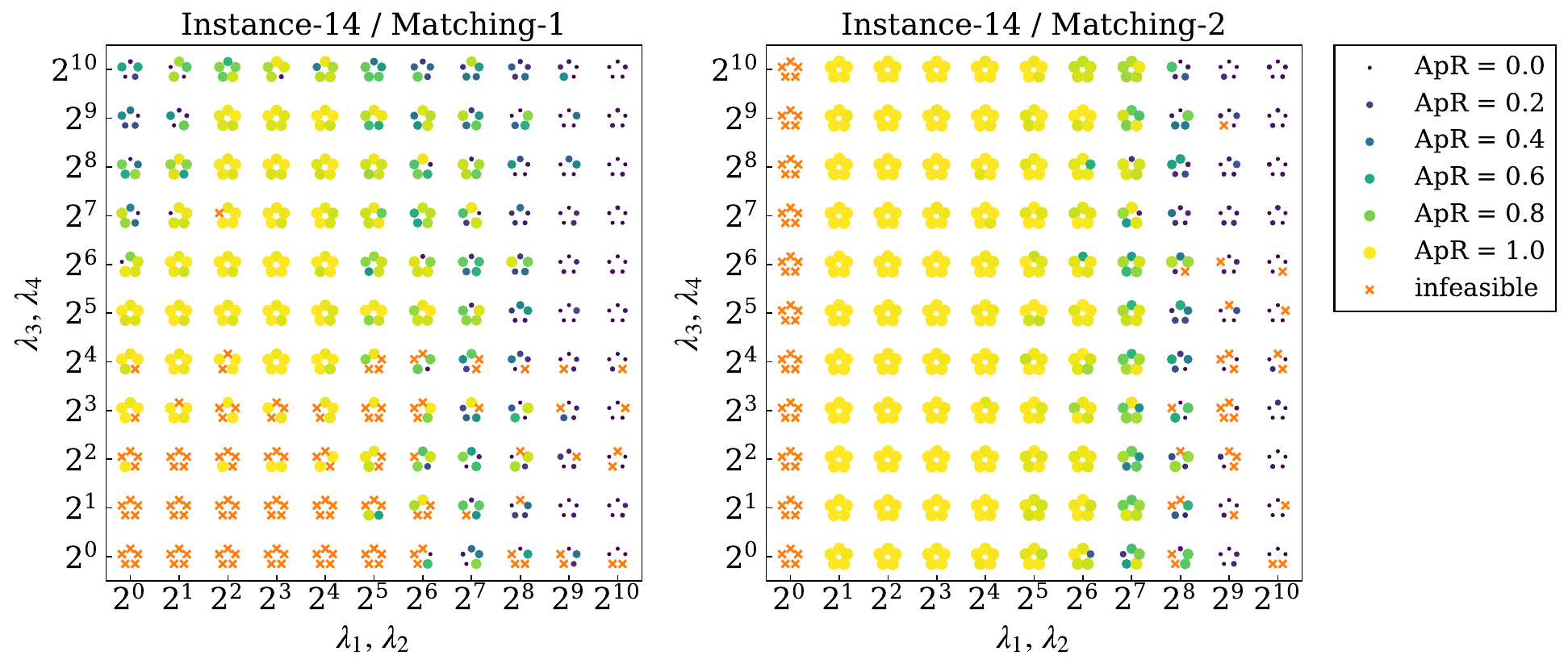}
    \hfil
    \includegraphics[width=\figscale\textwidth]{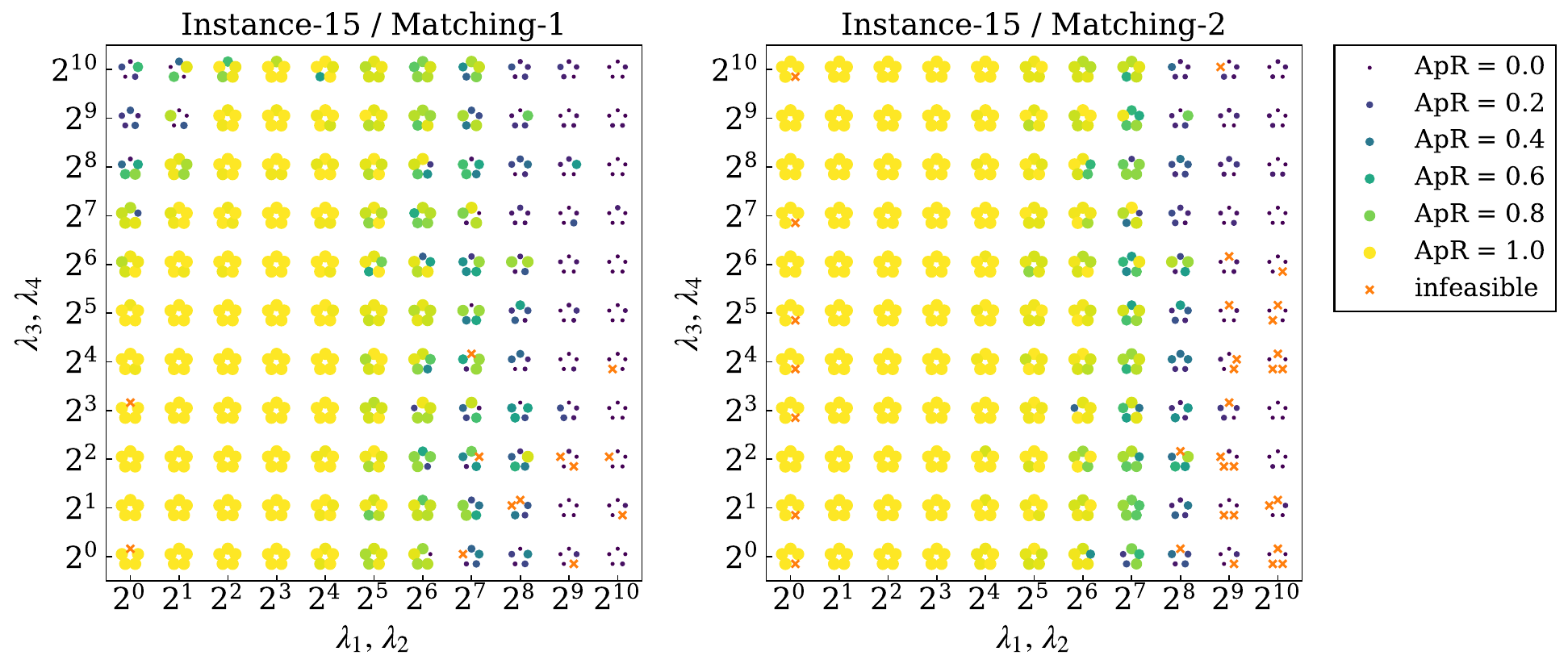}
    \hfil
    \includegraphics[width=\figscale\textwidth]{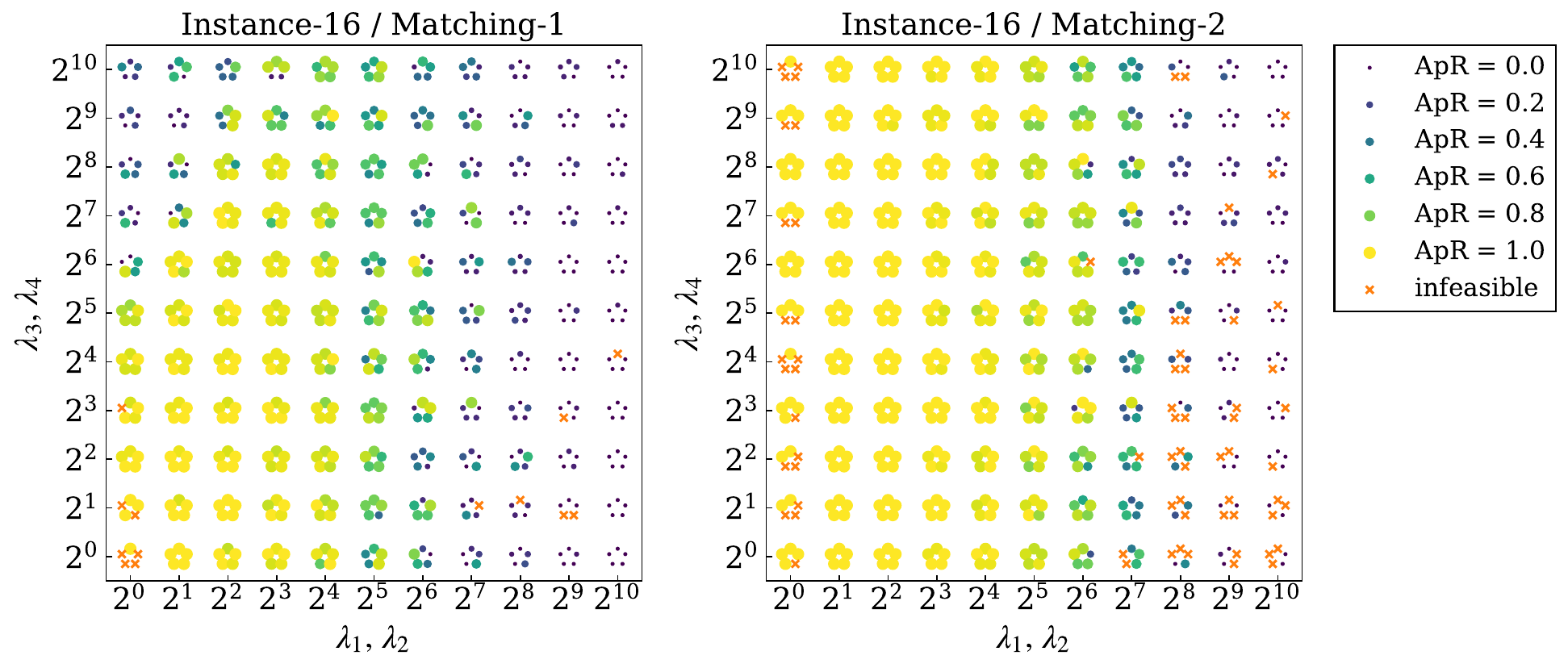}
    \hfil
    \includegraphics[width=\figscale\textwidth]{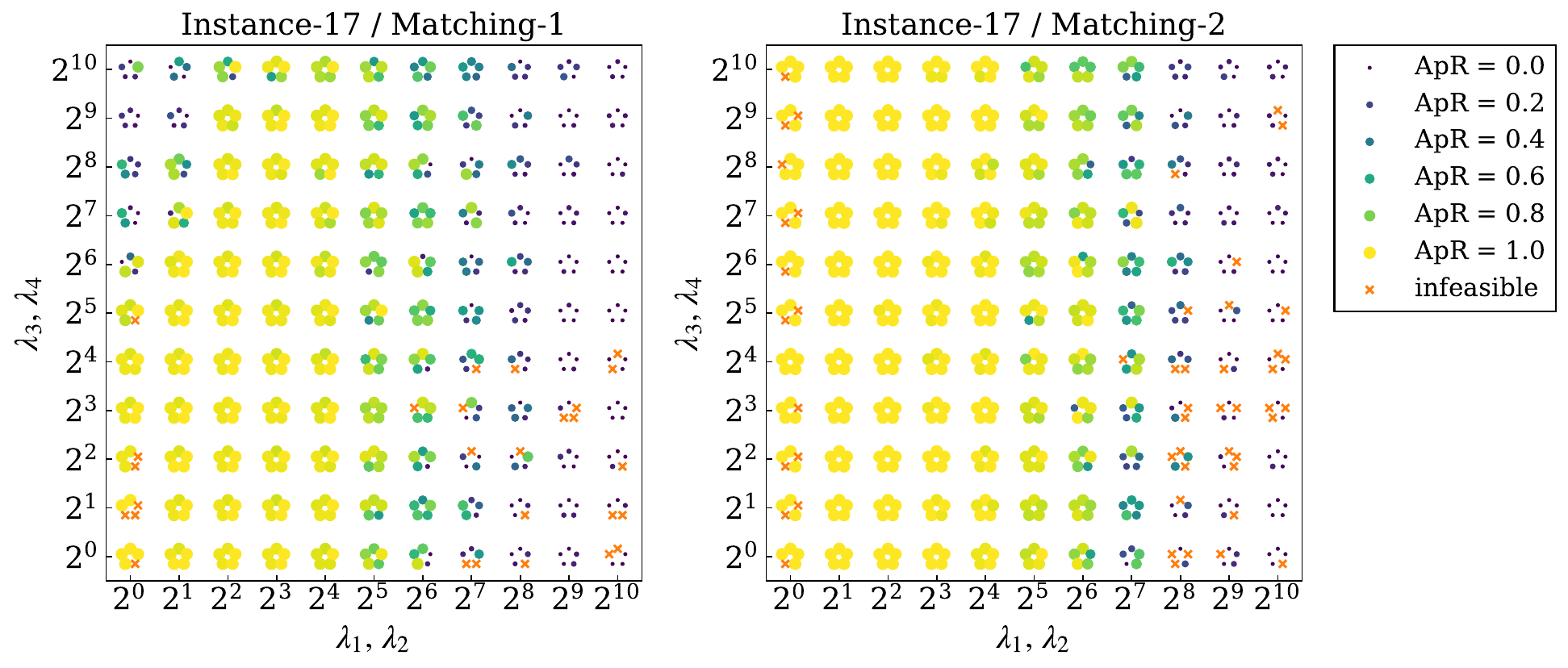}
    \hfil
    \includegraphics[width=\figscale\textwidth]{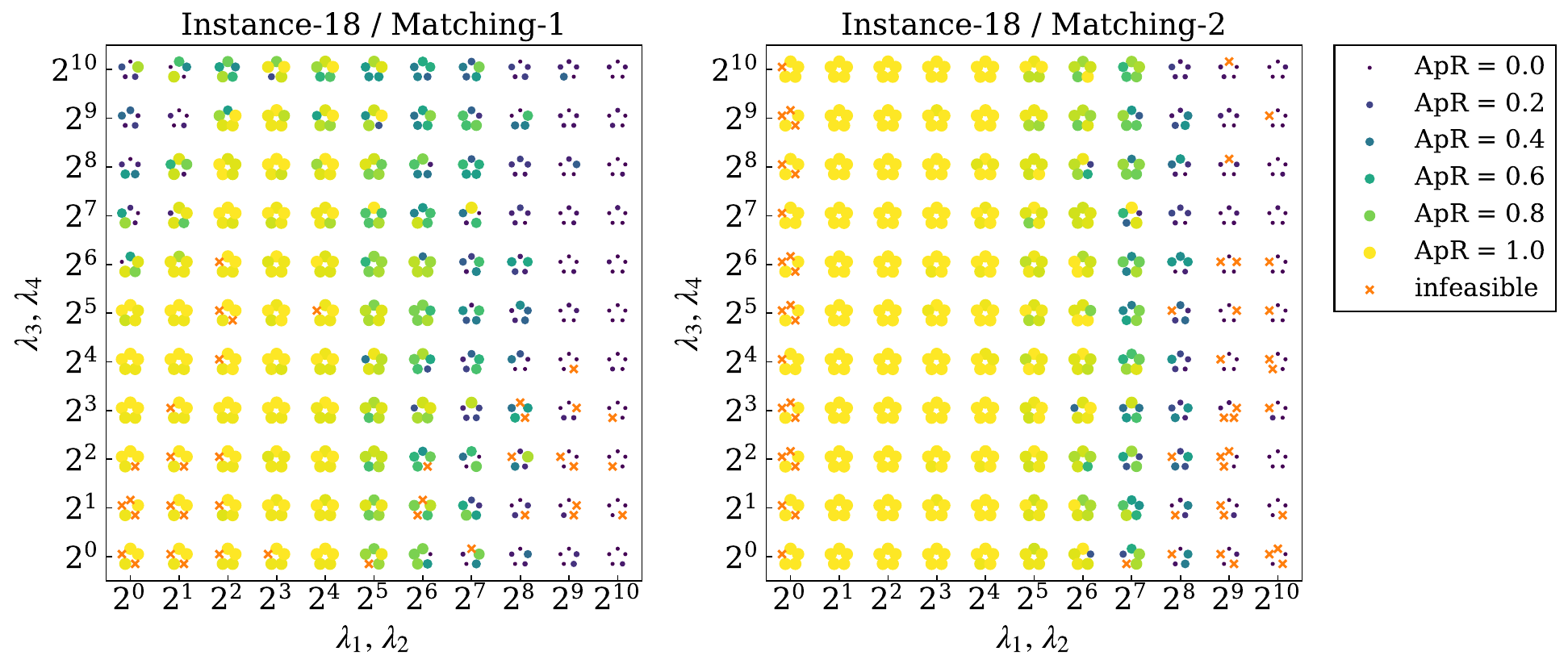}
    \hfil
    \includegraphics[width=\figscale\textwidth]{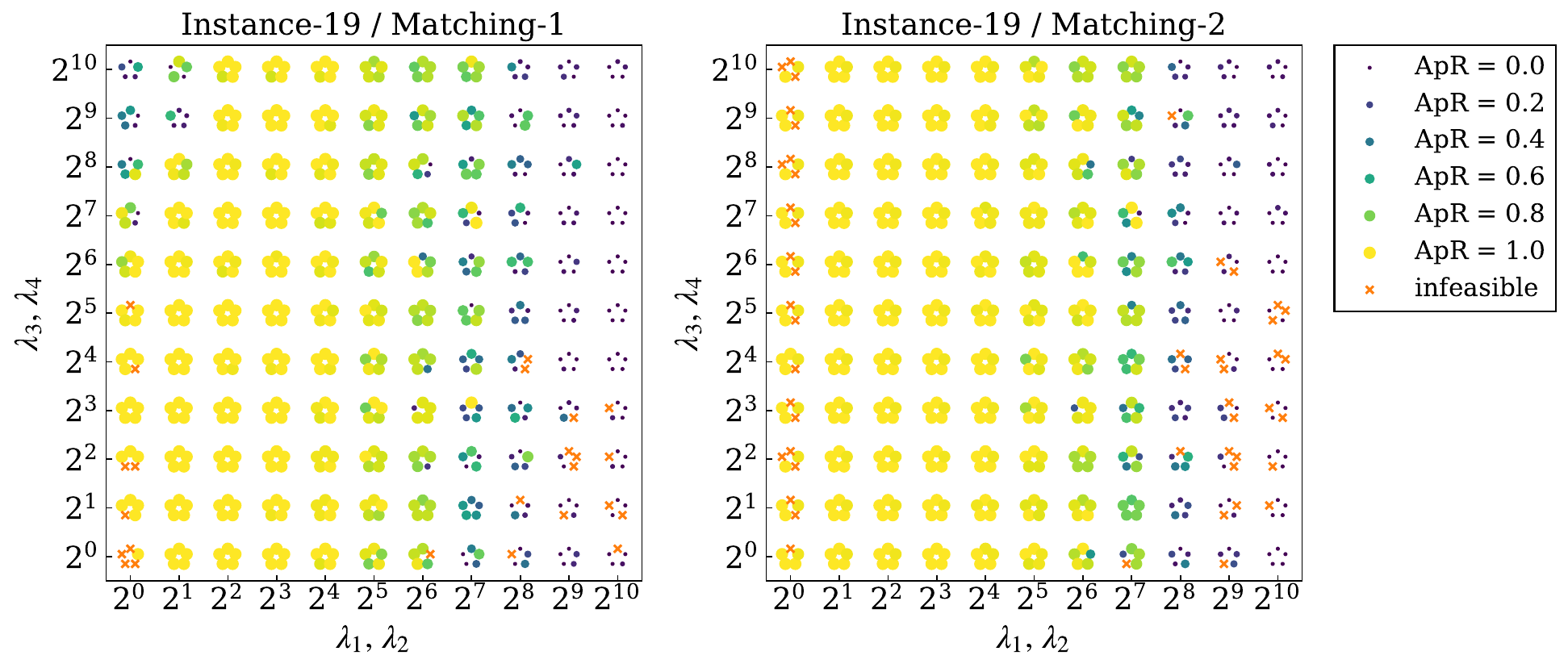}
    \hfil
    \includegraphics[width=\figscale\textwidth]{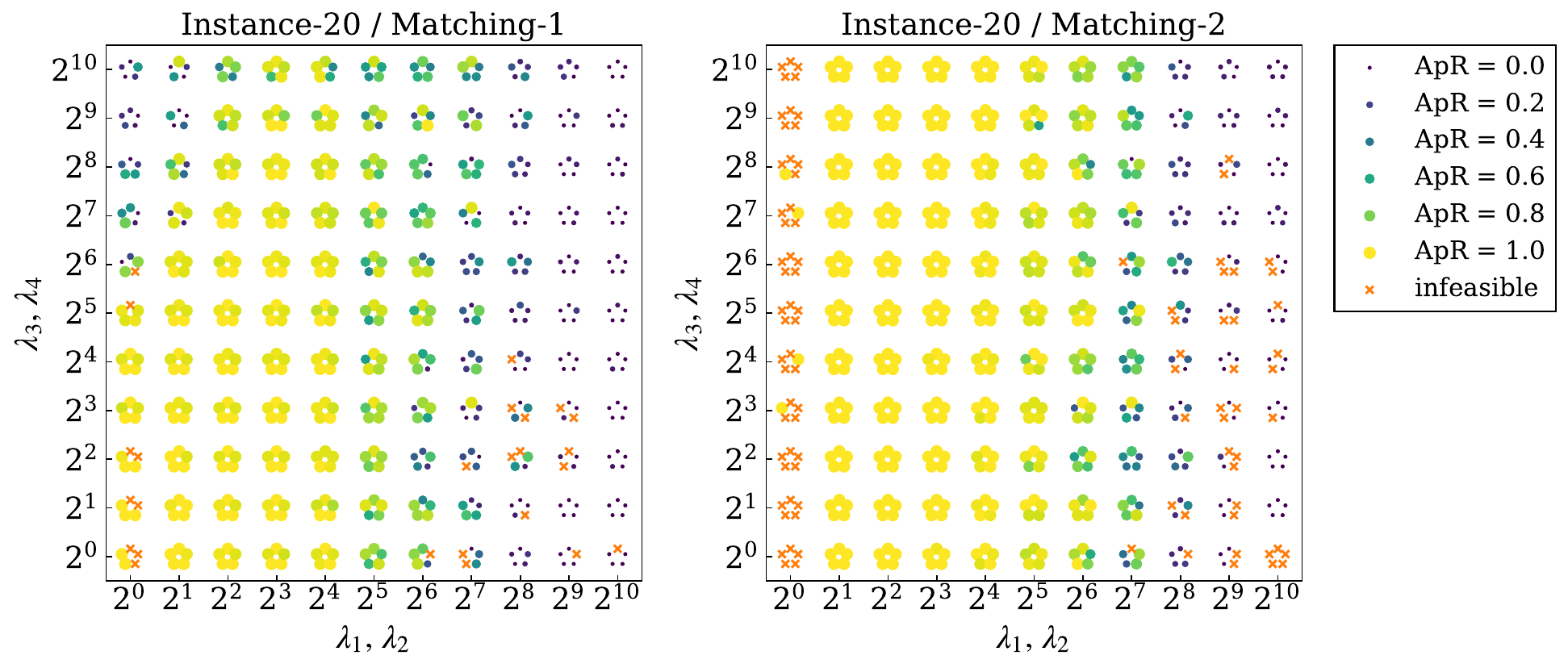}
    \hfil
    \includegraphics[width=\figscale\textwidth]{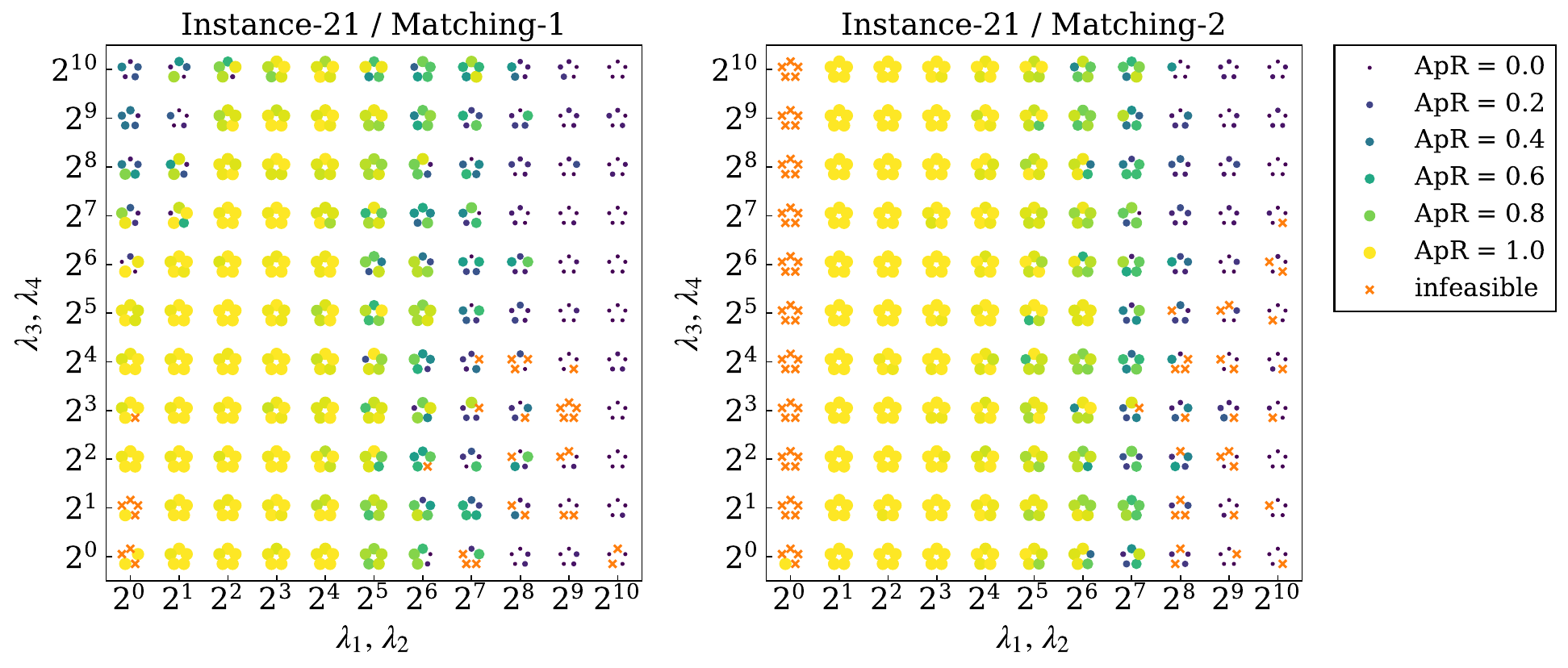}
    \hfil
    \includegraphics[width=\figscale\textwidth]{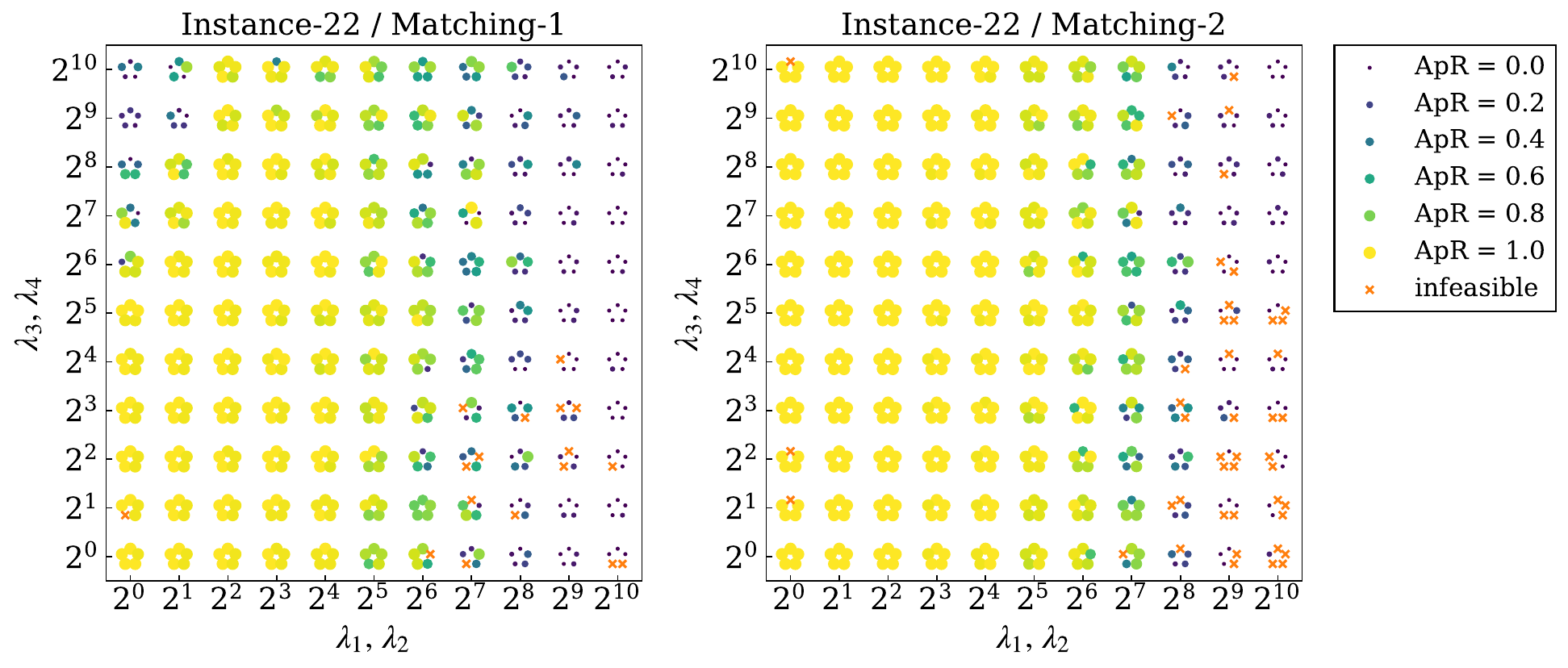}
    \hfil
    \includegraphics[width=\figscale\textwidth]{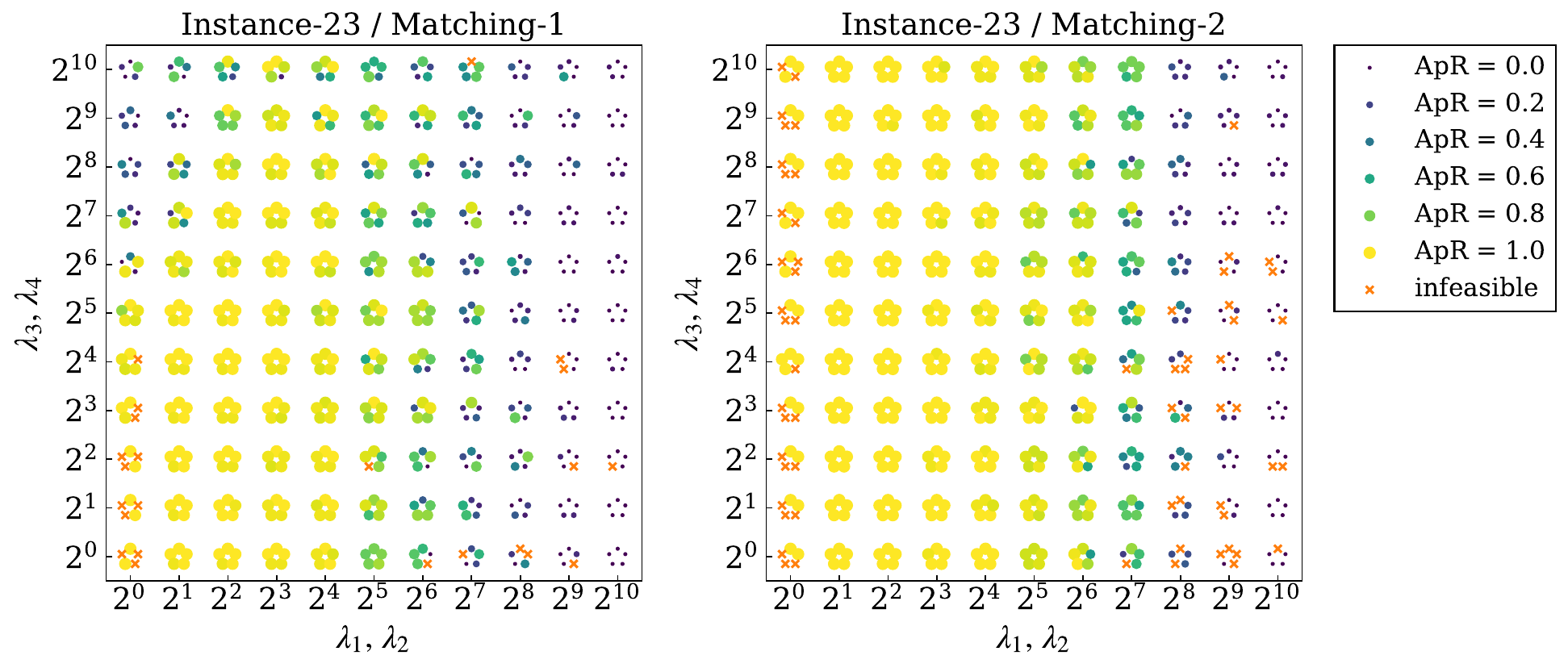}
    \hfil
    \includegraphics[width=\figscale\textwidth]{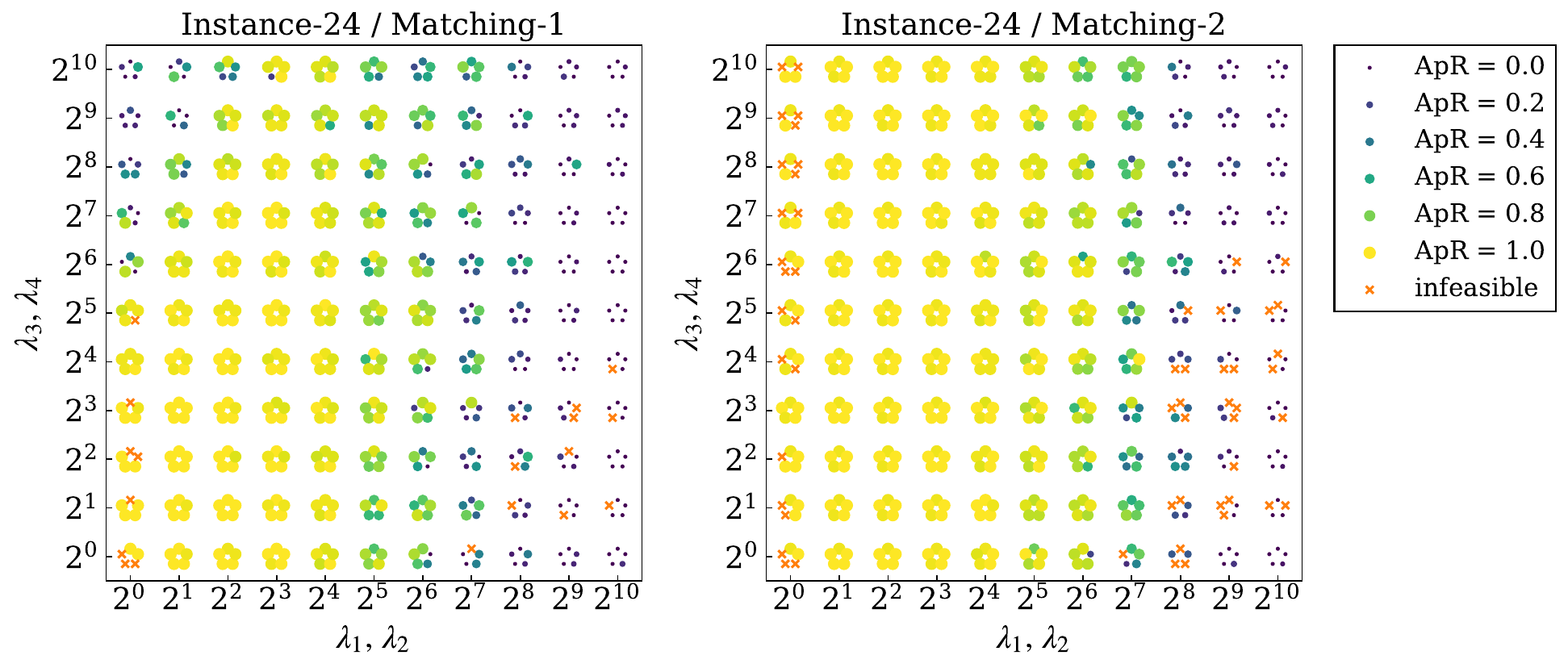}
    \hfil
    \includegraphics[width=\figscale\textwidth]{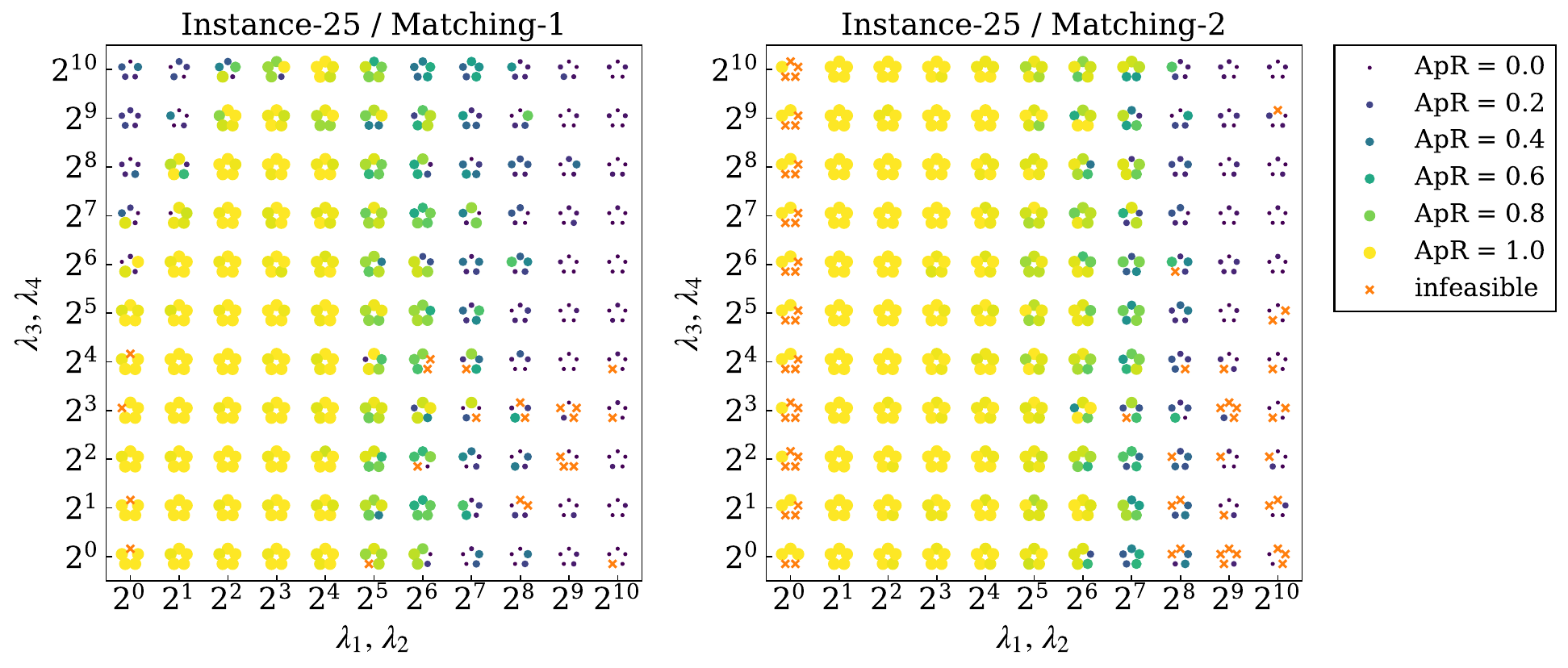}
    \hfil
    \includegraphics[width=\figscale\textwidth]{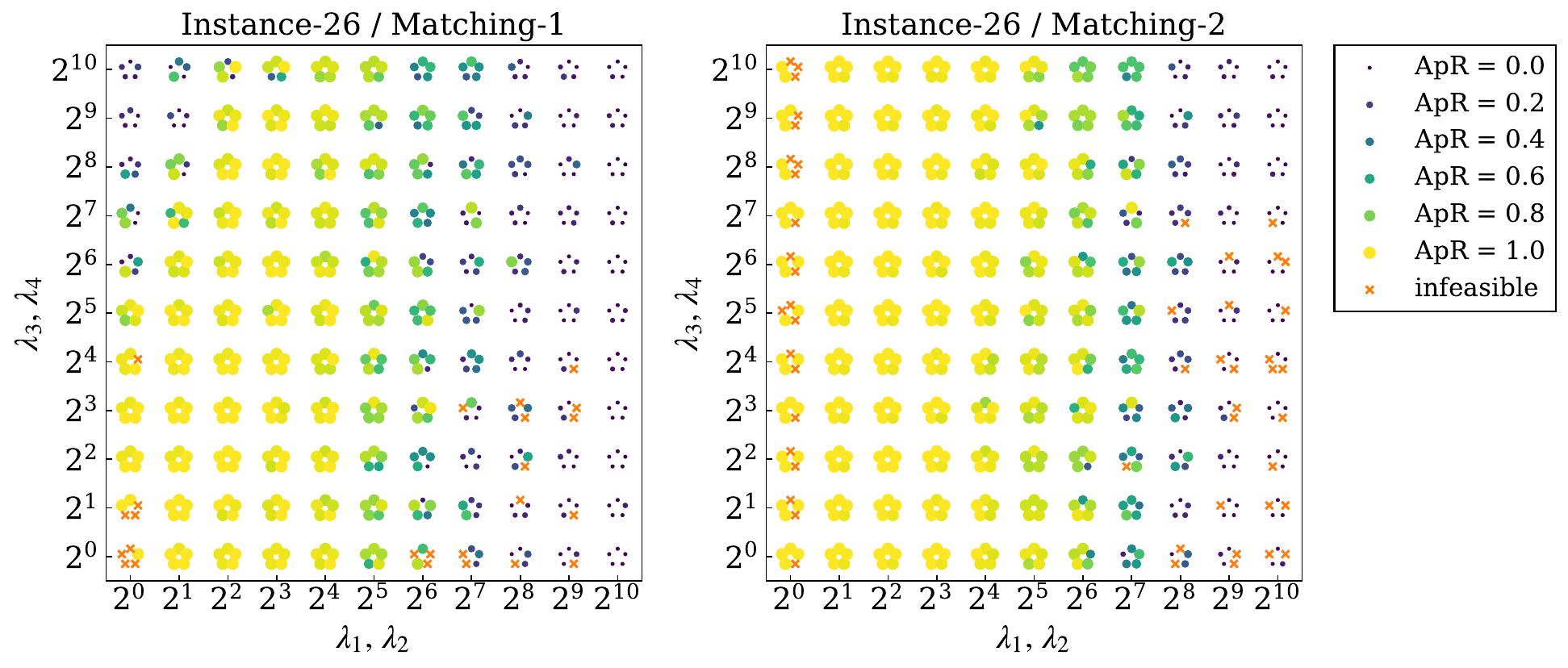}
    \hfil
    \includegraphics[width=\figscale\textwidth]{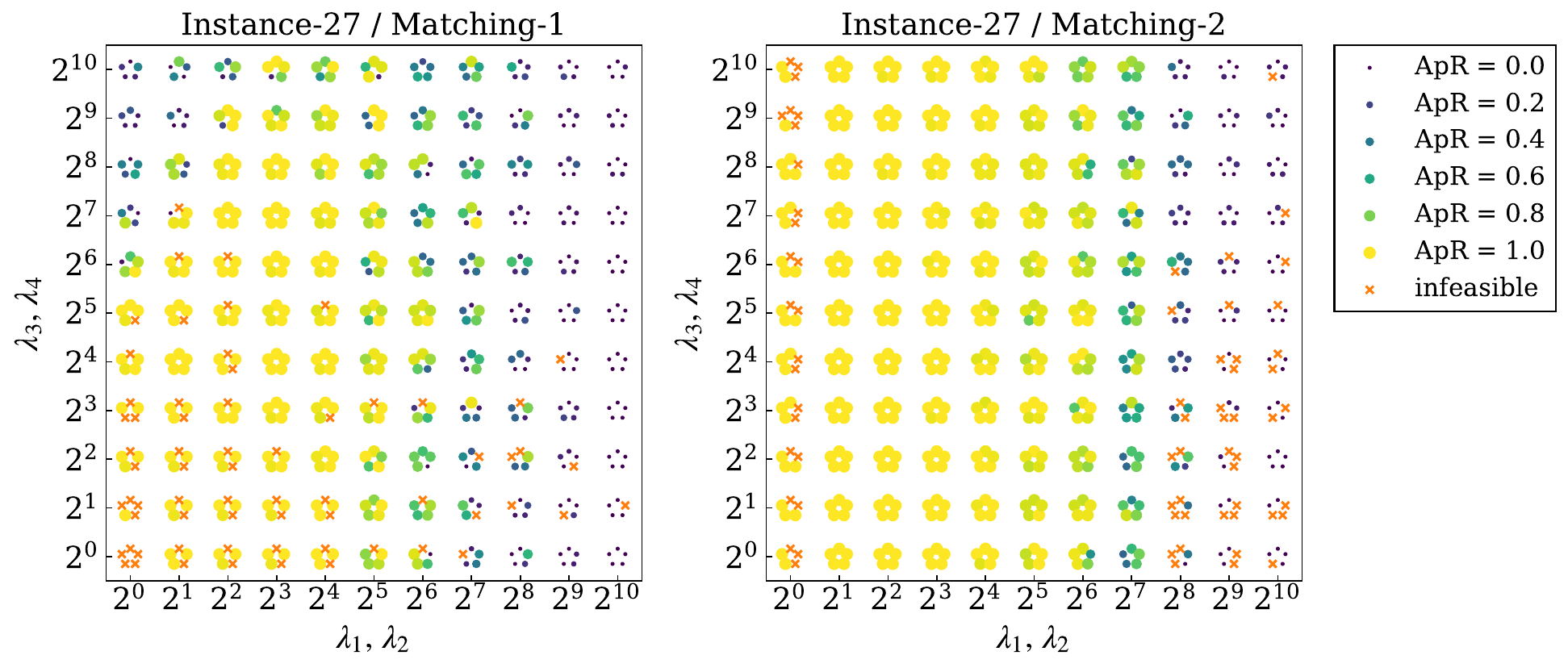}
    \caption{ApR of the DBM problems on th grid $\Lambda_{S}$ using CPRA-PI-GNN solver. Each point on the coordinate plane represents the results from five different random seed, with the colors indicating the ApR. The constraints violation are marked with a cross symbol.}
    \label{fig:results-all-matching}
\end{figure*}

\end{document}